\pdfoutput = 1
\documentclass[11pt]{article}
\usepackage{fullpage}
\usepackage[english]{babel}
\usepackage[utf8x]{inputenc}
\usepackage[T1]{fontenc}
\usepackage{amsmath}
\usepackage{amssymb}
\usepackage{amsthm}
\usepackage{bbm}
\usepackage{bbold}
\usepackage{parskip}
\usepackage{thm-restate}
\usepackage{booktabs}
\usepackage{array}

\usepackage{algorithm}
\usepackage{algpseudocode}

\usepackage{hyperref}
\hypersetup{
	colorlinks=true,
	linkcolor=blue!70!black,
	citecolor=blue!70!black,
	urlcolor=blue!70!black
}
\usepackage{cleveref}
\usepackage{graphicx}
\newtheorem{theorem}{Theorem}
\newtheorem{lemma}{Lemma}

\newtheorem{definition}{Definition}
\newtheorem{corollary}{Corollary}
\newtheorem{proposition}{Proposition}

\newcommand{\defeq}{:=}
\newcommand{\norm}[1]{\left\lVert#1\right\rVert}
\newcommand{\norms}[1]{\lVert#1\rVert}

\newcommand{\inprod}[2]{\left\langle#1, #2\right\rangle}

\newcommand{\eps}{\varepsilon}

\newcommand{\argmax}{\textup{argmax}}
\newcommand{\argmin}{\textup{argmin}} 
\newcommand{\R}{\mathbb{R}}

\newcommand{\N}{\mathbb{N}}
\newcommand{\Z}{\mathbb{Z}}

\newcommand{\half}{\frac{1}{2}}

\newcommand{\1}{\mathbbm{1}}
\newcommand{\E}{\mathbb{E}}

\newcommand{\Nor}{\mathcal{N}}

\newcommand{\xset}{\mathcal{X}}
\newcommand{\yset}{\mathcal{Y}}

\newcommand{\ma}{\mathbf{A}}

\newcommand{\id}{\mathbf{I}}

\usepackage{xcolor}
\definecolor{burntorange}{rgb}{0.8, 0.33, 0.0}
\newcommand{\kjtian}[1]{\textcolor{burntorange}{\textbf{kjtian:} #1}}

\newcommand{\tO}{\widetilde{O}}
\newcommand{\nnz}{\textup{nnz}}

\newcommand{\Par}[1]{\left(#1\right)}
\newcommand{\Brack}[1]{\left[#1\right]}
\newcommand{\Brace}[1]{\left\{#1\right\}}
\newcommand{\Abs}[1]{\left|#1\right|}

\newcommand{\alg}{\mathcal{A}}

\newcommand{\mm}{\mathbf{M}}

\newcommand{\mx}{\mathbf{X}}

\newcommand{\0}{\mathbb{0}}

\newcommand{\event}{\mathcal{E}}

\newcommand{\hcD}{\widehat{\cD}}
\newcommand{\ext}{\textup{ext}}
\newcommand{\Round}{\mathsf{Round}}
\newcommand{\tma}{\widetilde{\ma}}
\newcommand{\tb}{\tilde{b}}
\newcommand{\Ctct}{C_{\textup{tct}}}

\newcommand{\tx}{\tilde{x}}
\newcommand{\Ccoin}{C_{\textup{coin}}}
\newcommand{\Bern}{\textup{Bern}}
\newcommand{\dtv}{\mathrm{d_{TV}}}

\newcommand{\ddd}{\mathsf{d}}
\newcommand{\mat}{\textup{mat}}

\newcommand{\dkl}{d_{\textup{KL}}}
\newcommand{\tOmega}{\widetilde{\Omega}}

\newcommand{\ldce}{{\underline{\mathsf{dCE}}}}
\newcommand{\smce}{{\mathsf{smCE}}}
\newcommand{\sece}{{\mathsf{cECE}}}
\newcommand{\rint}{{\mathsf{RintCE}}}
\newcommand{\sint}{{\mathsf{SintCE}}}
\newcommand{\cece}{{\mathsf{ConvECE}}}
\newcommand{\abs}[1]{\left|#1\right|}
\newcommand{\ece}{\mathsf{ECE}}

\newcommand{\mb}{\mathbf{B}}
\newcommand{\mmp}{\mathbf{P}}
\newcommand{\mmu}{\mathbf{U}}
\newcommand{\cD}{\mathcal{D}}

\newcommand{\pr}{{\pi_R}}
\newcommand{\ind}{{\mathbb I}}

\newcommand{\simu}{\sim_{\textup{unif.}}}

\newcommand{\Dbase}{\cD_{\textup{base}}}
\newcommand{\Diso}{\cD_{\textup{iso}}}
\newcommand{\Dtemp}{\cD_{\textup{temp}}}
\newcommand{\fiso}{f_{\textup{iso}}}
\newcommand{\ftemp}{f_{\textup{temp}}}

\newcommand{\one}{\mathbb{1}}

\newcommand{\seg}{{\mathsf{seg}}}

\newcommand{\tE}{\widetilde{E}}
\newcommand{\tmb}{\widetilde{\mb}}
\newcommand{\tG}{\widetilde{G}}
\newcommand{\clip}{\textup{clip}}
\newcommand{\SegTree}{\mathsf{SegmentTree}}
\newcommand{\Add}{\mathsf{Add}}
\newcommand{\Set}{\mathsf{Set}}
\newcommand{\Query}{\mathsf{Query}}
\newcommand{\add}{\mathsf{add}}
\newcommand{\sset}{\mathsf{set}}
\newcommand{\Access}{\mathsf{Access}}
\newcommand{\Apply}{\mathsf{Apply}}

\title{Testing Calibration in Nearly-Linear Time}
\author{
Lunjia Hu\thanks{Stanford University. Supported by Moses Charikar’s and Omer Reingold’s Simons Investigators awards and the Simons Foundation Collaboration on the Theory of Algorithmic Fairness. \texttt{lunjia@stanford.edu}} \and 
Arun Jambulapati\thanks{University of Michigan, \texttt{jmblpati@gmail.com}} \and
Kevin Tian\thanks{University of Texas at Austin, \texttt{kjtian@cs.utexas.edu}} \and 
Chutong Yang\thanks{University of Texas at Austin, \texttt{cyang98@utexas.edu}}
}
\date{}
\allowdisplaybreaks
\begin{document}

\maketitle
\begin{abstract}
In the recent literature on machine learning and decision making, \emph{calibration} has emerged as a desirable and widely-studied statistical property of the outputs of binary prediction models. However, the algorithmic aspects of measuring model calibration have remained relatively less well-explored. Motivated by \cite{blasiok2023unifying}, which proposed a rigorous framework for measuring distances to calibration, we initiate the algorithmic study of calibration through the lens of property testing. We define the problem of \emph{calibration testing} from samples where given $n$ draws from a distribution $\cD$ on $(\text{predictions}, \text{binary outcomes})$, our goal is to distinguish between the case where $\cD$ is perfectly calibrated, and the case where $\cD$ is $\eps$-far from calibration. 

We make the simple observation that the empirical smooth calibration linear program can be reformulated as an instance of minimum-cost flow on a highly-structured graph, and design an exact dynamic programming-based solver for it which runs in time $O(n\log^2(n))$, and solves the calibration testing problem information-theoretically optimally in the same time. This improves upon state-of-the-art black-box linear program solvers requiring $\Omega(n^\omega)$ time, where $\omega > 2$ is the exponent of matrix multiplication. We also develop algorithms for tolerant variants of our testing problem improving upon black-box linear program solvers, and give sample complexity lower bounds for alternative calibration measures to the one considered in this work. Finally, we present experiments showing the testing problem we define faithfully captures standard notions of calibration, and that our algorithms scale efficiently to accommodate large sample sizes.
\end{abstract}
\newpage
\tableofcontents
\newpage

\section{Introduction}\label{sec:intro}

Probabilistic predictions are at the heart of modern data science. In domains as wide-ranging as forecasting (e.g.\ predicting the chance of rain from meteorological data \cite{MurphyW84, Murphy98}), medicine (e.g.\ assessing the likelihood of disease \cite{Doi07}), computer vision (e.g.\ assigning confidence values for categorizing images \cite{VoulodimosDDP17}), and more (e.g.\ speech recognition \cite{AmodeiABCCCCCCD16} and recommender systems \cite{RicciRSK11}), prediction models have by now become essential components of the decision-making pipeline. Particularly in the context of critical, high-risk use cases, the interpretability of prediction models is therefore paramount in downstream applications. That is, how do we assign meaning to the predictions our model gives us, especially when the model is uncertain?

We focus on perhaps the most ubiquitous form of prediction modeling: binary predictions, represented as tuples $(v, y)$ in $[0, 1] \times \{0, 1\}$ (where the $v$ coordinate is our prediction of the likelihood of an event, and the $y$ coordinate is the observed outcome). We model prediction-outcome pairs in the binary prediction setting by a joint distribution $\cD$ over $[0, 1] \times \{0, 1\}$, fixed in the following discussion. In this context, \emph{calibration} of a predictor has emerged as a basic desideratum. A prediction-outcome distribution $\cD$ is said to be calibrated if
\begin{equation}\label{eq:calibrated}\E_{(v, y) \sim \cD}[y \mid v = t] = t \text{ for all } t \in [0, 1].\end{equation}
That is, calibration asks that the outcome is $1$ exactly $60\%$ of the time, when the model returns a prediction $v = 0.6$. While calibration (or approximate variants thereof) is a relatively weak requirement on a meaningful predictor, as it can be achieved by simple models,\footnote{The predictor which ignores features and always return the population mean is calibrated, for example.} it can still be significantly violated in practice. For example, interest in calibration in the machine learning community was spurred by \cite{guo2017calibration}, which observed that many modern deep learning models are far from calibrated. Moreover, variants of calibration have been shown to have strong postprocessing properties for fairness constraints and loss minimization \cite{HebertJohnsonK18, DworkKRRY21, GopalanKRSW22}, which has garnered renewed interest in calibration by the theoretical computer science and statistics communities.

The question of measuring the calibration of a distribution is subtle; even a calibrated distribution incurs measurement error due to sampling. For example, consider the \emph{expected calibration error}, used in e.g.\ \cite{NaeiniCH15, guo2017calibration, MindererDRHZHTL21, RahamanT21} as a ground-truth measure of calibration:
\[\ece(\cD) \defeq \E_{(v, y) \sim \cD}\Brack{\Abs{\E_{(v', y) \sim \cD}\Brack{y \mid v' = v} - v}}.\]
Unfortunately, the empirical $\ece$ is typically meaningless; if the marginal density of $v$ is continuous, we will almost surely only observe a single sample with each $v$ value. Further, \cite{KakadeF08} observed that $\ece$ is discontinuous in $v$. In practice, binned variants of $\ece$ are often used as a proxy, where a range of $v$ is lumped together in the conditioning event. However, hyperparameter choices (e.g.\ the number of bins) can significantly affect the quality of binned $\ece$ variants as a distance measure \cite{KumarLM19, NixonDZJT19, MindererDRHZHTL21}.\footnote{For example, \cite{NixonDZJT19} observed that, in their words, ``dramatic differences in bin sensitivity'' can occur ``depending on properties of the (distribution) at hand,'' a sentiment echoed by Section 5 of \cite{MindererDRHZHTL21}.} Moreover, as we explore in this paper, binned calibration measures inherently suffer from larger sample complexity-to-accuracy tradeoffs, and are less faithful to ground truth calibration notions in experiments than the calibration measures we consider. 

Recently, \cite{blasiok2023unifying} undertook a systematic study of various measures of distance to calibration proposed in the literature. They proposed information-theoretic tractability in the \emph{prediction-only access (PA) model}, where the calibration measure definition can only depend on the joint prediction-outcome distribution (rather than the features of training examples),\footnote{This access model is practically desirable because it abstracts away the feature space, which can lead to significant memory savings when our goal is only to test the calibration of model predictions. Moreover, this matches conventions in the machine learning literature: for example, loss functions are typically defined in the PA model.} as a desirable criterion for calibration measures. Correspondingly, \cite{blasiok2023unifying} introduced Definition~\ref{def:ldtc} as a ground-truth notion for measuring calibration in the PA model, which we also adopt in this work.\footnote{We note that \cite{blasiok2023unifying} introduced an \emph{upper distance to calibration}, also defined in the PA model, which they showed is quadratically-related to the $\ldce$ in Definition~\ref{def:ldtc}. However, the upper distance does not satisfy basic properties such as continuity, making it less amenable to estimation and algorithm design.}

\begin{definition}[Lower distance to calibration]\label{def:ldtc}
Let $\cD$ be a distribution over $[0, 1] \times \{0, 1\}$. The \emph{lower distance to calibration (LDTC)} of $\cD$, denoted $\ldce(\cD)$, is defined by
    \[\ldce(\cD) \defeq \inf_{\Pi\in\ext(\cD)}\E_{(u,v,y)\sim \Pi} \abs{u-v},\]
where $ \ext(\cD) $ is all joint distributions $ \Pi $ over $ (u, v, y) \in [0,1] \times [0,1] \times \{0, 1\} $ satisfying the following.
\begin{itemize}
    \item The marginal distribution of $ (v, y) $ is $ \cD $.
    \item The marginal distribution $ (u, y) $ is perfectly calibrated, i.e.\ $ \mathbb{E}_\Pi [y|u] = u $.
\end{itemize}
\end{definition}

Definition~\ref{def:ldtc} has various beneficial aspects: it is convex in $v$, computable in the PA model, and (as shown by \cite{blasiok2023unifying}) polynomially-related to various other calibration measures, including some which require feature access, e.g.\ the \emph{distance to calibration} (DTC, Eq.\ (1), \cite{blasiok2023unifying}). Roughly, the DTC of a distribution is the tightest lower bound on the $\ell_1$ distance between $v$ and any calibrated function of the features, \emph{after taking features into account}. The LDTC is the analog of this feature-aware measure of calibration when limited to the PA model, so it does not depend on features. We focus on Definition~\ref{def:ldtc} as our ground-truth measure in the remainder of the paper.

\subsection{Our results}\label{ssec:results}

We initiate the algorithmic study of the \emph{calibration testing} problem, defined as follows.

\begin{definition}[Calibration testing]\label{def:testing_1}
Let $\eps \in [0, 1]$. We say algorithm $\alg$ solves the $\eps$-\emph{calibration testing} problem with $n$ samples, if given $n$ i.i.d.\ draws from a distribution $\cD$ over $[0,1]\times\{0,1\}$, $\alg$ returns either ``yes'' or ``no'' and satisfies the following with probability $\ge \frac 2 3$.\footnote{As is standard in property testing problems, the success probability of either a calibration tester or a tolerant calibration tester can be boosted to $1 - \delta$ for any $\delta \in (0, 1)$ at a $k = O(\log(\frac 1 \delta))$ overhead in the sample complexity. This is because we can independently call $k$ copies of the tester and output the majority vote, which succeeds with probability $\ge 1 - \delta$ by Chernoff bounds, so we focus on $\delta = \frac 1 3$.}
    \begin{itemize}
        \item $\alg$ returns ``no'' if $\ldce(\cD) \geq \eps$.
        \item $\alg$ returns ``yes'' if $\ldce(\cD) = 0$.
    \end{itemize}
In this case, we also call $\alg$ an $\eps$-calibration tester.
\end{definition}

To our knowledge, we are the first to formalize the calibration testing problem in Definition~\ref{def:testing_1}, which is natural from the perspective of \emph{property testing}, an influential paradigm in statistical learning \cite{Ron08, Ron09, Goldreich17}. In particular, there is an $\eps_n = \Theta(n^{-1/2})$ so that it is information-theoretically impossible to solve the $\eps_n$-calibration testing problem from $n$ samples (see Lemma~\ref{lem:tct_lb}), so a variant of Definition~\ref{def:testing_1} with an exact distinguishing threshold between ``calibrated/uncalibrated'' is not tractable. Hence, Definition~\ref{def:testing_1} only requires distinguishing distributions $\cD$ which are ``clearly uncalibrated'' (parameterized by a threshold $\eps$) from those which are perfectly calibrated. 

We note that a variant of Definition~\ref{def:testing_1} where $\ldce$ is replaced by variants of the $\ece$ was recently proposed by \cite{LeeHHD23}. However, due to the aforementioned discontinuity and binning choice issues which plague the $\ece$, \cite{LeeHHD23} posed as an explicit open question whether an alternative calibration metric makes for a more appropriate calibration testing problem, motivating our Definition~\ref{def:testing_1}. Indeed, Proposition 9 of \cite{LeeHHD23} shows that without smoothness assumptions on the data distribution, it is impossible to solve the $\ece$ calibration testing problem from finite samples.\footnote{The work \cite{LeeHHD23} considered $k$-class prediction tasks, extending our focus on the setting of binary classification ($k = 2$), which we believe is an exciting future direction. However, their Proposition 9 holds even when $k = 2$.}

Our first algorithmic contribution is a nearly-linear time algorithm for calibration testing.

\begin{theorem}[Informal, see Theorem~\ref{thm:tct_smooth}, Corollary~\ref{cor:ct}]\label{thm:intro_cal}
Let $n \in \N$, and let $\eps = \Omega(\eps_n)$, where $\eps_n = \Theta(n^{-1/2})$ is minimal such that it is information-theoretically possible to solve the $\eps_n$-calibration testing problem (Definition~\ref{def:testing_1}) with $n$ samples. There is an algorithm which solves the $\eps$-calibration testing problem with $n$ samples, running in time $O(n\log^2(n))$.
\end{theorem}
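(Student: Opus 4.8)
The plan is to reduce calibration testing to \emph{exactly} computing an empirical smooth calibration error, and then to compute that quantity in nearly-linear time by reformulating it as a minimum-cost flow and solving it by dynamic programming over a lazily-updated balanced search tree.

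\emph{Step 1 (reduction to an empirical smooth calibration LP, and concentration).} Recall the smooth calibration error $\smce(\cD)\defeq\sup_f\E_{(v,y)\sim\cD}[f(v)(y-v)]$, the supremum over $1$-Lipschitz $f\colon[0,1]\to[-1,1]$. By \cite{blasiok2023unifying}, $\smce$ and $\ldce$ agree up to a universal constant factor $C$; in particular $\ldce(\cD)=0\iff\smce(\cD)=0$ and $\ldce(\cD)\ge\eps\implies\smce(\cD)\ge\eps/C$. So it suffices to estimate the empirical quantity $\widehat{\smce}\defeq\max_f\frac{1}{n}\sum_{i\in[n]}f(v_i)(y_i-v_i)$ over the same class. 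The class of $1$-Lipschitz $[-1,1]$-valued functions on $[0,1]$ has Rademacher complexity $O(n^{-1/2})$, and each sample perturbs $\widehat{\smce}$ by $O(1/n)$; hence by symmetrization and a bounded-differences inequality, $|\widehat{\smce}-\smce(\cD)|\le C'n^{-1/2}$ with probability $\ge\frac{2}{3}$. Therefore, with $\eps_n\defeq\Theta(n^{-1/2})$ and $\eps\ge 2CC'\eps_n$, the tester that outputs ``yes'' when $\widehat{\smce}<\eps/(2C)$ and ``no'' otherwise is correct, matching the lower bound of Lemma~\ref{lem:tct_lb}. It remains to compute $\widehat{\smce}$ in time $O(n\log^2 n)$.

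\emph{Step 2 (min-cost flow reformulation).} Sort the samples so $v_{(1)}\le\cdots\le v_{(n)}$ in $O(n\log n)$ time; since any $1$-Lipschitz $[-1,1]$-valued interpolant of prescribed values at these points extends to all of $[0,1]$, $\widehat{\smce}$ equals the optimal value of the linear program: maximize $\frac{1}{n}\sum_i f_i(y_{(i)}-v_{(i)})$ over $f\in\R^n$ subject to $f_i\in[-1,1]$ and $|f_i-f_{i+1}|\le v_{(i+1)}-v_{(i)}$. This is an interval/path-structured LP, and taking its dual yields a minimum-cost flow on the path $1-2-\cdots-n$ with one extra ``reservoir'' vertex adjacent to every node, where the path edge $\{i,i+1\}$ has per-unit cost $v_{(i+1)}-v_{(i)}$, every reservoir edge has per-unit cost $1$, and node $i$ has supply $\frac{1}{n}(y_{(i)}-v_{(i)})$ (the reservoir absorbs the imbalance $\frac{1}{n}\sum_i(y_{(i)}-v_{(i)})$).

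\emph{Step 3 (exact dynamic-programming solver).} I would solve this flow problem with a left-to-right DP maintaining $\Phi_i(x)$, the minimum cost of routing the supplies of nodes $1,\dots,i$ given that $x$ units cross the path edge $\{i,i+1\}$. Each $\Phi_i$ is convex and piecewise-linear in $x$, with $\Phi_1(x)=|x-\frac{1}{n}(y_{(1)}-v_{(1)})|$, and the transition $\Phi_i\mapsto\Phi_{i+1}$ decomposes as: (i) add the convex PL function $(v_{(i+1)}-v_{(i)})\,|x|$, which inserts one kink at $x=0$ and shifts the two one-sided slope ranges by $\mp(v_{(i+1)}-v_{(i)})$; (ii) take the infimal convolution with $|\cdot|$, which exactly clips all slopes into $[-1,1]$, i.e.\ deletes the breakpoints of slope $<-1$ (a prefix) and of slope $>1$ (a suffix) and replaces them by rays of slope $\pm1$; (iii) translate horizontally by the new supply. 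Represent $\Phi_i$ by the sorted list of its breakpoints and slopes in a balanced search tree (or segment tree) with lazy propagation supporting range-add on slopes, pops from either end, point insertion, and a global horizontal offset; since only $O(1)$ breakpoints are created per step, the amortized number of pops per step is $O(1)$, so each transition runs in $O(\log^2 n)$ amortized time, for a total of $O(n\log^2 n)$. Finally $\widehat{\smce}=\Phi_n(0)$, read off in $O(\log n)$ time; combined with Step 1 this gives the claimed $O(n\log^2 n)$-time $\eps$-calibration tester.

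\emph{Main obstacle.} The crux is Step 3: formally verifying that the value functions stay convex piecewise-linear under the add/infimal-convolution/shift operations with only $O(1)$ amortized structural changes per step, and implementing the data structure so that range-slope-add, end-pops, point insertion, and the global shift all run in polylogarithmic time simultaneously — the interaction of lazy range updates with the moving breakpoint coordinates is the delicate part. By contrast, the reduction to $\smce$, the Rademacher concentration bound, and the LP-duality identification of the flow problem are routine.
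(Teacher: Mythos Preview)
Your proposal is correct and follows essentially the same route as the paper: reduce to the empirical $\smce$ via the constant-factor equivalence with $\ldce$ and concentration, dualize the path-structured Lipschitz LP to a min-cost flow on a path plus a star (your ``reservoir''), and solve by a left-to-right DP whose value function is convex piecewise-linear and evolves by exactly the three primitives you name (add $c|x|$, infimal convolution with $|\cdot|$, horizontal shift). The only noteworthy difference is in the data structure: the paper observes that all breakpoint positions can be \emph{precomputed and sorted once} (they are running sums of the demands), so it stores the slopes in a fixed-index segment tree supporting range-$\mathsf{Add}$ and range-$\mathsf{Set}$ and locates the clip thresholds by binary search, whereas you maintain a dynamic balanced BST and rely on an amortized-pop argument; both are valid and yield the same $O(n\log^2 n)$ bound, but the paper's pre-sorting trick sidesteps precisely the ``interaction of lazy range updates with moving breakpoint coordinates'' that you flag as the delicate part.
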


We prove Theorem~\ref{thm:intro_cal} by designing a new algorithm for computing $\smce(\hcD_n)$, the \emph{smooth calibration error} (Definition~\ref{def:smooth}), an alternative calibration measure, of an empirical distribution $\hcD_n$.

\begin{definition}[Smooth calibration error]\label{def:smooth}
    Let $W$ be the set of Lipschitz functions $w : [0,1] \to [-1,1]$. The \emph{smooth calibration error} of distribution $\cD$ over $[0,1]\times\{0,1\}$, denoted $\smce(\cD)$, is defined by
    \begin{align*}
    \smce(\cD) = \sup_{w\in W}\Abs{\E_{(v,y)\sim \cD}[(y-v)w(v)]}.
    \end{align*}
\end{definition}

It was shown in \cite{blasiok2023unifying} that $\smce(\cD)$ is a constant-factor approximation to $\ldce(\cD)$ for all $\cD$ on $[0, 1] \times \{0, 1\}$ (see Lemma~\ref{lem:LDTC_smCE_relation}). Additionally, the empirical $\smce$ admits a representation as a linear program with an $O(n) \times O(n)$-sized constraint matrix encoding Lipschitz constraints.\footnote{Formally, the number of constraints in the $\smce$ linear program is $O(n^2)$, but we show that in the hard-constrained setting, requiring that ``adjacent'' constraints are met suffices (see Lemma~\ref{lem:zeroth_suffice}).} Thus, \cite{blasiok2023unifying} proposed a simple procedure for estimating $\smce(\cD)$: draw $n$ samples from $\cD$, and solve the associated linear program on the empirical distribution. While there have been significant recent runtime advances in the linear programming literature \cite{LeeS14, CohenLS21, BrandLSS20, BrandLLSSSW21}, all state-of-the-art black-box linear programming algorithms solve linear systems involving the constraint matrix, which takes $\Omega(n^\omega)$ time, where $\omega > 2.371$ \cite{WilliamsXXZ23} is the current exponent of matrix multiplication. Even under the best-possible assumption that $\omega = 2$, the strategy of exactly solving a linear program represents an $\Omega(n^2)$ quadratic runtime barrier for calibration testing.

We bypass this barrier by noting that the $\smce$ linear program is highly-structured, and can be reformulated as minimum-cost flow on a planar graph. We believe this observation is already independently interesting, as it opens the door to using powerful software packages designed for efficiently solving flow problems to measure calibration in practice. Moreover, using recent theoretical breakthroughs in graph algorithms \cite{DongGGLPSY22, ChenLKLPGS22} as a black box, this observation 
readily implies an $O(n \cdot \textup{polylog}(n))$-time algorithm for solving the smooth calibration linear program.

However, these aforementioned algorithms are quite complicated, and implementations in practice are not available, leaving their relevance to empirical calibration testing unclear at the moment. Motivated by this, in Section~\ref{sec:smooth} we develop a custom solver for the minimum-cost flow reformulation of empirical smooth calibration, based on dynamic programming. Our theoretical runtime improvement upon \cite{DongGGLPSY22, ChenLKLPGS22} is by at least a large polylogarithmic factor, and moreover our algorithm is simple enough to implement in practice, which we evaluate in Section~\ref{sec:experiments}.



We next define a \emph{tolerant} variant of Definition~\ref{def:testing_1} (see Definition~\ref{def:testing_2}), where we allow for error thresholds in both the ``yes'' and ``no'' cases; ``yes'' is the required answer when $\ldce(\cD) \le \eps_2$, and ``no'' is required when $\ldce(\cD) \ge \eps_1$. Our algorithm in Theorem~\ref{thm:intro_cal} continues to serve as an efficient tolerant calibration tester when $\eps_1 \ge 4\eps_2$, with formal guarantees stated in Theorem~\ref{thm:tct_smooth}. This constant-factor loss comes from a similar loss in the relationship between $\smce$ and $\ldce$, see Lemma~\ref{lem:LDTC_smCE_relation}. We make the observation that a constant factor loss in the tolerant testing parameters is inherent following this strategy, via a lower bound in Lemma~\ref{lem:constant_g1}. Thus, even given infinite samples, computing the smooth calibration error cannot solve tolerant calibration testing all the way down to the information-theoretic threshold $\eps_1 \ge \eps_2$. To develop an improved tolerant calibration tester, we directly show how to approximate the LDTC of an empirical distribution, our second main algorithmic contribution.

\begin{theorem}[Informal, see Theorem~\ref{thm:tct_ldtc}, Corollary~\ref{cor:ct_ldtc}]\label{thm:intro_tct}
Let $n \in \N$, and let $\eps_1 - \eps_2 = \Omega(\eps_n)$, where $\eps_n = \Theta(n^{-1/2})$ is minimal such that it is information-theoretically possible to solve the $\eps_n$-calibration testing problem with $n$ samples. There is an algorithm which solves the $(\eps_1, \eps_2)$-tolerant calibration testing problem (Definition~\ref{def:testing_2}) with $n$ samples, running in time $O(\frac{n\log(n)}{(\eps_1 - \eps_2)^2}) = O(n^2\log(n))$.
\end{theorem}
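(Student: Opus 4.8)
The plan is to reduce the $(\eps_1,\eps_2)$-tolerant calibration testing problem to \emph{approximating} $\ldce(\hcD_n)$, the lower distance to calibration of the empirical distribution $\hcD_n$ on the $n$ samples, to additive accuracy $\Theta(\eps_1-\eps_2)$, and then threshold the estimate at $\tfrac12(\eps_1+\eps_2)$. For soundness we first need the additive concentration $\Abs{\ldce(\hcD_n)-\ldce(\cD)} = O(n^{-1/2})$ with probability $\ge\tfrac56$; this is the upper bound matching the $\eps_n=\Theta(n^{-1/2})$ threshold of Lemma~\ref{lem:tct_lb}, and it is precisely why we must estimate $\ldce$ directly rather than route through $\smce$, whose relationship to $\ldce$ in Lemma~\ref{lem:LDTC_smCE_relation} loses only a constant factor but is not additively faithful. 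Choosing the constant hidden in $\eps_1-\eps_2=\Omega(\eps_n)$ large enough, the total slack $O(\eps_1-\eps_2)+O(n^{-1/2})$ incurred by discretization plus sampling is below $\tfrac12(\eps_1-\eps_2)$, so such a tester is correct.

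Next I would realize the computation of $\ldce(\hcD_n)$ as a structured transportation problem. Discretize the auxiliary variable $u$ to a uniform grid $G$ of spacing $\delta'=c(\eps_1-\eps_2)$: snapping the $u$-marginal of a near-optimal coupling to $G$ changes the transport cost by at most $\delta'/2$ per unit mass, and the resulting small violation of the identity $\E[y\mid u]=u$ at each grid point can be repaired by rerouting an $O(\delta')$-fraction of mass to the trivially calibrated endpoints $u\in\{0,1\}$ at extra cost $O(\delta')$, so the gridded value approximates $\ldce(\hcD_n)$ to additive $O(\eps_1-\eps_2)$. On $G$, computing $\ldce(\hcD_n)$ is the linear program of choosing nonnegative masses $p_{ij}$ routing $\tfrac1n$ units from each sample $(v_i,y_i)$ to grid points $u_j$, minimizing $\sum_{i,j}\Abs{v_i-u_j}p_{ij}$ subject to per-sample conservation and the per-grid-point calibration constraints $\sum_{i:y_i=1}p_{ij}=u_j\sum_i p_{ij}$. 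The key structural observation is that if one fixes the total mass $M_j=\sum_i p_{ij}$ placed at each $u_j$, the residual problem splits into two \emph{one-dimensional} optimal transport problems on $[0,1]$ — moving the $y=1$ empirical mass onto demands $\{u_jM_j\}$, and the $y=0$ mass onto $\{(1-u_j)M_j\}$ — so the gridded $\ldce(\hcD_n)$ equals the minimum, over $M$ in the simplex on $G$ intersected with the marginal-balance hyperplane $\sum_j u_jM_j=\tfrac1n\Abs{\{i:y_i=1\}}$, of the sum of these two $1$D transport costs, which is convex in $M$ by joint convexity of optimal transport in its marginals.

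To solve this convex program I would run entropic mirror descent (multiplicative weights) over the simplex on $G$, with the balance constraint enforced by a standard projection step. Each iteration requires one (sub)gradient of the objective at the current $M$, which is read off from the Kantorovich potentials of the two $1$D transport problems; since $1$D optimal transport with cost $\Abs{x-y}$ reduces to a single CDF sweep $\int\Abs{F_\mu-F_\nu}$ over the sorted atoms, both the cost and the $1$-Lipschitz (hence $O(1)$ in $\ell_\infty$, after normalization) potentials are computable in $O(n)$ time after an initial $O(n\log n)$ sort, reusing a fixed sorted order of sample and grid locations across iterations. Multiplicative weights over the $\Abs{G}=O(1/(\eps_1-\eps_2))$-dimensional simplex with $\ell_\infty$-bounded subgradients reaches additive accuracy $\Theta(\eps_1-\eps_2)$ in $O(\log\Abs{G}/(\eps_1-\eps_2)^2)=O(\log(n)/(\eps_1-\eps_2)^2)$ iterations, since $\eps_1-\eps_2=\Omega(n^{-1/2})$ forces $\log\Abs{G}=O(\log n)$. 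Multiplying the per-iteration cost by the iteration count and adding the one-time sort gives the claimed $O(n\log(n)/(\eps_1-\eps_2)^2)=O(n^2\log n)$ runtime.

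The main obstacle I anticipate is twofold. First, controlling the first-order method: one must certify that the feasible domain and the subgradients have the norms required for exactly $O(1/(\eps_1-\eps_2)^2)$ iterations each of near-linear cost, and in particular handle the marginal-balance equality constraint — equivalently, justify dumping residual mass at the calibrated endpoints $u\in\{0,1\}$ — without inflating the iteration count or the Lagrange multipliers (which correspond to the ``Lipschitz test function'' and could a priori be large near $\{0,1\}$). Second, the discretization argument: showing that snapping $u$ to $G$ and then re-establishing exact calibration costs only $O(\eps_1-\eps_2)$ requires a careful rounding construction, since a naive snap destroys the identity $\E[y\mid u]=u$, and the repair step together with its cost accounting is where the real work lies. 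A related but more routine component is verifying the additive $O(n^{-1/2})$ concentration of $\ldce(\hcD_n)$ underpinning the reduction.
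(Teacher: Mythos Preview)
Your approach is sound and reaches the same runtime, but via a genuinely different route from the paper. Both proofs discretize $u$ to a grid of size $m=O((\eps_1-\eps_2)^{-1})$, and the repair-by-routing-to-$\{0,1\}$ you sketch for restoring calibration after snapping is exactly the content of the paper's Lemma~\ref{lem:fix_one_row}. The divergence is in how the discretized LP is solved. The paper keeps the full coupling variable $x\in\Delta^{2mn}$, lifts both the marginal and calibration equality constraints into an $\ell_1$ penalty, certifies via a two-step rounding procedure (Lemmas~\ref{lem:Marginal satisfaction} and~\ref{lem:Marginal-preserving rounding}, combined in Lemma~\ref{lem:roundldtc}) that the penalized and constrained problems share the same optimum, and then invokes an accelerated box-simplex solver (Corollary~\ref{cor:l_1}): $O(\log(mn)/\eps)$ iterations at cost $O(mn)$ each. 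You instead eliminate the coupling by observing that, for fixed grid-marginals $M$, the LP decouples into two one-dimensional $W_1$ problems; this collapses the decision variable to $\Delta^m$ intersected with the single hyperplane $\langle u,M\rangle=n_1/n$, on which you run unaccelerated mirror descent: $O(\log m/\eps^2)$ iterations at cost $O(n+m)$ each. The arithmetic lands in the same place. Your route is more structural and works in a much smaller dimension, with the per-iteration cost being just two CDF sweeps; the paper's route is more black-box and sidesteps the hyperplane-projection issue you flag (though that obstacle is mild: the KL projection onto $\Delta^m\cap\{\langle u,M\rangle=c\}$ is a one-parameter exponential tilt, solvable by bisection on the Lagrange multiplier in $O(m\log(1/\eps))$ time). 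The concentration bound you need, $|\ldce(\hcD_n)-\ldce(\cD)|=O(n^{-1/2})$ with constant probability, is precisely Lemma~\ref{lem:LDTC_empirical}, quoted from \cite{blasiok2023unifying}.
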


While Theorem~\ref{thm:intro_tct} is slower than Theorem~\ref{thm:intro_cal}, it directly approximates the LDTC, making it applicable to tolerant calibration testing. We mention that state-of-the-art black-box linear programming based solvers, while still applicable to (a discretizeation of) the empirical LDTC, require $\Omega(n^{2.5})$ time \cite{BrandLLSSSW21}, even if $\omega = 2$. This is because the constraint matrix for the $\eps$-approximate empirical LDTC linear program has dimensions $O(\frac n \eps) \times O(n)$, resulting in an $\approx \frac 1 \eps = \Omega(\sqrt n)$ overhead in the dimension of the decision variable. We prove Theorem~\ref{thm:intro_tct} in Section~\ref{sec:ldtc}, where we use recent advances in minimax optimization \cite{jambulapati2023revisiting} and a custom combinatorial rounding procedure to develop a faster algorithm, improving state-of-the-art linear programming runtimes by an $\Omega(\sqrt n)$ factor.

In Section~\ref{sec:lower}, we complement our algorithmic results with lower bounds (Theorems~\ref{thm:lowerb},~\ref{thm:lowerb-sint}) on the sample complexity required to solve variants of the testing problem in Definition~\ref{def:testing_1}, when $\ldce$ is replaced with different calibration measures. For several widely-used  distances in the machine learning literature, including binned and convolved variants of $\ece$ \cite{NaeiniCH15, blasiok2023smooth}, we show that $\tOmega(\eps^{-2.5})$ samples are required to the associated $\eps$-calibration testing problem. This demonstrates a statistical advantage of our focus on $\ldce$ as our ground-truth notion for calibration testing.

We corroborate our theoretical findings with experimental evidence on real and synthetic data in Section~\ref{sec:experiments}. First, on a simple Bernoulli example, we show that $\ldce$ and $\smce$ testers are more reliable measures of calibration than a recently-proposed binned $\ece$ variant. We then apply our $\smce$ tester to postprocessed neural network predictions to test their calibration levels, validating against the findings in \cite{guo2017calibration}. Finally, we implement our method from Theorem~\ref{thm:intro_cal} on our Bernoulli dataset, showing that it scales to high dimensions and runs faster than both a linear program solver from CVXPY for computing the empirical $\smce$, as well as a commercial minimum-cost flow solver from Gurobi Optimization (combined with our reformulation in Lemma~\ref{lem:min-cost}).\footnote{Our code can be found at: \url{https://github.com/chutongyang98/Testing-Calibration-in-Nearly-Linear-Time}.} 

\subsection{Our techniques}\label{ssec:techniques}

Theorems~\ref{thm:intro_cal} and~\ref{thm:intro_tct} follow from designing custom algorithms for approximating empirical linear programs associated with the $\smce$ and $\ldce$ of a sampled dataset $\hcD_n \defeq \{(v_i, y_i)\}_{i \in [n]} \sim_{\textup{i.i.d.}} \cD$. In both cases, generalization bounds from \cite{blasiok2023unifying} show it suffices to approximate the value of the empirical calibration measures to error $\eps = \Omega(n^{-1/2})$, though our solver in Theorem~\ref{thm:intro_cal} will be exact.


We begin by explaining our strategy for estimating $\smce(\hcD_n)$ (Definition~\ref{def:smooth}). By definition, the smooth calibration error of $\hcD_n$ can be formulated as a linear program,
\begin{equation}\label{eq:smce_lp_intro}
\min_{x \in [-1, 1]^n} \frac 1 n \sum_{i \in [n]} x_i (v_i - y_i),\text{ where } |x_i - x_j| \le |v_i - v_j| \text{ for all } (i, j) \in [n] \times [n].
\end{equation}
Here, $x_i \in [-1, 1]$ corresponds to the weight on $v_i$, and there are $2\binom{n}{2}$ constraints on the decision variable $x$, each of which corresponds to a Lipschitz constraint.
We make the simple observation that every Lipschitz inequality constraint can be replaced by two constraints of the form $x_i - x_j \le |v_j - v_i|$ (with $i, j$ swapped). Moreover, the box constraints $x \in [-1, 1]^n$ can be handled by introducing a dummy variable $x_{n + 1}$ and writing $\max(x_i - x_{n + 1}, x_{n + 1} - x_i) \le 1$, after penalizing $x_{n + 1}$ appropriately in the objective. Notably, this substitution makes every constraint the difference of two decision variables, which is enforceable using the edge-vertex incidence matrix of a graph. Finally, the triangle inequality implies that we only need to enforce Lipschitz constraints in \eqref{eq:smce_lp_intro} corresponding to adjacent $i, j$. After making these simplifications, the result is the dual of a minimum-cost flow problem on a graph which is the union of a star and a path; this argument is carried out in Lemma~\ref{lem:min-cost}.

Because of the sequential structure of the induced graph, we show in Section~\ref{ssec:Dynamic Programming} that a dynamic programming-based approach, which maintains the minimum-cost flow value after committing to the first $i < n$ flow variables in the graph recursively, succeeds in computing the value \eqref{eq:smce_lp_intro}. To implement each iteration of our dynamic program in polylogarithmic time, we rely on a generalization of the classical segment tree data structure that we develop in Section~\ref{ssec:segment tree}; combining gives Theorem~\ref{thm:intro_cal}.

On the other hand, the linear program corresponding to the empirical $\ldce$ is more complex (with two types of constraints), and to our knowledge lacks the graphical structure to be compatible with the aforementioned approach. Moreover, it is not obvious how to use first-order methods, an alternative linear programming framework suitable when only approximate answers are needed, to solve this problem more quickly. This is because the empirical $\ldce$ linear program enforces hard constraints to a set that is difficult to project to under standard distance metrics. 
To develop our faster algorithm in Theorem~\ref{thm:intro_tct}, we instead follow an ``augmented Lagrangian'' method where we lift the constraints directly into the objective as a soft-constrained penalty term. To prove correctness of this lifting, we follow a line of results in combinatorial optimization \cite{Sherman13, jambulapati2019direct}. These works develop a ``proof-by-rounding algorithm'' framework to show that the hard-constrained and soft-constrained linear programs have equal values, summarized in Section~\ref{ssec:rounding} (see Lemma~\ref{lem:rounding_unconstrained}). 

To use this augmented Lagrangian framework, it remains to develop an appropriate rounding algorithm to the feasible polytope for the empirical $\ldce$ linear program, which enforces two types of constraints: marginal satisfaction of $(v, y)$, and calibration of $(u, y)$ (using notation from Definition~\ref{def:ldtc}). In Section~\ref{ssec:ldtc_rounding}, we design a two-step rounding procedure, which first fixes the marginals on the $(v, y)$ coordinates, and then calibrates the $u$ coordinates without affecting any $(v, y)$ marginal. 
One interesting feature of our $\ldce$ rounding algorithm, compared to prior works \cite{Sherman13, jambulapati2019direct}, is that it is \emph{cost-sensitive}: we take into account the metric structure of the linear costs $c$ to argue rounding does not harm the objective value, rather than na\"ively applying H\"older's inequality.

Finally, our sample complexity lower bounds in Section~\ref{sec:lower} for alternative calibration measures follow from an information-theoretic analysis using LeCam’s two-point method \cite{lecam} and Ingster's method \cite{ingster}.
We construct a perfectly-calibrated distribution as well as a family of miscalibrated distributions with large calibration error ($\ge \varepsilon$) in the alternative calibration measures (convolved ECE and interval CE).
A testing algorithm for these measures needs to distinguish these two cases. We show that the distinguishing task has high sample complexity ($\approx \varepsilon^{-2.5}$) by bounding the total variation distance between the two joint distributions of the input examples from the two cases. 
Our proof is inspired by a similar analysis showing a sample complexity lower bound for identity testing in the distribution testing literature (see e.g.\ Section 3.1 of \cite{clement}).

\subsection{Related work}\label{ssec:related}
The calibration performance of deep neural networks has been studied extensively in the literature (e.g.\ \cite{guo2017calibration,revisit,ensembles,calgap}). Measuring the calibration error in a meaningful way can be challenging, especially when the predictions are not naturally discretized (e.g.\ in neural networks). Recently, \cite{blasiok2023unifying} addresses this challenge using the \emph{distance to calibration} as a central notion. They consider a calibration measure to be \emph{consistent} if it is polynomially-related to the distance to calibration. Consistent calibration measures include the smooth calibration error \cite{kakade-foster}, Laplace kernel calibration error \cite{kernel-cal}, interval calibration error \cite{blasiok2023unifying}, and convolved ECE \cite{blasiok2023smooth}.\footnote{The calibration measure we call the \emph{convolved ECE} in our work was originally called the \emph{smooth ECE} in \cite{blasiok2023smooth}. We change the name slightly to reduce overlap with the smooth calibration error (Definition~\ref{def:smooth}), a central object throughout the paper.} On the algorithmic front, substantial observations were made by \cite{blasiok2023unifying} on linear programming characterizations of calibration measures such as the LDTC and smooth calibration. While there have been significant advances on the runtime frontier of linear programming solvers, current runtimes for handling an $n \times d$ linear program constraint matrix with $n \ge d$ remain $\Omega(\min(nd + d^{2.5}, n^\omega))$ \cite{CohenLS21, BrandLLSSSW21}. Our constraint matrix is roughly-square and highly-sparse, so it is plausible that e.g.\ the recent research on sparse linear system solvers \cite{PengV21, Nie22} could apply to the relevant Newton's method subproblems and improve upon these rates. Moreover, while efficient estimation algorithms have been proposed by \cite{blasiok2023unifying} for (surrogate) interval calibration error and by \cite{blasiok2023smooth} for convolved ECE, these algorithms require suboptimal sample complexity for solving our testing task in \Cref{def:testing_1} (see \Cref{sec:lower}). To compute their respective distances to error $\eps$ from samples, these algorithms require $\Omega(\eps^{-5})$ and $\Omega(\eps^{-3})$ time. As comparison, under this parameterization Theorems~\ref{thm:intro_cal} and~\ref{thm:intro_tct} require $\tO(\eps^{-2})$ and $\tO(\eps^{-4})$ time, but can solve stronger testing problems with the same sample complexity, experimentally validated in Section~\ref{sec:experiments}.
\section{Preliminaries}\label{sec:prelims}

Throughout this paper, we use $\cD$ to denote a distribution over $[0,1]\times\{0,1\}$. When $\cD$ is clear from context, we let $\hcD_n = \{(v_i, y_i)\}_{i \in [n]}$ denote a dataset of $n$ independent samples from $\cD$ and, in a slight abuse of notation, the distribution with probability $\frac{1}{n}$ for each $(v_i, y_i)$. We say $\ddd$ is a \emph{calibration measure} if it takes distributions on $[0, 1] \times \{0, 1\}$ to the nonnegative reals $\R_{\ge 0}$, so $\ldce$ (Definition~\ref{def:ldtc}) and $\smce$ (Definition~\ref{def:smooth}) are both calibration measures. 

We denote matrices in boldface throughout. For any matrix $\ma \in \R^{m \times n}$, we refer to its $i^{\text{th}}$ row by $\ma_{i:}$ and its $j^{\text{th}}$ column by $\ma_{:j}$. Moreover, for a set $S$ identified with rows of a matrix $\ma$, we let $\ma_{s:}$ denote the row indexed by $s \in S$, and use similar notation for columns. 

For a directed graph $G = (V, E)$, we define its edge-vertex incidence matrix $\mb \in \{-1, 0, 1\}^{E \times V}$ which has a row corresponding to each $e = (u, v) \in E$ with $\mb_{eu} = 1$ and $\mb_{ev} = -1$. When $G$ is undirected, we similarly define $\mb \in \{-1, 0, 1\}^{E \times V}$ with arbitrary edge orientations.

We use $\tO$ and $\tOmega$ to hide polylogarithmic factors in the argument. When applied to a vector, we let $\norm{\cdot}_p$ denote the $\ell_p$ norm for $p \ge 1$. We denote $[n] \defeq \{i \in \N \mid i \le n\}$. We let $\0_d$ and $\1_d$ denote the all-zeroes and all-ones vectors in dimension $d$. We let $\Delta^d \defeq \{x \in \R^d_{\ge 0} \mid \norm{x}_1 = 1\}$ denote the probability simplex in dimension $d$. The $i^{\text{th}}$ coordinate basis vector is denoted $e_i$. We say $\tx \in \R$ is an $\eps$-additive approximation of $x \in \R$ if $|\tx - x| \le \eps$. For a set $S \subset \R$, we say another set $T \subset \R$ is an $\eps$-cover of $S$ if for all $s \in S$, there is $t \in T$ with $|s - t| \le \eps$. For $a, b \in \R$ with $a \le b$, we let $\clip_{[a, b]}(t) \defeq \min(b, \max(a, t))$ denote the result of projecting $t\in\R$ onto $[a, b]$.

We next define a tolerant variant of the calibration testing problem in Definition~\ref{def:testing_1}.

\begin{definition}[Tolerant calibration testing]\label{def:testing_2}
Let $0 \le \eps_2 \le \eps_1 \le 1$. We say algorithm $\alg$ solves the $(\eps_1,\eps_2)$-\emph{tolerant calibration testing} problem with $n$ samples, if given $n$ i.i.d.\ draws from a distribution $\cD$ over $[0,1]\times\{0,1\}$, $\alg$ returns either ``yes'' or ``no'' and satisfies the following with probability $\ge \frac 2 3$.
    \begin{itemize}
        \item $\alg$ returns ``no'' if $\ldce(\cD) \geq \eps_1$.
        \item $\alg$ returns ``yes'' if $\ldce(\cD) \le \eps_2$.
    \end{itemize}
In this case, we also call $\alg$ an $(\eps_1, \eps_2)$-tolerant calibration tester.
\end{definition}

Note that an algorithm which solves the $(\eps_1, \eps_2)$-tolerant calibration testing problem with $n$ samples also solves the $\eps_1$-calibration testing problem with the same sample complexity. Moreover, we give a simple impossibility result on parameter ranges for calibration testing.

\begin{lemma}\label{lem:tct_lb}
Let $0 \le \eps_2 \le \eps_1 \le \half$ satisfy $\eps_1 - \eps_2 = \eps$. There is a universal constant $\Ccoin$ such that, given $n \le \frac{\Ccoin}{\eps^2}$ samples from a distribution on $[0, 1] \times \{0, 1\}$, it is information-theoretically impossible to solve the $(\eps_1, \eps_2)$-tolerant calibration testing problem.
\end{lemma}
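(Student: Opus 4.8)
The plan is to use LeCam's two-point method. I will construct two distributions on $[0,1] \times \{0,1\}$: one, call it $\cD_0$, which is perfectly calibrated (so $\ldce(\cD_0) = 0 \le \eps_2$), and another, $\cD_1$, which satisfies $\ldce(\cD_1) \ge \eps_1$. If these two distributions have total variation distance $\dtv(\cD_0, \cD_1) = O(\eps)$, then the product distributions on $n$ samples satisfy $\dtv(\cD_0^{\otimes n}, \cD_1^{\otimes n}) \le \sqrt{n} \cdot \dtv(\cD_0, \cD_1)$ (or more precisely $1 - \dtv(\cD_0^{\otimes n}, \cD_1^{\otimes n})$ bounded below), which is less than $\tfrac 1 3$ when $n \le \Ccoin/\eps^2$ for a suitable small constant. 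Any tester must answer ``yes'' on $\cD_0$ and ``no'' on $\cD_1$ with probability $\ge \tfrac 2 3$, which is impossible when the two sample-distributions are statistically indistinguishable. So the whole task reduces to exhibiting the right pair $\cD_0, \cD_1$.

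The natural construction is a single-point (``coin'') example, which is why the constant is named $\Ccoin$. Fix a single prediction value, say $v = \tfrac 1 2$ (so the marginal on $v$ is a point mass), and let $\cD_0$ output $y \sim \Bern(\tfrac 1 2)$ — this is perfectly calibrated. For $\cD_1$, still put all mass on $v = \tfrac 1 2$ but let $y \sim \Bern(\tfrac 1 2 + \delta)$ for a bias $\delta = \Theta(\eps)$ to be chosen. The two distributions differ only in the outcome bias at a single prediction value, so $\dtv(\cD_0, \cD_1) = \delta = \Theta(\eps)$, giving the required indistinguishability for $n \lesssim \eps^{-2}$. It remains to check that $\ldce(\cD_1)$ is large: since every sample has $v = \tfrac 1 2$ but $\E[y \mid v] = \tfrac 1 2 + \delta$, any coupling $\Pi \in \ext(\cD_1)$ with $\E_\Pi[y \mid u] = u$ must move the $u$-coordinate away from $\tfrac 1 2$ by essentially $\delta$ on average — intuitively, to be calibrated, the conditional mean of $y$ on $u = t$ must equal $t$, but the overall mean of $y$ is $\tfrac 1 2 + \delta$, forcing $\E_\Pi[u] = \tfrac 1 2 + \delta$ while $v \equiv \tfrac 1 2$, so $\E_\Pi|u - v| \ge |\E_\Pi[u] - \E_\Pi[v]| = \delta$. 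Hence $\ldce(\cD_1) \ge \delta$, and choosing $\delta = \eps_1 = \eps_1 - \eps_2 + \eps_2$... actually I should be slightly careful: I want $\ldce(\cD_1) \ge \eps_1$ and $\ldce(\cD_0) \le \eps_2$ while keeping $\dtv$ of order $\eps = \eps_1 - \eps_2$. To get this, I should instead let $\cD_0$ output $y \sim \Bern(\tfrac 1 2)$ at $v = \tfrac 1 2$ together with enough ``slack'' — or more simply, place $\cD_0$ at a biased coin too: let $\cD_0$ have $v = p_0$, $y \sim \Bern(p_0)$ (calibrated, $\ldce = 0$), and $\cD_1$ have $v = p_0$, $y \sim \Bern(p_0 + \eps_1)$, so that $\ldce(\cD_1) \ge \eps_1$ by the argument above and $\dtv = \eps_1$. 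But this only rules out $n \lesssim \eps_1^{-2}$, not $n \lesssim (\eps_1 - \eps_2)^{-2}$. To get the sharper $\eps_1 - \eps_2$ dependence I need the ``yes'' distribution to itself be miscalibrated by up to $\eps_2$: let $\cD_0$ have $v = p_0$ and $y \sim \Bern(p_0 + \eps_2)$ — wait, that has $\ldce = \eps_2$, which is allowed in the ``yes'' case — and $\cD_1$ have $v = p_0$, $y \sim \Bern(p_0 + \eps_1)$, giving $\ldce(\cD_1) \ge \eps_1$ and $\dtv(\cD_0, \cD_1) = \eps_1 - \eps_2 = \eps$. (One must choose $p_0$, e.g. $p_0 = \tfrac 1 2 - \tfrac{\eps_1}{2}$ or so, so that both biases stay in $[0,1]$; the hypothesis $\eps_1 \le \tfrac 1 2$ gives room for this.)

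The main obstacle — really the only non-routine point — is verifying the lower bound $\ldce(\cD_1) \ge \eps_1$ rigorously from Definition~\ref{def:ldtc}, i.e. arguing that no coupling $\Pi$ with a calibrated $(u,y)$-marginal can have small $\E|u - v|$. The clean way is the averaging argument sketched above: calibration of $(u,y)$ gives $\E_\Pi[u] = \E_\Pi[\E_\Pi[y \mid u]] = \E_\Pi[y]$, and since the $(v,y)$-marginal of $\Pi$ is $\cD_1$ we have $\E_\Pi[y] = p_0 + \eps_1$ and $\E_\Pi[v] = p_0$; therefore $\E_\Pi|u-v| \ge |\E_\Pi[u - v]| = \eps_1$, uniformly over $\Pi \in \ext(\cD_1)$, so the infimum is $\ge \eps_1$. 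Similarly $\ldce(\cD_0) = |\,\E[y] - \E[v]\,|$-type reasoning shows the $\cD_0$ coupling achieving value $\eps_2$ exists (take $u = y$, which is trivially calibrated, and note $\E|u - v| = \E|y - p_0| \cdot$... one checks this equals exactly $\eps_2$ by a direct computation with a two-point $u$ supported on $\{0,1\}$ — or simply note $\ldce \le \eps_2$ follows from $\ldce \le \ece$-type bounds; in any case $\ldce(\cD_0) \le \eps_2$ holds). Finally I assemble: with these two distributions, the tester's output distributions under $\cD_0^{\otimes n}$ and $\cD_1^{\otimes n}$ are $\eps$-close per coordinate, so by the data-processing inequality and subadditivity of $\dtv$ over products, a correct tester would need $\dtv(\cD_0^{\otimes n}, \cD_1^{\otimes n}) \ge \tfrac 1 3$, forcing $n \ge \Omega(\eps^{-2})$; taking $\Ccoin$ to be the implied constant completes the proof.
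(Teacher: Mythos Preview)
Your construction and overall strategy match the paper's almost exactly: a single-point predictor $v \equiv p_0$ with two Bernoulli outcome biases differing by $\eps$, together with the Jensen argument $\E_\Pi|u - v| \ge |\E_\Pi[u] - v| = |\E[y] - v|$ to pin down $\ldce$ exactly. (The paper takes $p_0 = \tfrac 1 2 + \eps_1$ with $\E[y] \in \{\tfrac 1 2,\, \tfrac 1 2 + \eps\}$; your parameterization is equivalent.)

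There is, however, one genuine gap: the bound on $\dtv(\cD_0^{\otimes n}, \cD_1^{\otimes n})$. The inequality $\dtv(\cD_0^{\otimes n}, \cD_1^{\otimes n}) \le \sqrt{n}\cdot \dtv(\cD_0, \cD_1)$ you write at the outset is false in general (take $\Bern(0)$ versus $\Bern(\eps)$ and $n \approx 1/\eps$), and the ``subadditivity of $\dtv$ over products'' you invoke at the end gives only $\dtv(\cD_0^{\otimes n},\cD_1^{\otimes n}) \le n\cdot\dtv(\cD_0,\cD_1) = n\eps$, which yields merely $n = \Omega(\eps^{-1})$, not $\Omega(\eps^{-2})$. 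To obtain the quadratic rate you must pass through a divergence that tensorizes and is \emph{quadratic} in $\eps$ for Bernoullis bounded away from $\{0,1\}$---either KL (via Pinsker) or squared Hellinger. The paper uses the former: $\dkl(\Bern(\tfrac 1 2)\,\|\,\Bern(\tfrac 1 2 + \eps)) = O(\eps^2)$, so by tensorization $\dkl$ on $n$ samples is $O(n\eps^2)$, and Pinsker then gives $\dtv(\cD_0^{\otimes n},\cD_1^{\otimes n}) = O(\sqrt{n}\,\eps)$.

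A minor point: the coupling $u = y$ you first suggest for the upper bound $\ldce(\cD_0) \le \eps_2$ does not give $\E|u - v| = \eps_2$. The correct witness is the deterministic $u \equiv p_0 + \eps_2$, which is trivially calibrated and realizes the Jensen equality $\E|u - v| = \eps_2$.
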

\begin{proof}
Suppose $\eps \le \frac 1 {10}$, else we can choose $\Ccoin$ small enough such that $n < 1$. We consider two distributions over $[0, 1] \times \{0, 1\}$, $\cD$ and $\cD'$, with $\ldce(\cD) \ge \eps_1$ but $\ldce(\cD') \le \eps_2$, so if $\alg$ succeeds at tolerant calibration testing for both $\cD$ and $\cD'$, we must have $\dtv(\cD^{\otimes n}, (\cD')^{\otimes n}) \ge \frac 1 3$, where we denote the $n$-fold product of a distribution with $\cdot^{\otimes n}$. Else, $\alg$ cannot return different answers from $n$ samples with probability $\ge \frac 2 3$, as required by Definition~\ref{def:testing_2}. Specifically, we define $\cD, \cD'$ as follows.
\begin{itemize}
    \item To draw $(v, y) \sim \cD$, let $v = \half + \eps_1$ and $y \sim \Bern(\half)$.
    \item To draw $(v, y) \sim \cD'$, let $v = \half + \eps_1$ and $y \sim \Bern(\half + \eps)$.
\end{itemize}
We claim that $\ldce(\cD) = \eps_1$. To see this, let $\Pi \in \ext(\cD)$ and $(u, v, y) \sim \Pi$, so $\E_{(u, v, y) \sim \Pi}[u] = \half$ because $u$ is calibrated. By Jensen's inequality, we have:
\[\E_{(u, v, y) \sim \Pi}\Brack{|u - v|} \ge \Abs{\E_{(u, v, y) \sim \Pi}\Brack{u} - \Par{\half + \eps_1}} = \eps_1.\]
The equality case is realized when $u = \half$ with probability $1$, proving the claim.
Similarly, $\ldce(\cD') = \eps_2$. Finally, let $\pi \defeq \Bern(\half)$, $\pi' \defeq \Bern(\half + \eps)$, and $\pi^{\otimes n}, (\pi')^{\otimes n}$ denote their $n$-fold product distributions. Pinsker's inequality shows that it suffices to show that $\dkl(\pi^{\otimes n} \| (\pi')^{\otimes n}) \le \frac 1 5$ to contradict our earlier claim $\dtv((\cD)^{\otimes n}, (\cD')^{\otimes n}) \ge \frac 1 3$. To this end, we have
\begin{align*}
\dkl(\pi^{\otimes n} \| (\pi')^{\otimes n}) &= n \cdot \dkl\Par{\pi \| \pi'} = \frac n 2 \Par{\log\Par{\frac{\half}{\half + \eps}} + \log\Par{\frac{\half}{\half - \eps}}} \\
&= \frac n 2 \log\Par{\frac{1}{1 - 4\eps^2}} \le \frac n 2 \cdot 5\eps^2\le \frac 1 5,
\end{align*}
where the first line used tensorization of $\dkl$, and the last chose $\Ccoin$ small enough.
\end{proof}

We also generalize Definitions~\ref{def:testing_1} and~\ref{def:testing_2} to apply to an arbitrary calibration measure.

\begin{definition}[$\ddd$ testing]\label{def:testing_3}
Let $\ddd$ be a calibration measure. For $\eps \in \R_{\ge 0}$, we say algorithm $\alg$ solves the $\eps$\emph{-$\ddd$-testing} problem (or, $\alg$ is an $\eps$-$\ddd$ tester) with $n$ samples, if given $n$ i.i.d.\ draws from a distribution $\cD$ over $[0, 1] \times \{0, 1\}$, $\alg$ returns either ``yes'' or ``no'' and satisfies the following with probability $\ge \frac 2 3$.
\begin{enumerate}
    \item $\alg$ returns ``no'' if $\ddd(\cD) \ge \eps$.
    \item $\alg$ returns ``yes'' if $\ddd(\cD) = 0$.
\end{enumerate}
For $0 \le \eps_2 \le \eps_1$, we say algorithm $\alg$ solves the $(\eps_1, \eps_2)$-tolerant $\ddd$ testing problem (or, $\alg$ is an $(\eps_1, \eps_2)$-tolerant $\ddd$ tester) with $n$ samples, if given $n$ i.i.d.\ draws from a distribution $\cD$ over $[0, 1] \times \{0, 1\}$, $\alg$ returns either ``yes'' or ``no'' and satisfies the following with probability $\ge \frac 2 3$.
\begin{enumerate}
    \item $\alg$ returns ``no'' if $\ddd(\cD) \ge \eps_1$.
    \item $\alg$ returns ``yes'' if $\ddd(\cD) \le \eps_2$.
\end{enumerate}
\end{definition}

\section{Smooth calibration}\label{sec:smooth}

In this section, we provide our main result on approximating the smooth calibration of a distribution on $[0, 1] \times \{0, 1\}$. In Section~\ref{ssec:min-cost flow}, we show that the linear program corresponding to the smooth calibration of an empirical distribution can be reformulated as an instance of minimum-cost flow on a highly-structured graph. We then develop our custom solver for this flow problem in Sections~\ref{ssec:Dynamic Programming} (where we give a dynamic programming-based solution to our problem, and implement it assuming existence of an appropriate data structure) and~\ref{ssec:segment tree} (where we implement said data structure). Finally, we give our main result on near-linear time calibration testing in Section~\ref{ssec:test_smooth}.

\subsection{Empirical smooth calibration is minimum-cost flow}\label{ssec:min-cost flow}
Throughout this section, we fix a dataset under consideration,
\[\hcD_n \defeq \Brace{(v_i, y_i)}_{i \in [n]} \subset [0, 1] \times \{0, 1\},\]
and the corresponding empirical distribution (which, in an abuse of notation, we also denote $\hcD_n$), i.e.\ we use $(v, y) \sim \hcD_n$ to mean that $(v, y) = (v_i, y_i)$ with probability $\frac 1 n$ for each $i \in [n]$. We also assume without loss of generality that the $\{v_i\}_{i \in [n]}$ are in sorted order, so $0 \le v_1 \le \ldots \le v_n \le 1$. Recalling Definition~\ref{def:smooth}, the associated empirical smooth calibration linear program is
\begin{equation}\label{eq:smce_empirical}
\begin{aligned}
\smce(\hcD_n) &\defeq \max_{\substack{x \in [-1, 1]^n} } b^\top x, \\
\text{where } |x_i - x_j| &\le v_j - v_i \text{ for all } (i, j) \in [n] \times [n] \text{ with } i < j, \\
\text{and } b_i &\defeq \frac 1 n (y_i - v_i)\text{ for all } i \in [n].
\end{aligned}\end{equation}

We first make a simplifying observation, which shows that it suffices to replace the Lipschitz constraints in \eqref{eq:smce_empirical} with only the Lipschitz constraints corresponding to adjacent indices $(i, j)$.

\begin{lemma}\label{lem:zeroth_suffice}
If $x, v \in \R^n$, where $v$ has monotonically nondecreasing coordinates, and $|x_i - x_{i + 1}| \le v_{i + 1}  - v_i$ for all $i \in [n - 1]$, then $|x_i - x_j| \le v_j - v_i$ for all $(i, j) \in [n] \times [n]$ with $i < j$.
\end{lemma}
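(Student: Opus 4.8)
The plan is to prove this by a telescoping argument, using the triangle inequality to chain the adjacent constraints together, and crucially using the monotonicity of $v$ to handle the absolute values on the right-hand side.

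First I would fix $(i,j) \in [n]\times[n]$ with $i < j$. Summing the hypothesized adjacent bounds $|x_k - x_{k+1}| \le v_{k+1} - v_k$ over $k = i, i+1, \ldots, j-1$, the left side telescopes under the triangle inequality: $|x_i - x_j| \le \sum_{k=i}^{j-1} |x_k - x_{k+1}|$. The right side becomes $\sum_{k=i}^{j-1} (v_{k+1} - v_k) = v_j - v_i$, a genuine telescoping sum. Combining these two displays yields $|x_i - x_j| \le v_j - v_i$, which is exactly the claim. The only place monotonicity of $v$ enters is to ensure that each adjacent increment $v_{k+1} - v_k$ is nonnegative, so that $v_{k+1} - v_k = |v_{k+1} - v_k|$ and the hypothesis ``$|x_k - x_{k+1}| \le v_{k+1} - v_k$'' is the meaningful Lipschitz bound (rather than being vacuous or sign-confused); it also makes the telescoped right-hand side $v_j - v_i$ automatically nonnegative, consistent with the left-hand side being an absolute value.

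There is essentially no main obstacle here — the argument is a one-line telescoping plus triangle inequality — so the ``hard part'' is merely bookkeeping: making sure the index range $k = i, \ldots, j-1$ is handled correctly when $j = i+1$ (the base case, where the sum has a single term and the statement is literally the hypothesis) and writing the induction or summation cleanly. If one prefers an inductive phrasing over a direct summation, I would induct on $j - i$: the base case $j - i = 1$ is the hypothesis, and for the inductive step write $|x_i - x_j| \le |x_i - x_{j-1}| + |x_{j-1} - x_j| \le (v_{j-1} - v_i) + (v_j - v_{j-1}) = v_j - v_i$, using the inductive hypothesis on the first term and the adjacent hypothesis on the second. Either phrasing is complete and short.
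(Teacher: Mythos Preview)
Your proposal is correct and matches the paper's proof essentially verbatim: the paper also applies the triangle inequality to write $|x_i - x_j| \le \sum_{k=i}^{j-1}|x_k - x_{k+1}|$ and then telescopes $\sum_{k=i}^{j-1}(v_{k+1}-v_k) = v_j - v_i$.
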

\begin{proof}
This follows from the triangle inequality:
\begin{align*}
|x_i - x_j| \le \sum_{k = i}^{j - 1} |x_k - x_{k + 1}| \le \sum_{k = i}^{j - 1} v_{k + 1} - v_k = v_j - v_i.
\end{align*}
\end{proof}

We now reformulate \eqref{eq:smce_empirical} as a (variant of a) minimum-cost flow problem.

\begin{lemma}\label{lem:min-cost}
Consider an instance of \eqref{eq:smce_empirical}. Let $G = (V, E)$ be an undirected graph on $n + 1$ vertices labeled by $V \defeq [n + 1]$, and with $2n - 1$ directed edges $E$ defined and with edge costs as follows.
\begin{itemize}
    \item For all $i \in [n - 1]$, there is an edge between vertices $(i, i + 1)$ with edge cost $v_{i + 1} - v_i$.
    \item For all $i \in [n]$, there is an edge between vertices $(i, n + 1)$ with edge cost $1$.
\end{itemize}
Let $c \in \R^E$ be the vector of all edge costs, let $d \in \R^{n + 1}$ be the demand vector which concatenates $
-b$ in \eqref{eq:smce_empirical} with a last coordinate set to $\sum_{i \in [n]} b_i$, and let $\mb \in \{-1, 0, 1\}^{E \times V}$ be the edge-vertex incidence matrix of $G$. Then the problem
\begin{equation}\label{eq:mcf_equiv}\min_{\substack{f \in \R^E \\ \mb^\top f = d}} c^\top |f| \defeq \sum_{e \in E} c_e |f_e|\end{equation}
has the same value as the empirical smooth calibration linear program \eqref{eq:smce_empirical}.
\end{lemma}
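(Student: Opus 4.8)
## Proof Plan

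The plan is to exhibit LP duality between \eqref{eq:smce_empirical} (after the reduction of Lemma~\ref{lem:zeroth_suffice}) and \eqref{eq:mcf_equiv}. First I would rewrite \eqref{eq:smce_empirical} in a form compatible with the graph $G$. By Lemma~\ref{lem:zeroth_suffice}, it suffices to keep only the adjacent Lipschitz constraints $|x_i - x_{i+1}| \le v_{i+1} - v_i$ for $i \in [n-1]$; these correspond exactly to the path edges of $G$. For the box constraints $x \in [-1,1]^n$, I introduce a dummy coordinate $x_{n+1}$ and replace each $|x_i| \le 1$ with $|x_i - x_{n+1}| \le 1$; these correspond to the star edges $(i, n+1)$. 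The objective $b^\top x$ is extended to $\R^{n+1}$ by setting the $(n+1)$-st coefficient to $-\sum_{i\in[n]} b_i$, i.e.\ the objective vector becomes $-d$. The key point is that after this substitution, the maximization is invariant to translating $x$ by a multiple of $\1_{n+1}$ (since all constraints are differences and $-d$ is orthogonal to $\1_{n+1}$), so fixing $x_{n+1} = 0$ recovers exactly \eqref{eq:smce_empirical}. Concretely, the reformulated primal is $\max\{ -d^\top x : |\mb x|_e \le c_e \text{ for all } e \in E\}$, where $\mb x$ has entry $x_u - x_v$ on edge $e = (u,v)$ (and I should double-check the sign convention matches the incidence matrix in the Preliminaries).

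Next I would dualize. Writing each $|(\mb x)_e| \le c_e$ as the pair $(\mb x)_e \le c_e$ and $-(\mb x)_e \le c_e$, with nonnegative multipliers $f_e^+, f_e^-$, the Lagrangian dual of $\max\{-d^\top x : \mb x \le c,\ -\mb x \le c\}$ is $\min\{ c^\top(f^+ + f^-) : \mb^\top(f^+ - f^-) = -d,\ f^\pm \ge \0\}$. Setting $f = f^- - f^+$ (note the sign, so that $\mb^\top f = d$), and observing that at optimality we never have both $f_e^+ > 0$ and $f_e^- > 0$ (since $c_e \ge 0$; more precisely $c^\top(f^+ + f^-) \ge c^\top |f|$ always, with equality achievable), the dual objective $c^\top(f^+ + f^-)$ equals $c^\top |f|$. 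This is precisely \eqref{eq:mcf_equiv}: $\min\{ c^\top |f| : \mb^\top f = d\}$. Strong LP duality then gives equality of the two optimal values, provided I check both LPs are feasible and bounded.

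Boundedness and feasibility are where I'd be careful, though I expect no real obstacle. The primal \eqref{eq:smce_empirical} is feasible (take $x = \0$) and bounded ($x \in [-1,1]^n$ compact), hence so is its reformulation; thus the dual \eqref{eq:mcf_equiv} is feasible and attains the same finite value. Feasibility of \eqref{eq:mcf_equiv} directly can also be seen since $G$ is connected and $\1^\top d = 0$: the path-plus-star graph admits a flow routing the demand $d$ (indeed, since every vertex $i \in [n]$ is adjacent to $n+1$, one can route $d_i = -b_i$ of flow along each star edge and $\sum b_i$ is absorbed at vertex $n+1$). I would also remark that all edge costs $c_e \ge 0$, which both justifies the $f^+ + f^- \rightsquigarrow |f|$ step and guarantees the min-cost flow value is well-defined.

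The one genuinely fiddly point — the part I'd treat as the "main obstacle," though it is bookkeeping rather than difficulty — is tracking signs and orientations consistently: the incidence matrix $\mb$ has fixed (arbitrary) edge orientations, the constraint $|x_u - x_v| \le c_e$ is orientation-symmetric, and the demand $d$ must come out as stated ($-b$ on the first $n$ coordinates, $\sum b_i$ on the last) rather than its negation. I would resolve this by simply writing $\mb x$ for the vector with $e$-th entry $x_u - x_v$ when $e$ is oriented $u \to v$, noting $|x_u - x_v| \le c_e$ is equivalent to $-c \le \mb x \le c$ regardless of orientation, and then carrying the signs through the dual derivation mechanically; the identification $f = f^- - f^+$ (vs.\ $f^+ - f^-$) is chosen precisely to make $\mb^\top f = d$ hold with the stated $d$.
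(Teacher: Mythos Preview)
Your proposal is correct and follows essentially the same approach as the paper: reduce to adjacent Lipschitz constraints via Lemma~\ref{lem:zeroth_suffice}, introduce a dummy variable $x_{n+1}$ to encode the box constraints as difference constraints, rewrite the resulting LP in the form $\max\{-d^\top x : |\mb x| \le c\}$, and take the LP dual. Your $f^+, f^-$ multipliers for the two sides of each absolute-value constraint are exactly the flows on the two oppositely-oriented copies of each edge in the paper's auxiliary directed graph $\tG$, and your observation that at optimality $c^\top(f^+ + f^-) = c^\top |f|$ is the paper's step of collapsing each directed edge pair into one undirected edge.
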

\begin{figure}[h]
\centering
\includegraphics[width=0.4\textwidth]{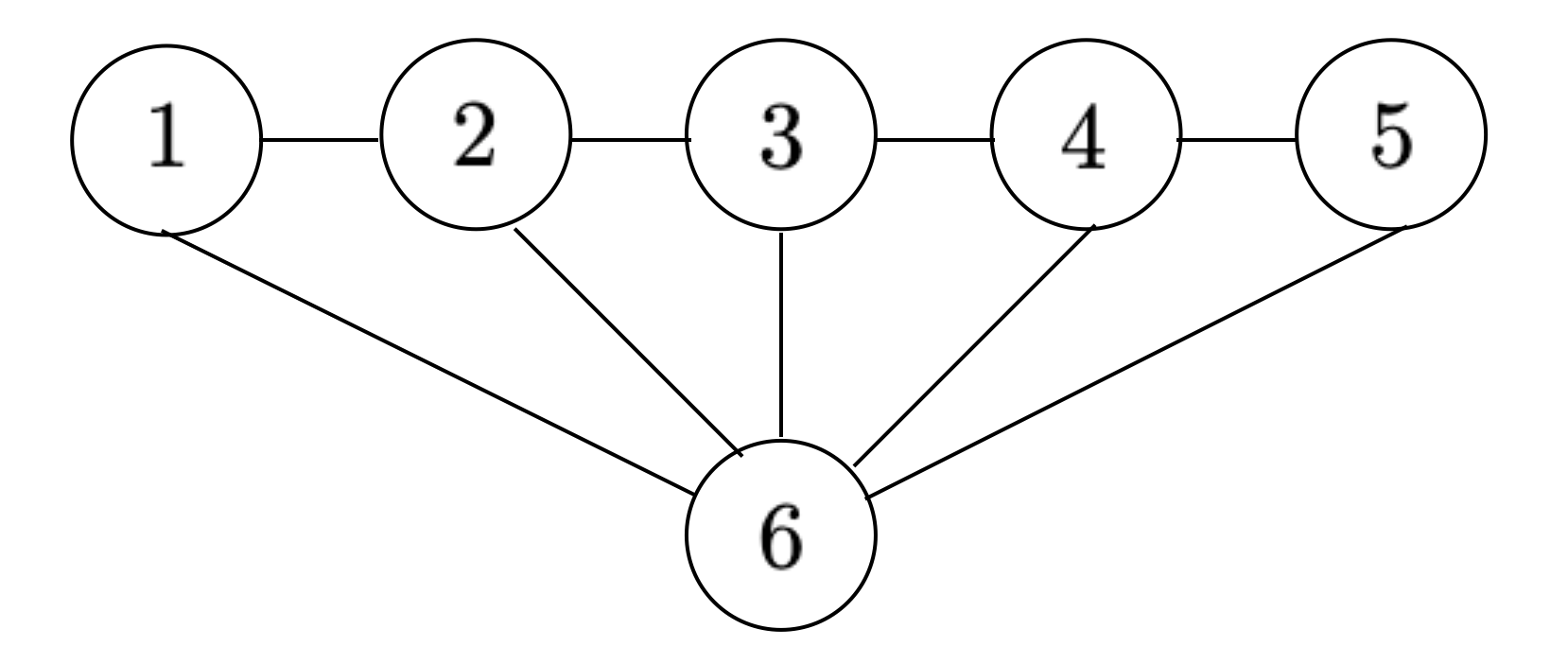}
\caption{Example graph $G$ for $n=5$ with $n+1 = 6$ vertices and $2n-1=9$ edges.}
\end{figure}
\begin{proof}
 By Lemma \ref{lem:zeroth_suffice}, solving \eqref{eq:smce_empirical} is equivalent to solving
\begin{equation}\label{eq:adjacent_smce}
\min_{x \in [-1, 1]^n}  b^\top x ,\text{ where } |x_i - x_{i+1}|\le v_{i+1} - v_i \text{ for all } i \in [n-1] .
\end{equation}
We create a dummy variable $x_{n + 1}$, and rewrite \eqref{eq:adjacent_smce} as
\begin{equation}\label{eq:adjacent_smce_dummy}
\begin{aligned}
\min_{x \in \R^{n + 1}}  \sum_{i \in [n]} b_i (x_i - x_{n + 1}) ,
\text{ where } |x_i - x_{i+1}| &\le v_{i+1} - v_i \text{ for all } i \in [n-1], \\
\text{ and } -1 &\le x_i - x_{n + 1} \le 1 \text{ for all } i \in [n].
\end{aligned}
\end{equation}
In other words, \eqref{eq:adjacent_smce_dummy} replaces each variable $x_i$ in \eqref{eq:adjacent_smce} with $x_i - x_{n + 1}$. 
Next, consider a directed graph $\tG = (V, \tE)$ with $4n - 2$ edges which duplicate the undirected edges described in the lemma statement in both directions. Let $\tmb \in \{-1, 0, 1\}^{\tE \times V}$ be the edge-vertex incidence matrix of $\tG$, and let $\tilde{c} \in \R^{\tE}$ be the edge cost vector so that both edges in $\tE$ corresponding to $e \in E$ have the same cost $c_e$. Then \eqref{eq:adjacent_smce_dummy} is equivalent to the linear program
\begin{align*}
\max_{\substack{x \in \R^{n + 1} \\ \tmb x \le \tilde{c}} } d^\top x,
\end{align*}
where $d$ is described as in the lemma statement. 
The dual of this linear program is
\begin{equation}\label{eq:mcf_tg}
\begin{aligned}
\min_{\substack{f \in \R^{\tE}_{\ge 0} \\ \tmb^\top f = d}} \tilde{c}^\top f,
\end{aligned}
\end{equation}
a minimum-cost flow problem on $\tG$. 
Next, for each pair of directed edges $(e', e'')$ in $\tG$ corresponding to some $e \in E$, note that an optimal solution to \eqref{eq:mcf_tg} will only put nonzero flow on one of $e'$ or $e''$, else we can achieve a smaller cost by canceling out redundant flow. Therefore, we can collapse each pair of directed edges into a single undirected edge, where we allow the flow variable $f$ to be negative but charge its magnitude $|f|$ in cost, proving equivalence of \eqref{eq:mcf_tg} and \eqref{eq:mcf_equiv} as claimed.
\end{proof}

\subsection{Dynamic programming}\label{ssec:Dynamic Programming}

In this section, we give our dynamic programming approach to solving \eqref{eq:mcf_equiv}. The graph $G$ in \eqref{eq:mcf_equiv} is the union of a path $P$ on $n$ vertices and a star $S$ to the $(n + 1)^{\text{th}}$ vertex. Let us identify the edges in $P$ with $[n - 1]$ (where edge $i$ corresponds to vertices $(i, i + 1)$), and the edges in $S$ with $[2n - 1] \setminus [n - 1]$.
We first make a simplifying observation, which is that given the coordinates of a flow variable $f \in \R^E$ on the edges in the path $P$, there is a unique way to set the values of $f$ on the edges in the star $S$ so that the demands $\mb^\top f = d$ are satisfied. Concretely, we require 
\begin{align*}
f_{n - 1 + i} &= d_i + f_i - f_{i - 1} \text{ for all } 2 \le i \le n - 1, \\
f_{n} &= d_1 + f_1,\text{ and } f_{2n - 1} = d_n - f_{n - 1}.
\end{align*}
Hence, minimizing the constrained problem in \eqref{eq:mcf_equiv} is equivalent to minimizing the following unconstrained problem on the first $n - 1$ flow variables,
\begin{equation}\label{eq:unconstrained}
\min_{f \in \R^{n - 1}} A(f) \defeq |d_1 + f_1| + |d_n - f_{n - 1}| + \sum_{i \in [n - 2]} |f_i - f_{i + 1} - d_{i + 1}| + \sum_{i \in [n - 1]} c_i |f_i|.
\end{equation}
We now solve \eqref{eq:unconstrained}. We first define a sequence of partial functions $\{A_j: \R \to \R\}_{j \in [n - 1]}$ by
\begin{equation}\label{eq:recursive_partial_def}
\begin{aligned}
A_1(z) &\defeq |d_1 + z| + c_1 |z|, \\
A_j(z) &\defeq \min_{\substack{f \in \R^{j} \\ f_j = z}} |d_1 + f_1| +  \sum_{i \in [j - 1]} |f_i - f_{i + 1} - d_{i + 1}| + \sum_{i \in [j - 1]} c_i |f_i| \text{ for all } 2 \le j \le n - 2, \\
A_{n - 1}(z) &\defeq \min_{\substack{f \in \R^{j} \\ f_j = z}} A(f).
\end{aligned}
\end{equation}
In other words, $A_j(z)$ asks to minimize the partial function in \eqref{eq:unconstrained} over the first $j$ flow variables $\{f_i\}_{i \in [j]}$, corresponding to all terms in which these flow variables participate, subject to fixing $f_j = z$. We make some preliminary observations about the partial functions $\Brace{A_j}_{j \in [n - 1]}$.

\begin{lemma}\label{lem:slope_vertex_rep}
For all $j \in [n - 2]$, $A_j$ is a convex, continuous, piecewise linear function with at most $j + 2$ pieces, and $A_{n - 1}$ is a convex, continuous, piecewise linear function with at most $n + 2$ pieces.
\end{lemma}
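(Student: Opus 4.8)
The plan is to prove Lemma~\ref{lem:slope_vertex_rep} by induction on $j$, tracking the structure of each $A_j$ as a convex piecewise linear function and bounding its number of breakpoints. First I would handle the base case $j = 1$ directly: $A_1(z) = |d_1 + z| + c_1|z|$ is a sum of two absolute-value functions, hence convex, continuous, and piecewise linear with at most $3$ pieces (breakpoints at $z = -d_1$ and $z = 0$), which matches the bound $j + 2 = 3$.

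For the inductive step, the key observation is the recursion relating $A_{j+1}$ to $A_j$. Unwinding the definition in \eqref{eq:recursive_partial_def}, for $2 \le j+1 \le n-2$ we have
\begin{equation*}
A_{j+1}(z) = c_{j+1}|z| + \min_{w \in \R}\Par{A_j(w) + |w - z - d_{j+1}|}.
\end{equation*}
So the plan is to argue that each of the three operations used to build $A_{j+1}$ from $A_j$ preserves convexity, continuity, and piecewise linearity, and to control how each affects the piece count: (i) the inf-convolution (equivalently, the min over $w$ of $A_j(w) + |w - (z + d_{j+1})|$) is the infimal convolution of $A_j$ with the function $u \mapsto |u|$; since $A_j$ is convex piecewise linear and $|u|$ has slopes in $\{-1, 1\}$, and $A_j$ itself — by the inductive hypothesis together with the fact that all slopes arising are bounded appropriately — has its extreme slopes outside or clamped by $[-1,1]$ in a way that the inf-convolution with $|u|$ only \emph{flattens} the function (clipping slopes to $[-1,1]$) without creating new breakpoints; then (ii) composing with the shift $z \mapsto z + d_{j+1}$ is just a translation and changes nothing; and (iii) adding $c_{j+1}|z|$ introduces at most one new breakpoint (at $z = 0$) and preserves convexity. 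Hence the piece count grows by at most one per step, giving at most $j + 2$ pieces for $A_j$. For the final function $A_{n-1}$, the definition additionally includes the term $|d_n - f_{n-1}|$, i.e.\ we also add $|d_n - z|$ to $A_{n-1}$ as built from the recursion on $A_{n-2}$; this adds at most one further breakpoint, yielding the bound $n + 2$ (note $A_{n-2}$ has at most $n$ pieces, the inf-convolution/shift step adds one, $c_{n-1}|z|$ adds one, and $|d_n - z|$ adds one, for $n + 2$ total — I would double-check this bookkeeping against the off-by-one in the claimed bound, reconciling with the fact that the $c_{n-1}|z|$ and $|d_n-z|$ terms may share the structure already counted).

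The main obstacle I expect is making precise the claim that inf-convolution with $|u|$ does not increase the number of pieces of a convex piecewise linear function. The clean way to see this is via the slope (subdifferential) representation: a convex piecewise linear function on $\R$ with $k$ pieces is determined by $k-1$ breakpoints and $k$ slopes $s_1 < s_2 < \cdots < s_k$; infimal convolution with $|u|$ corresponds, on the conjugate side, to \emph{adding} the indicator of $[-1,1]$, i.e.\ to restricting the domain of the Legendre conjugate to $[-1,1]$, which is the same as clamping the slopes of the primal function to lie in $[-1,1]$ (merging or deleting pieces whose slopes fall outside this range, and never creating new ones). I would state this as a short standalone sub-claim with a one-line conjugate-duality proof, or alternatively prove it by elementary direct manipulation of the piecewise linear pieces. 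Once this is in hand, the induction closes cleanly, since every other operation (adding $|d_1 + z|$-type terms at the base, translating, adding $c_i|z|$, adding $|d_n - z|$) is transparently piece-count-additive by at most one and convexity/continuity-preserving.
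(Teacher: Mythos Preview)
Your approach is correct and shares the same skeleton as the paper's: both set up the recursion $A_{j}(z) = c_j|z| + \min_w\bigl(|w - z - d_j| + A_{j-1}(w)\bigr)$, observe that the minimization step clips the slopes of $A_{j-1}$ to $[-1,1]$ (hence cannot increase the piece count), and then track that adding $c_j|z|$ (and, for $j = n-1$, also $|d_n - z|$) contributes at most one new breakpoint each.

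The execution of the key step differs. You treat $\min_w A_{j-1}(w) + |w - (z+d_j)|$ as an infimal convolution and invoke Legendre duality: since $(|\cdot|)^* = \iota_{[-1,1]}$, the conjugate of the inf-convolution is $A_{j-1}^*$ restricted to $[-1,1]$, whose biconjugate is $A_{j-1}$ with slopes clipped to $[-1,1]$. The paper instead does a direct case analysis: for $z + d_j$ in each inter-breakpoint interval $[v_i, v_{i+1}]$ of $A_{j-1}$, it explicitly determines the minimizing $w$ (either the new vertex $z + d_j$ when the local slope $t_i \in [-1,1]$, or a fixed old vertex otherwise) and reads off that the result is affine in $z$ on that interval. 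Your route is cleaner for the bare lemma statement; the paper's hands-on analysis has the payoff that it simultaneously produces the explicit slope/vertex update rules (their \eqref{eq:slope_update}, \eqref{eq:last_slope_update}), which are what actually drive the $O(n\log^2 n)$ algorithm in Proposition~\ref{prop:solve_dp}. If you go the conjugate route you will still need to extract those formulas separately.

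Two small cleanups: (i) your $A_{n-1}$ bookkeeping line ``the inf-convolution/shift step adds one'' contradicts your own argument---it adds zero, and the correct tally is $n + 0 + 1 + 1 = n+2$, which you do land on; (ii) for the conjugate argument to go through you need the slope range of $A_{j-1}$ to contain $[-1,1]$ (so that restricting $A_{j-1}^*$ to $[-1,1]$ is nondegenerate), which you allude to and which follows easily by induction from the base case $A_1$ having extreme slopes $\pm(1 + c_1)$.
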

\begin{proof}
We first establish convexity by induction; the base case $j = 1$ is clear. We next observe that
\begin{equation}\label{eq:recursive}
\begin{aligned}
A_j(z) &= c_j |z| + \min_{w \in \R} |w - z - d_j| + A_{j - 1}(w)\text{ for all } 2 \le j \le n - 2, \\
A_{n - 1}(z) &= |d_n - z| + c_{n - 1} |z| + \min_{w \in \R} |w - z - d_{n - 1}| + A_{n - 2}(w).
\end{aligned}
\end{equation}
In other words, each partial function $A_j$ can be recursively defined by first minimizing the first $j - 2$ flow variables for a fixed value $f_{j - 1} = w$, and then taking the optimal choice of $w$. Moreover, supposing inductively $A_{j - 1}$ is convex, $A_j$ is the sum of a convex function and a partial minimization over a jointly convex function of $(w, z)$, so it is also convex, completing the induction. 

To see that $A_j$ is continuous (assuming continuity of $A_{j - 1}$ inductively), it suffices to note $A_j$ is the sum of a continuous function and a partial minimization over a continuous function in two variables.

We now prove the claims about piecewise linearity, and the number of pieces. 
Clearly, $A_1$ is continuous and piecewise linear with at most $3$ pieces. Next, for some $2 \le j \le n - 2$, suppose $A_{j - 1}$ is piecewise linear with vertices $\{v_i\}_{i \in [j]}$ in nondecreasing order (possibly with repetition) and slopes $\{t_i\}_{i = 0}^j$, so $t_i$ is the slope of the segment of $A_{j - 1}$ between $v_i$ and $v_{i + 1}$, and $t_0$ and $t_j$ are the leftmost and rightmost slopes. For convenience we define $v_0 \defeq -\infty$ and $v_{j + 1} \defeq \infty$. Consider the function
\begin{equation}\label{eq:depend_on_w}
\min_{w \in \R} |w - z - d_j| + A_{j - 1}(w).
\end{equation}
For all values $z$ satisfying $v_i \le z + d_j \le v_{i + 1}$ where $0 \le i \le j$, the function $|w - z - d_j| + A_{j - 1}(w)$ is piecewise linear with vertices $v_1, \ldots, v_i, z + d_j, v_{i + 1}, \ldots,  v_j$, and correspondingly ordered slopes $t_0 - 1, t_1 - 1, \ldots, t_i - 1, t_i + 1, \ldots , t_j + 1$. The minimizing $w$ in \eqref{eq:depend_on_w} corresponds to any $v \in \{v_i\}_{i \in [j]} \cup \{z + d_j\}$ where the slope switches from nonpositive to nonnegative, which is either a fixed vertex $v = v_k$ for the entire range  $v_i \le z + d_j \le v_{i + 1}$, or the new vertex $z + d_j$ for this entire range. 

In the former case, we have
\begin{equation}\label{eq:clip_t}\min_{w \in \R} |w - z - d_j| + A_{j - 1}(w) = |v_k - z - d_j| + A_{j - 1}(v_k),\end{equation}
which up to a constant additive shift is $|z - (v_k - d_j)|$, a linear function in $z$ in the range $v_i \le z + d_j \le v_{i + 1}$ because the sign of $z - (v_k - d_j)$ does not change. In the latter case, we have
\begin{equation}\label{eq:noclip_t}\min_{w \in \R} |w - z - d_j| + A_{j - 1}(w) = A_{j - 1}(z + d_j),\end{equation}
which again is a linear function in $z$ in the range $v_i \le z + d_j \le v_{i + 1}$ by induction. In conclusion, \eqref{eq:depend_on_w} is linear in each range $z \in [v_i - d_j, v_{i + 1} - d_j]$, so it is piecewise linear with at most $j + 1$ pieces; adding $c_j|z|$, which introduces at most $1$ more piece, completes the induction. An analogous argument holds for $j = n - 1$, but we potentially introduce two more pieces due to adding $|d_n - z| + c_{n - 1}|z|$.

For convenience, we now describe how to update the slopes and vertices going from the piecewise linear function $A_{j - 1}$ to $A_j$.
Also, as above suppose the vertices of $A_{j - 1}$ are $\{v_i\}_{i \in [j]}$ and the corresponding slopes are $\{t_i\}_{i = 0}^j$. Then because we argued \eqref{eq:depend_on_w} is linear in each range $z \in [v_i - d_j, v_{i + 1} - d_j]$, it has vertices $\{v_i - d_j\}_{i \in [j]}$. If $z \in [v_i - d_j, v_{i + 1} - d_j]$ and $t_i \in [-1, 1]$, then we are in the case of \eqref{eq:noclip_t} and the corresponding slope in this range is $t_i$. Otherwise, if $t_i \le -1$ we are in the case of \eqref{eq:clip_t} with slope $-1$, and if $t_i \ge 1$ the slope of the piece is similarly $1$. We then add $c_j|z|$ (and $|d_n - z|$ if $j = n - 1$). In summary, the new slopes and vertices are as follows.
\begin{itemize}
    \item If $j \le n - 2$, the new vertices are $\{v_i - d_j\}_{i \in [j]} \cup \{0\}$. If $v_k - d_j \le 0 \le v_{k + 1} - d_j$ for some $0 \le k \le j$, using our convention $v_0 = -\infty$ and $v_{j + 1} = \infty$, the new slopes are 
    \begin{equation}\label{eq:slope_update}
    \begin{gathered}\{\clip_{[-1, 1]}(t_i) - c_j \}_{0 \le i < k } 
    \cup \{\clip_{[-1, 1]}(t_k) - c_j\} \\
    \cup \{\clip_{[-1, 1]}(t_k) + c_j\} \cup \{\clip_{[-1, 1]}(t_i) + c_j \}_{k < i \le j}.\end{gathered}
    \end{equation}
    \item If $j = n - 1$, the new vertices are $\{v_i - d_j\}_{i \in [j]} \cup \{0\} \cup \{d_n\}$. If $v_k - d_j \le 0 \le v_{k + 1} - d_j$ and $v_h - d_j \le d_n \le v_{h + 1} - d_n$ for some $0 \le h, k \le j$, the new slopes are
    \begin{equation}\label{eq:last_slope_update}
    \begin{gathered}\{\clip_{[-1, 1]}(t_i) - c_j \iota_{v_{i + 1} \le d_j} + c_j \iota_{v_i \ge d_j}\}_{\substack{0 \le i \le j \\ i \not\in \{h, k\}}} \\
    \cup \Brace{\clip_{[-1, 1]}(t_k) - c_j} \cup \Brace{\clip_{[-1, 1]}(t_k) + c_j} \\
    \cup \Brace{\clip_{[-1, 1]}(t_h) - 1} \cup \Brace{\clip_{[-1, 1]}(t_h) + 1},\end{gathered}
    \end{equation}
    where we let $\iota_{\event}$ denote the $0$-$1$ indicator variable of an event $\event$.
\end{itemize}
The ordering of these vertices and their slopes are uniquely determined, because they are sorted similarly due to convexity, which implies nondecreasing slopes as $z$ increases. Finally, we note that assuming the invariant that at least one $\{t_i\}_{i = 0}^j$ is nonnegative and at least one is nonpositive (which holds in the first iteration), in either of the cases \eqref{eq:slope_update} or \eqref{eq:last_slope_update} this invariant is preserved, since clipping preserves signs, the smallest slope decreases, and the largest slope increases.
\end{proof}

We require one additional property of the slope updates \eqref{eq:slope_update}.

\begin{lemma}\label{lem:lr_exist}
For $j \in [n - 1]$, let $\{t_i\}_{i = 0}^j$ be the nondecreasing slopes of $A_j$. Then $t_0 \le -1$ and $t_j \ge 1$.
\end{lemma}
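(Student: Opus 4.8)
The plan is to prove the claim by induction on $j$, using the recursive characterization \eqref{eq:recursive} of the partial functions together with two elementary facts: (i) every cost $c_j = v_{j+1} - v_j$ appearing in \eqref{eq:recursive} is nonnegative, since $v$ is sorted; and (ii) $\clip_{[-1,1]}$ sends any value $\le -1$ to exactly $-1$ and any value $\ge 1$ to exactly $1$. For the base case, when $n \ge 3$ the function $A_1(z) = |d_1 + z| + c_1 |z|$ is a sum of shifted absolute values with leftmost slopes $-1$ and $-c_1$ and rightmost slopes $+1$ and $+c_1$, so its leftmost slope is $-1 - c_1 \le -1$ and its rightmost slope is $1 + c_1 \ge 1$; when $n = 2$ one instead reads $A_1 = A_{n-1}$ off the last line of \eqref{eq:recursive_partial_def}, which only adds a further $|d_2 - z|$ summand and makes the bounds stronger, and the case $n = 1$ is vacuous.

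For the inductive step with $2 \le j \le n - 2$, write $A_j(z) = c_j |z| + h(z)$ where $h(z) = \min_{w \in \R} |w - z - d_j| + A_{j-1}(w)$ is precisely the function \eqref{eq:depend_on_w} analyzed in the proof of Lemma~\ref{lem:slope_vertex_rep}: that analysis shows $h$ is linear on each interval $[v_i - d_j, v_{i+1} - d_j]$ with slope $\clip_{[-1,1]}(t_i)$, where $\{t_i\}$ are the slopes of $A_{j-1}$ and $v_0 = -\infty$, $v_{\mathrm{last}+1} = \infty$ by convention. By the inductive hypothesis the leftmost slope of $A_{j-1}$ is $\le -1$ and its rightmost slope is $\ge 1$, so fact (ii) gives that $h$ has leftmost slope exactly $-1$ and rightmost slope exactly $1$. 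Adding $c_j |z|$, whose extreme slopes are $\mp c_j$, and invoking fact (i), we conclude that $A_j$ has leftmost slope $-1 - c_j \le -1$ and rightmost slope $1 + c_j \ge 1$. The case $j = n - 1$ is identical, except that $A_{n-1}(z) = |d_n - z| + c_{n-1}|z| + h(z)$ carries the extra summand $|d_n - z|$, which lowers the leftmost slope by a further $1$ and raises the rightmost slope by a further $1$, yielding the (stronger) bounds $-2 - c_{n-1} \le -1$ and $2 + c_{n-1} \ge 1$. Equivalently, one may read the same conclusion directly off the first and last entries of the slope lists \eqref{eq:slope_update} and \eqref{eq:last_slope_update}, namely $\clip_{[-1,1]}(t_0) - c_j$ and $\clip_{[-1,1]}(t_{\mathrm{last}}) + c_j$ (with an additional $\mp 1$ from the $|d_n - z|$ kink in the last step).

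There is essentially no substantive obstacle here: the lemma is a bookkeeping strengthening of the invariant ``at least one slope is nonnegative and at least one is nonpositive'' already maintained in Lemma~\ref{lem:slope_vertex_rep}. The only points requiring a little care are (a) correctly identifying the leftmost and rightmost slopes of $h$ from its interval-wise description, in particular treating the two unbounded end intervals, and (b) explicitly noting nonnegativity of the $c_j$ and of the coefficient $1$ on $|d_n - z|$, so that each modification of the function only pushes the extreme slopes further outside $[-1,1]$ rather than back inside.
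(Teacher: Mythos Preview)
Your proof is correct and follows essentially the same inductive approach as the paper: verify the base case $A_1$ has extreme slopes $-1-c_1$ and $1+c_1$, then use the slope-update description \eqref{eq:slope_update} (equivalently, the clip characterization of the infimal convolution $h$) together with $c_j \ge 0$ to propagate the bounds. Your treatment is in fact more careful than the paper's, explicitly handling the edge case $n=2$ and the final step $j=n-1$.
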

\begin{proof}
By observation, the smallest and largest slopes of $A_1$ \eqref{eq:recursive_partial_def} are $-1 - c_1$ and $1 + c_1$. Hence, assuming inductively the lemma statement is true for $A_{j - 1}$, the slope updates \eqref{eq:slope_update} result in smallest and largest slopes $-1 - c_{j}$ and $-1 + c_{j}$ in $A_{j}$, completing the induction.
\end{proof}

We next observe that, by storing a constant amount of information in each iteration, we can work backwards from an optimal solution to $A_{n - 1}$ and recover all flow variables which realized this value.

\begin{lemma}\label{lem:DP_unrolling}
Let $\{v_i\}_{i \in [j]}$ and $\{t_i\}_{i = 0}^j$ be the nondecreasing vertices and slopes of $A_{j - 1}$ for some $2 \le j \le n - 1$. Suppose we know $v_\ell$ and $v_r$ for $\ell, r \in [j]$, defined such that $t_{\ell - 1} \le - 1$ but $t_{\ell} \ge -1$, and similarly $t_{r - 1} \le 1$ but $t_r \ge 1$. Then given a value of $z$, we can compute in $O(1)$ time
\[\argmin_{w \in \R} |w - z - d_j| + A_{j - 1}(w).\]
\end{lemma}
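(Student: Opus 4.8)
The plan is to use the explicit description of the piecewise-linear convex function $A_{j-1}$ in terms of its sorted vertices $\{v_i\}_{i \in [j]}$ and slopes $\{t_i\}_{i=0}^j$, together with the structural analysis already carried out in the proof of Lemma~\ref{lem:slope_vertex_rep}. Recall from that proof that for a fixed $z$, the function $w \mapsto |w - z - d_j| + A_{j-1}(w)$ is convex and piecewise linear, obtained from $A_{j-1}$ by inserting a new vertex at $w = z + d_j$ and shifting every slope lying to the left of it down by $1$ and every slope to the right of it up by $1$. Its minimizer is the location where the slope sequence crosses from nonpositive to nonnegative. So the task reduces to: given $z$, identify this crossing point in $O(1)$ time.

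The key observation is that the stored indices $\ell$ and $r$ — characterized by $t_{\ell-1} \le -1 \le t_\ell$ and $t_{r-1} \le 1 \le t_r$ — are exactly what pin down this crossing, once we know how $z + d_j$ sits relative to $v_\ell$ and $v_r$. First I would split into three cases according to the position of $z + d_j$:
\begin{itemize}
\item If $z + d_j \le v_\ell$: every original vertex $v_i$ with $i \ge \ell$ lies to the right of the new vertex, so its slope becomes $t_i + 1 \ge 0$, while the new vertex $z+d_j$ has slopes $t_{\ell-1} \mp 1 \le -1 \mp 1$ on its immediate left and right... more carefully, the crossing happens at $v_\ell$ (the slope to its left is $t_{\ell-1}+1$, which could be anything $\le 0$ since $t_{\ell-1}\le -1$, and the slope to its right is $t_\ell + 1 \ge 0$), unless $z+d_j$ itself sits between $v_{\ell-1}$ and $v_\ell$ in a way that makes it the crossing point — but since $t_{\ell-1} \le -1$, inserting $z+d_j$ there gives it left-slope $t_{\ell-1} - 1 < 0$ and right-slope $t_{\ell-1}+1 \le 0$, so the crossing is still at $v_\ell$. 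Hence the minimizer is $w = v_\ell$.
\item Symmetrically, if $z + d_j \ge v_r$: the minimizer is $w = v_r$.
\item If $v_\ell \le z + d_j \le v_r$: then for the inserted vertex, the slope immediately to its left is (the clipped-down version of) a slope $t_i$ with $-1 \le t_i \le 1$ coming from a vertex $v_i \in [v_\ell, v_r]$, shifted down by $1$, hence $\le 0$; and the slope immediately to its right is $t_i + 1 \ge 0$. So the new vertex $z + d_j$ is itself the crossing point, and the minimizer is $w = z + d_j$.
\end{itemize}
In all three cases the answer is $\clip_{[v_\ell, v_r]}(z + d_j)$, computable in $O(1)$ time from the two stored numbers $v_\ell, v_r$ and the input $z$.

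The main obstacle — really the only thing that needs care — is the boundary bookkeeping in the first two cases: verifying that even when the newly inserted vertex $z + d_j$ falls strictly to the left of $v_\ell$ (equivalently, inside some interval $[v_{i}, v_{i+1}]$ with $i + 1 \le \ell$), it does not become the slope-crossing point, so that the minimizer remains the fixed vertex $v_\ell$. This is exactly the dichotomy between equations~\eqref{eq:clip_t} and~\eqref{eq:noclip_t} in the proof of Lemma~\ref{lem:slope_vertex_rep}, and it follows because all original slopes at or left of $v_\ell$ are $\le -1$ (by definition of $\ell$ together with Lemma~\ref{lem:lr_exist}, which guarantees $\ell, r$ are well-defined with $t_0 \le -1 \le t_j$), so even after subtracting $1$ near the inserted vertex and adding $1$ to the right portion, the sign change cannot occur before $v_\ell$. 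Writing this out carefully, together with the symmetric statement at $v_r$ and the observation that $v_\ell \le v_r$ always (since $-1 \le 1$ and slopes are nondecreasing), completes the argument. The $O(1)$ running time is then immediate since the output is a single $\clip$ evaluation.
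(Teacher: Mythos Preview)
Your proposal is correct and follows essentially the same approach as the paper: both split into three cases according to whether $z+d_j$ lies below $v_\ell$, above $v_r$, or in between, and identify the minimizer as $\clip_{[v_\ell,v_r]}(z+d_j)$ by tracking where the shifted slope sequence changes sign. Your treatment of the boundary bookkeeping (the dichotomy between~\eqref{eq:clip_t} and~\eqref{eq:noclip_t}) is, if anything, slightly more explicit than the paper's.
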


\begin{proof}
Note that existence of $v_\ell, v_r$ is guaranteed by Lemma~\ref{lem:lr_exist}. We consider three cases.

First, if $z + d_j \in [v_\ell, v_r]$, we claim $w = z + d_j$. To see this, recall that if $z + d_j \in [v_i, v_{i + 1}]$, we proved in Lemma~\ref{lem:slope_vertex_rep} that the slopes of $|w - z - d_j| + A_{j - 1}(w)$ (in $w$) are $t_0 - 1, t_1 - 1, \ldots, t_i - 1, t_i + 1, \ldots, t_j + 1$. Hence, the assumptions imply $t_i \in [-1, 1]$, so a vertex of $|w - z - d_j| + A_{j - 1}(w)$ where the slope changes from nonpositive to nonnegative (i.e.\ the minimizing argument $w$) is $w = z + d_j$, as claimed.

To handle the other two cases, if $z + d_j \le v_\ell$, then the above calculation shows the new optimal vertex is $w = v_\ell$; similarly, if $z + d_j \ge v_r$ the new optimal vertex is $w = v_r$.
\end{proof}

We now describe an interface for a data structure that we use to efficiently implement the updates~\eqref{eq:slope_update} and~\eqref{eq:last_slope_update}, whose existence we prove in the following Section~\ref{ssec:segment tree}.

\begin{lemma}\label{lem:specific_segtree}
There is a data structure, $\SegTree$, which initializes a vector $t \in \R^n$ to $t \gets \0_n$, and supports the following operations each in $O(\log n)$ time.
\begin{itemize}
    \item $\Query(i)$: Return $t_i$.
    \item $\Add(\ell, r, c)$: Update $t_i \gets t_i + c$ for all $\ell \le i \le r$.
    \item $\Set(\ell, r, c)$: Update $t_i \gets c$ for all $\ell \le i \le r$.
\end{itemize}
\end{lemma}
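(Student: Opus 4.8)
The plan is to build $\SegTree$ as a balanced binary tree over the index range $[n]$, with each node owning a contiguous subinterval, and to use the standard technique of \emph{lazy propagation} to support the two range-update operations $\Add$ and $\Set$ in $O(\log n)$ amortized (in fact worst-case, since each operation touches only $O(\log n)$ nodes) time. Each leaf $i$ conceptually stores $t_i$, but we never materialize all leaves; instead, at each internal node we keep two pieces of lazy state: a pending ``set'' value (or a flag indicating none) and a pending ``add'' value. Since the only query we need is a point query $\Query(i)$, we do not even need to maintain aggregate statistics (sums, mins) at internal nodes — this simplifies the data structure considerably relative to a general segment tree.

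First I would specify the node structure: a node covering $[\ell_0, r_0]$ stores $\mathsf{setval} \in \R \cup \{\bot\}$ and $\mathsf{addval} \in \R$, with the semantics that the true value of any leaf $i \in [\ell_0, r_0]$ is obtained by: starting from the value recorded at the deepest ancestor whose $\mathsf{setval} \ne \bot$ on the path to $i$ (or from $0$ if none), then adding the sum of $\mathsf{addval}$ over all nodes on the path from that ancestor down to $i$. The key consistency rule is that when we push a pending $\Set$ into a node, it overrides and clears that node's pending $\Add$; when we push a pending $\Add$ into a node that has a pending $\Set$, we instead fold the add into the set value. I would then define a $\textsf{push}$ routine that, before recursing into a node's children, propagates that node's $\mathsf{setval}$ and $\mathsf{addval}$ down one level (applying the override/fold rules) and resets the node's own lazy state; this is $O(1)$ per node.

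Next I would implement $\Add(\ell, r, c)$ and $\Set(\ell, r, c)$ by the canonical recursive decomposition: recurse from the root, and at each node either (i) if the node's interval is disjoint from $[\ell, r]$, return; (ii) if it is contained in $[\ell, r]$, update its lazy state in place ($\mathsf{addval} \mathrel{+}= c$ for $\Add$, or $\mathsf{setval} \gets c, \mathsf{addval} \gets 0$ for $\Set$) and return; (iii) otherwise $\textsf{push}$ and recurse into both children. A standard counting argument shows that on any balanced tree the set of nodes visited in case (iii) forms $O(\log n)$ nodes, and case (ii) is triggered on $O(\log n)$ nodes, so each update touches $O(\log n)$ nodes at $O(1)$ work each. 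For $\Query(i)$, I would simply walk root-to-leaf, at each step $\textsf{push}$-ing, and read off the leaf value. Initialization of $t \gets \0_n$ is immediate since every node starts with $\mathsf{setval} = \bot$, $\mathsf{addval} = 0$.

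The main obstacle — and the part deserving careful writing — is proving the \emph{invariant} that at all times the lazy state correctly encodes the vector $t$ that the abstract specification demands, across an arbitrary interleaving of $\Add$, $\Set$, and $\Query$ calls. The subtlety is entirely in the interaction of the two update types: a later $\Set$ on an interval must annihilate earlier $\Add$s confined to that interval, but must coexist with $\Add$s that were applied to strictly larger (ancestor) intervals, and conversely an $\Add$ applied after a $\Set$ must survive a subsequent re-read. I would discharge this by induction on the number of operations, maintaining the invariant: ``for every node $u$ and every leaf $i$ in $u$'s subtree, the value $t_i$ equals $s_u + \sum_{w} a_w$ where the sum is over nodes $w$ strictly below (or at) $u$ on the root-to-$i$ path, $a_w = \mathsf{addval}(w)$, and $s_u$ is the value contributed by the nearest node at-or-above $u$ with $\mathsf{setval}\ne\bot$ (its $\mathsf{setval}$ plus adds below it down to $u$), or $0$ if no such node''; and showing $\textsf{push}$, the in-place update in case (ii), and the point read each preserve it. Once the invariant is in place, the running-time claims are the routine segment-tree counting argument, and the lemma follows.
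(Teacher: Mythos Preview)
Your proposal is correct and would prove the lemma. It is the standard direct implementation of a lazy segment tree with two tag types, and your invariant and $\textsf{push}$ semantics correctly handle the $\Add$/$\Set$ interaction.

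The paper takes a somewhat different, more abstract route. Rather than hard-coding two lazy tags and reasoning about their interaction via override/fold rules in $\textsf{push}$, the paper first proves a general segment-tree lemma for an arbitrary semigroup $G$ with identity (supporting $\Access(i)$ and $\Apply(g,\ell,r)$, where each node stores a single element of $G$ and $\Apply$ left-multiplies by $g$), and then instantiates $G$ as the semigroup $\{\add_c, \sset_c : c \in \R\}$ under function composition. The $\Add$/$\Set$ interaction that you handle with case analysis in $\textsf{push}$ and a two-part invariant is, in the paper's framing, just the verification that this set of functions is closed under composition (four one-line identities). Correctness of the general lemma is then a short induction showing that the product of the stored elements along the root-to-leaf path equals the intended composite update; $\Query(i)$ reads this product and evaluates it at $0$. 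Your approach is more concrete and self-contained; the paper's buys a cleaner correctness argument (a single semigroup element per node, no $\bot$ flag, no override/fold case split) at the cost of one extra layer of abstraction.
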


Given access to $\SegTree$, we now describe how to solve \eqref{eq:unconstrained} in nearly-linear time.

\begin{proposition}\label{prop:solve_dp}
There is an algorithm which computes a minimizer $f \in \R^{n - 1}$ to $A$ in \eqref{eq:unconstrained}, as well as the minimizing value $A(f)$, in time $O(n \log^2(n))$.
\end{proposition}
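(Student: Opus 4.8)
The plan is to maintain the convex piecewise-linear partial function $A_{j}$ throughout the iterations $j = 1, \ldots, n-1$ using the vertex/slope representation described in Lemma~\ref{lem:slope_vertex_rep}, implementing each update~\eqref{eq:slope_update} and~\eqref{eq:last_slope_update} in $O(\log n)$ amortized time via the $\SegTree$ data structure of Lemma~\ref{lem:specific_segtree}. The key realization is that, comparing the slopes of $A_{j-1}$ to those of $A_{j}$, the update does three things: (i) it clips every slope into $[-1,1]$ (equivalently, overwrites a prefix of slopes with $-1$ and a suffix with $+1$, using the monotonicity of slopes guaranteed by convexity plus the endpoint bounds of Lemma~\ref{lem:lr_exist}), (ii) it shifts the vertex abscissae by $-d_j$ and inserts a new vertex at $0$ (and at $d_n$ when $j=n-1$), and (iii) it adds $\pm c_j$ (and $\pm 1$ for $j=n-1$) to a prefix/suffix split at the newly inserted vertex. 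I would store the slopes in a $\SegTree$ indexed by a fixed enough coordinate set, so that the clipping is two $\Set$ calls, the $\pm c_j$ addition is two $\Add$ calls, and the shift of abscissae by $-d_j$ is tracked by a single running offset accumulator rather than touching the structure.

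The first step is to preprocess: since we only ever insert vertices at $0$ and (once) at $d_n$, shifted cumulatively by the partial sums of the $d_j$'s, the entire set of vertex abscissae that can ever appear is known in advance — it is $\{ -\sum_{k=j}^{m} d_k : \text{appropriate } j,m\} \cup \{\text{shifted } d_n\}$, a set of size $O(n)$. Sorting these gives a static index array on which the $\SegTree$ operates, so "insert a vertex at $0$" becomes "activate a known slot," and finding the index corresponding to the current shifted $0$ (needed to know where the prefix/suffix split for the $\pm c_j$ update falls) is an $O(\log n)$ binary search. To locate the clipping boundaries $v_\ell, v_r$ (the last slope $\le -1$ and the first slope $\ge 1$), I note these can be found by binary search over the $\SegTree$ since slopes are monotone; I would augment the $\SegTree$ from Lemma~\ref{lem:specific_segtree} with the ability to answer "leftmost index whose value is $\ge \tau$" in $O(\log n)$ time, which is standard for such range-update structures. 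Thus each of the $n-1$ iterations costs $O(\log n)$, giving $O(n\log n)$ to compute the slopes of $A_{n-1}$; the minimum value $A(f)$ is then recovered by finding the vertex where the slope crosses zero (another binary search) and evaluating $A_{n-1}$ there by accumulating $\sum$ of (slope $\times$ segment length) from one end, which can be done once in $O(n)$ or maintained incrementally.

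To recover the minimizing flow $f \in \R^{n-1}$ itself, I would store, for each $j$, the two vertices $v_\ell, v_r$ from Lemma~\ref{lem:DP_unrolling} (a constant amount of data per iteration, so $O(n)$ total), then unroll backwards: the optimal $f_{n-1} = z^\star$ is the zero-slope vertex of $A_{n-1}$, and given $f_j = z$, Lemma~\ref{lem:DP_unrolling} computes $f_{j-1} = \argmin_w |w - z - d_j| + A_{j-1}(w)$ in $O(1)$ time using the stored $v_\ell, v_r$. This backward pass is $O(n)$. The remaining star-edge flow values are then determined uniquely by the demand equations stated before~\eqref{eq:unconstrained}, computable in $O(n)$. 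Summing, the total runtime is $O(n\log n)$, which is within the claimed $O(n\log^2 n)$ (the extra log presumably absorbs a weaker $O(\log^2 n)$-per-operation $\SegTree$ implementation or the cost of sorting with comparisons involving the $v_i$'s).

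The main obstacle I anticipate is bookkeeping the interaction between the global abscissa shift (the running $-d_j$ offsets) and the fixed index set of the $\SegTree$: one must argue carefully that the split point for the $\pm c_j$ update — the current image of $0$ under all accumulated shifts — always lands at a precomputed slot (or between two consecutive slots, in which case a new slot is "activated"), and that the clipping $\Set$ operations correctly handle slots that are not yet active. A clean way to sidestep this is to observe that the identity of the "new vertex at $0$" after shifts is exactly one of the precomputed values $-\sum_{k \le j} d_k$ rescaled, so the index set can be defined to contain precisely these $n$ points plus the one $d_n$-type point, and activation is monotone (once a slot is active it stays active). Verifying the slope-update formulas~\eqref{eq:slope_update}–\eqref{eq:last_slope_update} translate exactly into the claimed $O(1)$ many $\Set$/$\Add$ calls is the crux; the rest is routine amortized-analysis accounting.
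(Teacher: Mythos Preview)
Your proposal is correct and follows essentially the same approach as the paper: precompute and sort the $O(n)$ vertex abscissae (the paper writes them explicitly as $\{-\sum_{k=j+1}^{n-2} d_k\}_{j=0}^{n-2}$ and tracks the resulting permutation $\pi$), maintain the slopes in a $\SegTree$ over this fixed index set, implement each update~\eqref{eq:slope_update} via two $\Set$ calls (clipping) and two $\Add$ calls ($\pm c_j$), store $(v_\ell,v_r)$ each iteration, and unroll backwards via Lemma~\ref{lem:DP_unrolling}. The one implementation difference is that the paper locates the clipping boundaries $\ell,r$ by an external binary search using $O(\log n)$ calls to $\Query$ (each $O(\log n)$), yielding $O(\log^2 n)$ per iteration and hence the stated $O(n\log^2 n)$ bound, whereas you propose augmenting the $\SegTree$ with a threshold-search primitive to get $O(\log n)$ per iteration; your guess about the source of the extra $\log$ factor is exactly right.
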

\begin{proof}
We first describe how to compute all of the vertices and slopes $\{v_i\}_{i \in [n - 1]}$, $\{t_i\}_{i = 0}^{n - 1}$ of $A_{n - 2}$ in time $O(n \log^2(n))$. The proof of Lemma~\ref{lem:slope_vertex_rep} shows that the vertices are 
\begin{equation}\label{eq:unsorted_verts}\Brace{-\sum_{k = j + 1}^{n - 2} d_k}_{j = 0}^{n - 2},\end{equation}
where we treat the empty sum as $0$. Moreover, the vertex $-\sum_{k = j+1}^{n - 2} d_k$ is the new vertex which was inserted (as $0$) when computing $A_j$, and then advanced through the remaining iterations. We sort the vertices \eqref{eq:unsorted_verts} into nondecreasing order, keeping track of the resulting permutation $\pi: [n - 2] \cup \{0\} \to [n - 1]$, i.e.\ if the vertex $-\sum_{k = j+1}^{n - 2} d_k$ is the $i^{\text{th}}$ smallest in \eqref{eq:unsorted_verts}, then $\pi(j) = i$. This step takes time $O(n\log n)$ and does not dominate.

We next initialize a $\SegTree$ (Lemma~\ref{lem:specific_segtree}) with $n$ vertices. We update it through the first $n - 2$ recursive computations of slopes, via \eqref{eq:slope_update}, keeping track of a time counter $j$, i.e.\ after we are done updating the slopes of $A_j$ the time counter increments. The state of $\SegTree$ at the end of time $j$ is as follows. For all $k \in [j]$, letting $\pi(h)$ be the next largest value after $\pi(k)$ amongst $\{\pi(k')\}_{k' \in [j]}$, we require that $t_i$ equals the slope of the segment to the right of the vertex inserted at time $k$ in $A_j$ for all the coordinates $\pi(k) + 1 \le i \le \pi(h)$. In other words, $\SegTree$ stores all of the slopes of $A_j$ in its coordinates (with redundancies due to vertices which will be inserted after time $j$), and $\pi$ maps the times vertices are inserted to their coordinate values in $\SegTree$. 

We next show how to maintain this state in $O(\log^2(n))$ time per time increment. In iteration $j$, the update \eqref{eq:slope_update} requires us to clip all previous slopes to the range $[-1, 1]$, subtract $c_j$ from all slopes in the range $[1, \pi(j)]$, and add $c_j$ to all slopes in the range $[\pi(j) + 1, n]$. We first use $\Query$ to perform binary searches for the values $\ell, r$ as defined in Lemma~\ref{lem:lr_exist}. We then sequentially apply 
\[\Set(1, \ell, -1),\; \Set(r + 1, n, 1),\; \Add(1, \pi(j), -c_j),\; \Add(\pi(j) + 1, n, c_j).\]
The dominant runtime term is the cost of $O(\log(n))$ calls to $\Query$ to perform the binary search, giving the claimed $O(\log^2(n))$ runtime per time increment. We can now call $\Query$ $n$ times to compute all slopes and vertices of $A_{n - 2}$. We will also store the values of $v_\ell$ and $v_r$ at time $j$, which takes $O(1)$ time given $\ell, r, \pi$, and partial sums which can be precomputed in time $O(n)$.

Given these slopes and vertices, it is straightforward to apply \eqref{eq:last_slope_update} to compute all slopes and vertices of $A_{n - 1}$ in time $O(n)$, at which point we can find $z \defeq \argmin_{z \in \R} A_{n - 1}(z)$. Next, by using our stored values of $v_\ell$ and $v_r$ in every iteration, we can then use Lemma~\ref{lem:DP_unrolling} to compute all the optimal flow values realizing $A_{n - 1}(z)$. Finally, we can compute the optimal value \eqref{eq:unconstrained} in time $O(n)$.
\end{proof}

We complete this section by stating a guarantee for computing $\smce(\hcD_n)$ in \eqref{eq:smce_empirical}.

\begin{corollary}\label{cor:compute_smce_empirical}
There is an algorithm which computes the value of \eqref{eq:smce_empirical} in time $O(n\log^2(n))$.
\end{corollary}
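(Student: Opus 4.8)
The plan is to reduce Corollary~\ref{cor:compute_smce_empirical} to the work already done in Proposition~\ref{prop:solve_dp} via the equivalences established in Section~\ref{ssec:min-cost flow}. Concretely, starting from an input dataset $\hcD_n = \{(v_i, y_i)\}_{i \in [n]}$, I would first sort the $v_i$'s in $O(n \log n)$ time so that $0 \le v_1 \le \dots \le v_n \le 1$, which is the standing assumption under which all of Section~\ref{sec:smooth} operates, and compute the cost vector $c \in \R^E$ and demand vector $d \in \R^{n+1}$ described in Lemma~\ref{lem:min-cost} in $O(n)$ time (the $c$ entries are consecutive gaps $v_{i+1}-v_i$ and $1$'s, and $d$ concatenates $-b$ with $\sum_i b_i$ where $b_i = \frac1n(y_i - v_i)$). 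By Lemma~\ref{lem:min-cost}, the value of the empirical smooth calibration linear program \eqref{eq:smce_empirical} equals the value of the min-cost flow problem \eqref{eq:mcf_equiv} on the path-plus-star graph $G$, and by the reduction in Section~\ref{ssec:Dynamic Programming} this in turn equals $\min_{f \in \R^{n-1}} A(f)$ from \eqref{eq:unconstrained}, since fixing the $n-1$ path-flow variables determines the star-flow variables uniquely through the stated demand equations.

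Given these reductions, the corollary follows by simply invoking Proposition~\ref{prop:solve_dp}, which computes both a minimizer $f$ and the optimal value $A(f)$ in time $O(n \log^2 n)$; the value $A(f)$ is exactly $\smce(\hcD_n)$. The additive $O(n \log n)$ sorting cost and $O(n)$ setup cost are dominated by the $O(n \log^2 n)$ term, so the total runtime is $O(n \log^2 n)$ as claimed. I should also note that Proposition~\ref{prop:solve_dp} assumes the $\SegTree$ data structure of Lemma~\ref{lem:specific_segtree}, whose implementation is deferred to Section~\ref{ssec:segment tree}; for the purposes of this corollary I treat it as a black box, consistent with the paper's structure.

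There is essentially no mathematical obstacle here — the corollary is a bookkeeping statement that stitches together Lemmas~\ref{lem:zeroth_suffice}, \ref{lem:min-cost} and Proposition~\ref{prop:solve_dp}. The only points that warrant a sentence of care are: (i) confirming the chain of exact equalities (no approximation is lost in any step, so the output is the exact value of \eqref{eq:smce_empirical}, matching the ``exact dynamic programming'' claim in the introduction); and (ii) checking that the preprocessing to put the problem in the canonical form required by Proposition~\ref{prop:solve_dp} — sorting, forming $c$ and $d$ — is genuinely within the stated budget. I would write the proof as a short paragraph: sort the samples, build the min-cost flow instance of Lemma~\ref{lem:min-cost}, apply the path/star split to get \eqref{eq:unconstrained}, and then cite Proposition~\ref{prop:solve_dp} for the $O(n\log^2 n)$ solve, concluding that the returned value equals $\smce(\hcD_n)$.
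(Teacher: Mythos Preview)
Your proposal is correct and follows essentially the same approach as the paper: the paper's proof is a one-line citation of Lemma~\ref{lem:min-cost}, the equivalence between \eqref{eq:mcf_equiv} and \eqref{eq:unconstrained}, and Proposition~\ref{prop:solve_dp}. Your additional remarks about the $O(n\log n)$ sorting cost and $O(n)$ setup for $c$ and $d$ are sound and indeed dominated by the $O(n\log^2 n)$ term, so nothing is missing.
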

\begin{proof}
This is immediate from Lemma~\ref{lem:min-cost}, the equivalence between the constrained problem \eqref{eq:mcf_equiv} and the unconstrained problem \eqref{eq:unconstrained}, and Proposition~\ref{prop:solve_dp}.
\end{proof}

\subsection{Implementation of $\SegTree$}\label{ssec:segment tree}
In this section, we develop a data structure known as a \emph{segment tree} which plays a vital role in our main algorithm. In particular, it allows us to prove \Cref{lem:specific_segtree}. While this data structure is well-known folklore in the competitive programming community (see e.g.\ an overview of this technique in \cite{QiM22}), we provide a full description and proof for completeness.

For integers $\ell$ and $r$ satisfying $\ell \le r$, we use $[\ell:r]$ to denote the set $\{\ell,\ell + 1,\ldots,r\}$.

\begin{lemma}[Segment tree]
\label{lem:data_structure}
Let $G$ be a semigroup with an identity element $e$, where the semigroup product of $a,b\in G$ is denoted by $a\cdot b$ or $ab$ and is not necessarily commutative. Let $v$ be an array of length $n$, where each element of $v$ is initialized to be the identity element $e$ of $G$.
%
There is a data structure $\mathcal{D}$, called a \emph{segment tree}, that can perform each of the following operations in $O(\log n)$ time (assuming a semigroup product can be computed in constant time).
    \begin{enumerate} 
        \item $\Access(i)$: given $i\in [1:n]$, return the $i^{\text{th}}$ element in $v$. 
        \item $\Apply(g, \ell, r)$: given $\ell,r\in [1:n]$ satisfying $\ell \le r$, and given a semigroup element $g\in G$, for each index $i \in [\ell: r]$, replace $v[i]$ with $g\cdot v[i]$.
    \end{enumerate}
\end{lemma}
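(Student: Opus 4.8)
\textbf{Proof plan for Lemma~\ref{lem:data_structure} (segment tree).}

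The plan is to build a balanced binary tree over the index range $[1:n]$, where each leaf corresponds to a single index $i \in [1:n]$ and each internal node corresponds to a contiguous subrange, obtained by splitting the parent's range at its midpoint. Thus the tree has height $O(\log n)$, and any query range $[\ell:r]$ decomposes into $O(\log n)$ ``canonical'' node-ranges (the standard segment-tree decomposition: descend from the root, and whenever a node's range is entirely inside $[\ell:r]$ stop and record it, otherwise recurse into whichever of its two children overlaps $[\ell:r]$). At each node $u$ I will store a single ``pending'' semigroup element $\mathrm{lazy}(u) \in G$, initialized to the identity $e$, with the invariant that the true value of $v[i]$ for a leaf $i$ equals the left-to-right product $\mathrm{lazy}(u_0)\cdot\mathrm{lazy}(u_1)\cdots\mathrm{lazy}(u_k)$ of the lazy tags along the root-to-leaf path $u_0 = \text{root}, \ldots, u_k = \text{leaf } i$, \emph{read from the root downward}. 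The root-downward ordering is essential because $G$ need not be commutative, and $\Apply(g,\cdot,\cdot)$ left-multiplies.

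The two operations are implemented as follows. For $\Apply(g,\ell,r)$: perform the canonical decomposition of $[\ell:r]$ into $O(\log n)$ nodes; but before that, I need to be careful that pushing a tag onto a node whose range is only \emph{partially} covered would corrupt descendants. The clean fix is ``push-down'': as the recursion descends through a node $u$ that it does not stop at, first push $\mathrm{lazy}(u)$ down to both children — i.e.\ set $\mathrm{lazy}(\text{child}) \gets \mathrm{lazy}(\text{child})$ \emph{pre}-multiplied appropriately; concretely, since the child's tag sits \emph{below} $u$'s tag on the path, pushing means updating $\mathrm{lazy}(c) \gets \mathrm{lazy}(u)\cdot\mathrm{lazy}(c)$ for each child $c$, then resetting $\mathrm{lazy}(u) \gets e$. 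After push-down, $u$'s tag is the identity and it is safe to recurse or to stop. When the recursion stops at a fully-covered node $u$, update $\mathrm{lazy}(u) \gets g\cdot\mathrm{lazy}(u)$. For $\Access(i)$: descend from the root to leaf $i$, performing the same push-down at every node along the way (or, equivalently without push-down, accumulate the product of tags top-down and return it). Each operation touches $O(\log n)$ nodes and does $O(1)$ semigroup products per node, giving the claimed $O(\log n)$ time.

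The main thing to verify carefully — and the step I expect to be the only real subtlety — is that the path-product invariant is preserved by every push-down and every tag update, using associativity of the semigroup product but \emph{not} commutativity. For push-down at $u$ with children $c$: before, a leaf $i$ below $c$ has value $(\text{product of tags strictly above }u)\cdot\mathrm{lazy}(u)\cdot\mathrm{lazy}(c)\cdot(\text{product of tags strictly below }c)$; after setting $\mathrm{lazy}(c)\gets\mathrm{lazy}(u)\cdot\mathrm{lazy}(c)$ and $\mathrm{lazy}(u)\gets e$, the product is unchanged by associativity, and leaves not below $u$ are untouched. For a tag update $\mathrm{lazy}(u)\gets g\cdot\mathrm{lazy}(u)$ applied to a node $u$ whose entire range lies in $[\ell:r]$: every leaf below $u$ gets its value left-multiplied by $g$, and no other leaf is affected — which is exactly the specification of $\Apply(g,\ell,r)$ restricted to $u$'s range. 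Since the canonical decomposition partitions $[\ell:r]$ exactly into such nodes (disjoint ranges whose union is $[\ell:r]$), the net effect over all stops is to left-multiply $v[i]$ by $g$ for precisely the $i\in[\ell:r]$. The identity element $e$ is used only to initialize tags and as the reset value after push-down; if $G$ lacks an identity one could instead store an ``empty/full'' flag, but the lemma grants us $e$, so the above suffices. Finally, $\SegTree$ of Lemma~\ref{lem:specific_segtree} follows by instantiating $G$ as the monoid of affine maps $x\mapsto ax+b$ on $\R$ under composition (with $a\in\{0,1\}$): $\Add(\ell,r,c)$ applies $x\mapsto x+c$, $\Set(\ell,r,c)$ applies the constant map $x\mapsto c$, and $\Query(i)=\Access(i)$ evaluated at $t_i$; composition is computable in $O(1)$, so each operation runs in $O(\log n)$ time.
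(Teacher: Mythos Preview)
Your proposal is correct and follows essentially the same approach as the paper: a balanced binary tree with a lazy semigroup tag $g_\tau$ at each node, the invariant that $v[i]$ equals the root-to-leaf product of tags, push-down (replacing $g_c \gets g_\tau \cdot g_c$, $g_\tau \gets e$) at every node the $\Apply$ recursion passes through, and a stop-and-left-multiply at fully covered nodes; $\Access$ is exactly the path product. The paper's correctness and efficiency arguments (its Lemmas~\ref{lem:correctness_ds} and~\ref{lem:efficiency_ds}) match your invariant-preservation and canonical-decomposition sketch, and your closing remark deriving Lemma~\ref{lem:specific_segtree} via a monoid of affine maps is the same specialization the paper carries out.
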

Before proving \Cref{lem:data_structure}, we first use it to prove \Cref{lem:specific_segtree}.
\begin{proof}[Proof of \Cref{lem:specific_segtree}]
We apply the data structure in \Cref{lem:data_structure} to a specific semigroup $G$ defined as follows. The elements of $G$ are functions $\tau:\R\to \R$, where the identity element $e$ is the identity function $e(u) = u$ for every $u\in \R$, and the semigroup product is defined as function composition: $(a\cdot b)(u) = a(b(u))$ for every $a,b\in G$ and $u\in \R$. The semigroup $G$ consists of the following functions: $\add_c$ and $\sset_c$ for every $c\in \R$. These functions are defined as follows:
\[
\add_c(u) = u + c, \quad \sset_c(u) = c, \quad \text{for every }u\in \R.
\]
It is easy to check that these functions are closed under composition:
\begin{align*}
\add_c \cdot \add_{c'} & = \add_{c + c'},\\
\sset_c \cdot \sset_{c'} & = \sset_{c},\\
\add_c \cdot \sset_{c'} & = \sset_{c + c'},\\
\sset_c \cdot \add_{c'} & = \sset_{c}.
\end{align*}
Therefore, $G$ is a valid semigroup. We can now implement the operations $\Query,\Add, \Set$ in \Cref{lem:specific_segtree} using the operations $\Access$ and $\Apply$ in \Cref{lem:data_structure} as follows.

To implement $\Query(i)$, we run $\Access(i)$ to obtain its output $g = v[i]\in G$, and return $g(0)$.

To implement $\Add(\ell, r, c)$, we run $\Apply(\add_c,\ell,r)$.

To implement $\Set(\ell, r, c)$, we run $\Apply(\sset_c,\ell,r)$.

The correctness of this implementation can be shown inductively. 
At initialization, $v[i] = e$, and thus $\Query(i)$ returns $e(0) = 0$, which is the correct value of $t_i$ at initialization. It remains to show inductively that after each $\Add$ and $\Set$ operation, the output of $\Query(i)$, i.e., $v[i](0)$, is the intended value of $t_i$. Indeed, after an $\Add$ operation, for any $i\in [\ell:r]$, the element $v[i]$ is updated to $v[i]' := (\add_c\cdot v[i])$, and thus 
\[
v[i]'(0) = (\add_c\cdot v[i])(0) = \add_c(v[i](0)) = v[i](0) + c = t_i + c, 
\]
which is the intended new value of $t_i$. For $i\notin [\ell:r]$, the element $v[i]$ remains unchanged, and thus $v[i](0)$ remains unchanged. This is as desired because the new value of $t_i$ is intended to be the same as the old value. 
Combining these two cases, we have shown that the $\Add$ operation maintains that $v[i](0)$ is the intended value of $t_i$ for every $i\in [1:n]$.
We can similarly show that the $\Set$ operation also has this property, and thus our implementation is correct. The running time guarantee of the implementation follows directly from the running time guarantee in \Cref{lem:data_structure}.
\end{proof}


We prove \Cref{lem:data_structure} by describing the construction of the segment tree data structure and analyzing its correctness (Lemma~\ref{lem:correctness_ds}) and efficiency (Lemma~\ref{lem:efficiency_ds}).
By appending to the array an appropriate number of auxiliary entries, we can assume without loss of generality that the array length $n$ is a power of $2$, i.e., $n = 2^r$ for a positive integer $r$. The data structure is implemented using a complete binary tree $T$ with depth $r$, where the $2^r$ leaves correspond to the $n = 2^r$ entries in the array. 

More specifically, we use $\seg(\tau)\subseteq [1:n]$ to denote the set of indices $i$ that a node $\tau\in T$ is associated with. For the root $\tau_0$ of the tree $T$, we set $\seg(\tau_0) = [1:n] = [1:2^r]$, and for its two children $\tau_1,\tau_2$ we set $\seg(\tau_1) = [1:\frac n 2] = [1:2^{r-1}]$ and $\seg(\tau_2) = [\frac n 2+ 1:n] = [2^{r-1} + 1: 2^r]$. In general, for any non-leaf node $\tau$ and its two children $\tau_1,\tau_2$, we set $\seg(\tau_1)$ as the first half of $\seg(\tau)$, and set $\seg(\tau_2)$ as the second half. In particular, for every leaf $\tau$, $\seg(\tau)$ is a singleton set consisting of a unique index $i\in [1:n]$, and we say $\tau$ is the (unique) leaf corresponding to index $i$. See \Cref{figure:tree} for an example with depth $r = 3$.

\begin{figure}[h]
\centering
\includegraphics[width=0.6\textwidth]{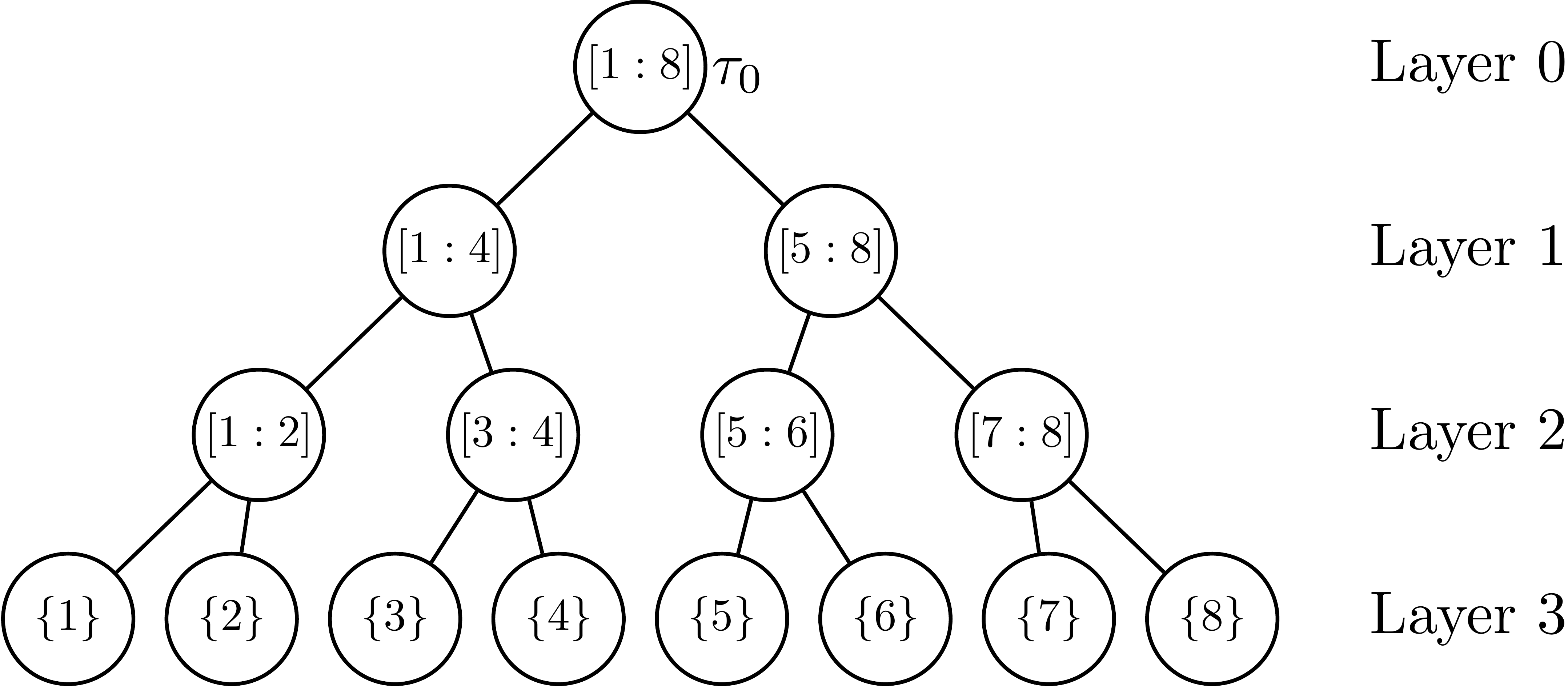}
\caption{Example of segment tree with depth $r = 3$.}
\label{figure:tree}
\end{figure}

At every node $\tau$ of the tree, we maintain a semigroup element $g_\tau\in G$ initialized to the identity $e$.

\paragraph{Access.} 
We implement $\Access(i)$ as follows.
Let $\tau_0\to \cdots\to \tau_r$ be the directed path from the root $\tau_0$ to the leaf $\tau_r$ corresponding to index $i$. We return the semigroup product $g_{\tau_0}g_{\tau_1}\cdots g_{\tau_r}$.

\paragraph{Apply.} We implement $\Apply(g,\ell,r)$ recursively. That is, we implement $\Apply(g,\ell,r,\tau)$ in \Cref{alg:apply} which takes a node $\tau\in T$ as an additional input. We then define $\Apply(g,\ell,r)$ in Lemma~\ref{lem:data_structure} to be $\Apply(g,\ell,r,\tau_0)$ in Algorithm~\ref{alg:apply}, where the additional input is set as the root $\tau_0$ of the tree $T$.

\begin{algorithm}
\caption{$\Apply(g, \ell, r, \tau)$}\label{alg:apply}
\begin{algorithmic}[1]
\If{$\seg(\tau)\cap [\ell : r] = \emptyset$}
\State \Return
\EndIf
\If{$\seg(\tau)\subseteq [\ell : r]$}
\State $g_\tau \gets g \cdot g_\tau$
\State \Return
\EndIf
\State Let $\tau_1,\tau_2$ be the two children of $\tau$
\State $g_{\tau_1}\gets g_\tau \cdot g_{\tau_1}$
\State $g_{\tau_2}\gets g_\tau \cdot g_{\tau_2}$
\State $g_\tau\gets e$
\State $\Apply(g,\ell,r, \tau_1)$
\State $\Apply(g,\ell,r, \tau_2)$
\end{algorithmic}
\end{algorithm}


\paragraph{Correctness.}
At initialization, each array element $v[i]$ is initialized to be the identity element $e$. The semigroup element $g_\tau$ stored at each tree node $\tau$ is also initialized to be $e$, so $\Access(i)$ returns the correct value $e$. It remains to show that $\Access(i)$ still returns the correct value of $v[i]$ after each $\Apply$ operation. This is established in the following lemma.
\begin{lemma}\label{lem:correctness_ds}
For some $i\in [1: n]$, let $\tau_0\to \ldots\to \tau_r$ be the directed path from the root $\tau_0$ to the leaf $\tau_r$ corresponding to index $i$.
Let $\hat g_{\tau_0},\ldots,\hat g_{\tau_r}\in G$ denote the current states of the semigroup elements $g_{\tau_0},\ldots,g_{\tau_r}$ stored in the data structure.
Then after we call $\Apply(g,l,r)$, we have
\[
g_{\tau_0}\cdots g_{\tau_r} = \begin{cases}
g\cdot \hat g_{\tau_0}\cdots \hat g_{\tau_r}, & \text{if $i\in [\ell:r]$},\\
\hat g_{\tau_0}\cdots \hat g_{\tau_r}, & \text{if $i\notin [\ell:r]$}.\\
\end{cases}
\]
\end{lemma}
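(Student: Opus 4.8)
The plan is to prove Lemma~\ref{lem:correctness_ds} by induction on the recursion tree of $\Apply(g,\ell,r,\tau_0)$, tracking how the product $g_{\tau_0}g_{\tau_1}\cdots g_{\tau_r}$ along a fixed root-to-leaf path changes. Fix an index $i\in[1:n]$ and its root-to-leaf path $\tau_0\to\cdots\to\tau_r$. The key structural fact is that the recursive call $\Apply(g,\ell,r,\tau)$ only modifies semigroup elements $g_\sigma$ for nodes $\sigma$ in the subtree rooted at $\tau$ (including $\tau$ itself), and the path $\tau_0\to\cdots\to\tau_r$ passes through at most one node at each depth; so at each depth exactly one $\tau_d$ could be touched. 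I would phrase the induction as a statement about the call $\Apply(g,\ell,r,\tau_d)$ for a node $\tau_d$ on the path: \emph{after this call returns, the product $g_{\tau_d}g_{\tau_{d+1}}\cdots g_{\tau_r}$ equals $g\cdot \hat g_{\tau_d}\hat g_{\tau_{d+1}}\cdots\hat g_{\tau_r}$ if $i\in\seg(\tau_d)\cap[\ell:r]$, and equals $\hat g_{\tau_d}\cdots\hat g_{\tau_r}$ if $i\notin[\ell:r]$} (with $\hat g$ denoting the states just before the call). Applying this at $d=0$, where $\seg(\tau_0)=[1:n]\supseteq\{i\}$ so the condition $i\in\seg(\tau_0)\cap[\ell:r]$ is just $i\in[\ell:r]$, yields exactly the lemma.

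The induction runs downward on the depth $d$ from $r$ to $0$ (equivalently, structural induction on subtree height). First I would dispense with the two base-type cases inside one call. If $\seg(\tau_d)\cap[\ell:r]=\emptyset$, the call returns immediately without changing anything, and since $i\in\seg(\tau_d)$ this forces $i\notin[\ell:r]$, matching the second branch. If $\seg(\tau_d)\subseteq[\ell:r]$, the call sets $g_{\tau_d}\gets g\cdot g_{\tau_d}$ and returns; now $i\in\seg(\tau_d)\subseteq[\ell:r]$, so we are in the first branch, and indeed the product picks up exactly a left factor of $g$, while $g_{\tau_{d+1}},\ldots,g_{\tau_r}$ are untouched. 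For the recursive case, $\tau_d$ has children $\tau_1',\tau_2'$; the code does the "push-down" $g_{\tau_j'}\gets g_{\tau_d}\cdot g_{\tau_j'}$ for $j=1,2$, then $g_{\tau_d}\gets e$, then recurses into both children. The index $i$ lies in exactly one child, say $\tau_{d+1}$ (which is that child). The crucial observation is that before the push-down the product along the path below $\tau_d$ was $\hat g_{\tau_d}\hat g_{\tau_{d+1}}\cdots\hat g_{\tau_r}$, and immediately after the push-down and the reset $g_{\tau_d}\gets e$ the product $g_{\tau_d}g_{\tau_{d+1}}\cdots g_{\tau_r}$ (with the new $g_{\tau_{d+1}}=\hat g_{\tau_d}\hat g_{\tau_{d+1}}$ and $g_{\tau_d}=e$) equals $e\cdot(\hat g_{\tau_d}\hat g_{\tau_{d+1}})\hat g_{\tau_{d+2}}\cdots\hat g_{\tau_r}=\hat g_{\tau_d}\cdots\hat g_{\tau_r}$ — i.e.\ the push-down is product-preserving along the path, by associativity. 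Then the recursive call $\Apply(g,\ell,r,\tau_{d+1})$ happens, and the other recursive call $\Apply(g,\ell,r,\tau_2')$ into the sibling does not touch any $g$ on the path (it only modifies its own subtree, disjoint from the path below $\tau_{d+1}$ except possibly... no, disjoint entirely). Applying the inductive hypothesis to the call at $\tau_{d+1}$ with the intermediate states $\hat g'$ (where $\hat g'_{\tau_{d+1}}=\hat g_{\tau_d}\hat g_{\tau_{d+1}}$, $\hat g'_{\tau_e}=\hat g_{\tau_e}$ for $e\ge d+2$, and $\hat g'_{\tau_d}=e$), and noting $\seg(\tau_{d+1})\cap[\ell:r]\ni i$ iff $i\in[\ell:r]$ (since $i\in\seg(\tau_{d+1})$), gives the result at depth $d$ after combining with $g_{\tau_d}=e$ in front.

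The main obstacle — really the only place care is needed — is bookkeeping the "intermediate states" cleanly: the inductive hypothesis must be stated with respect to whatever the $g$-values are at the moment the recursive subcall is invoked, not the values at the start of the enclosing call, and one must verify that the sibling recursion and the push-down together leave the path-product below $\tau_{d+1}$ in precisely the state the hypothesis expects. I would handle this by explicitly naming the states at three time points within one call (before push-down, after push-down/reset, after the first recursive return) and checking the path-product is invariant across the push-down step and across the sibling call, then invoking the hypothesis only for the on-path child. A minor subtlety worth a sentence: the argument uses only associativity of the semigroup product (never commutativity, never the existence of inverses), so it is valid for the non-commutative semigroup $G$ of functions under composition as used in the proof of \Cref{lem:specific_segtree}; the reset $g_{\tau_d}\gets e$ uses the identity element, which $G$ has. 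After \Cref{lem:correctness_ds} is established, correctness of $\Access(i)$ after any sequence of $\Apply$ operations follows by a trivial outer induction on the number of operations, since $\Access(i)$ returns exactly the path-product $g_{\tau_0}\cdots g_{\tau_r}$.
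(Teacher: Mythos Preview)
Your proposal is correct and relies on the same key observations as the paper's proof: the push-down step is path-product-preserving by associativity, the sibling recursion never touches nodes on the fixed root-to-leaf path, and the two terminal cases (Lines~2 and~5 of \Cref{alg:apply}) are direct. The only difference is organizational: the paper unrolls the recursion, explicitly tracking the states $g_{\tau_0},\ldots,g_{\tau_r}$ at each level until reaching the first depth $j$ where $\seg(\tau_j)\subseteq[\ell:r]$ (or $\seg(\tau_j)\cap[\ell:r]=\emptyset$), whereas you package the same content as a structural induction on the recursion tree with an inductive hypothesis at each $\tau_d$ on the path. Your formulation is arguably cleaner in that it avoids naming the terminal depth $j$ up front and handles both cases uniformly; the paper's version is slightly more concrete about what each $g_{\tau_i}$ equals at each moment. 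Either way the substance is the same.
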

\begin{proof}
It is clear that $[1:n] = \seg(\tau_0) \supseteq \cdots \supseteq \seg(\tau_r) = \{i\}$. If $i\in [\ell: r]$, let $\tau_j$ be the first node among $\tau_0,\ldots,\tau_r$ such that $\seg(\tau_j)\subseteq [\ell,r]$. We can inductively show that for every $j' = 1,\ldots,j$, right before we make the recursive call to $\Apply(g,\ell,r,\tau_{j'})$, we have
\[
g_{\tau_i} = \begin{cases}
e, & \text{if $i< j'$},\\
\hat g_{\tau_0}\cdots \hat g_{\tau_j},& \text{if $i = j'$},\\
\hat g_{\tau_i}, & \text{if $i > j'$}.
\end{cases}
\]
When we call $\Apply(g,\ell,r,\tau_j)$, since $\seg(\tau_j)\subseteq [\ell:r]$, Line 5 is executed and the function returns after that. Now we have
\[
g_{\tau_i} = \begin{cases}
e, & \text{if $i< j$},\\
g\cdot \hat g_{\tau_0}\cdots \hat g_{\tau_j},& \text{if $i = j$},\\
\hat g_{\tau_i}, & \text{if $i > j$}.
\end{cases}
\]

This implies $g_{\tau_0}\cdots g_{\tau_r} = g\cdot \hat g_{\tau_0}\cdots \hat g_{\tau_r}$, as desired.

Similarly, if $i\notin [\ell,r]$, let $\tau_j$ be the first node among $\tau_0,\ldots,\tau_r$ such that $\seg(\tau_j)\cap [\ell:r] = \emptyset$. We can show that
\[
g_{\tau_i} = \begin{cases}
e,& \text{if $i< j$},\\
\hat g_{\tau_0}\ldots \hat g_{\tau_j},& \text{if $i = j$},\\
\hat g_{\tau_i},& \text{if $i > j$}.
\end{cases}
\]
This implies $g_{\tau_0}\cdots g_{\tau_r} = \hat g_{\tau_0}\cdots \hat g_{\tau_r}$, as desired.
\end{proof}

\paragraph{Efficiency.}
The following result establishes the running time guarantee in \Cref{lem:data_structure}.
\begin{lemma}\label{lem:efficiency_ds}
Both $\Access$ and $\Apply$ run in $O(\log n)$ time.
\end{lemma}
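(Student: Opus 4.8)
The statement to prove is \Cref{lem:efficiency_ds}: both $\Access$ and $\Apply$ run in $O(\log n)$ time.

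\textbf{The plan.} The $\Access$ bound is immediate: $\Access(i)$ walks the unique root-to-leaf path $\tau_0 \to \cdots \to \tau_r$, which has length $r = \log_2 n$, and at each node performs one semigroup product (constant time by assumption), so the total cost is $O(\log n)$. The real content is bounding $\Apply(g,\ell,r,\tau_0)$, whose recursion tree I need to show touches only $O(\log n)$ nodes of $T$. I would organize the nodes visited by the recursion into two groups: those on which the recursion ``stops'' (either because $\seg(\tau)\cap[\ell:r]=\emptyset$ and we return immediately, or because $\seg(\tau)\subseteq[\ell:r]$ and we execute Line 5 and return), and those on which the recursion ``branches'' (the else-branch, where both children are recursed into). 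It suffices to bound the number of branching nodes, since the total number of visited nodes is at most twice the number of branching nodes plus $O(1)$ — each stopping node is a child of a branching node (or the root), and a branching node has exactly two children.

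\textbf{Key structural claim.} The branching nodes are exactly those $\tau$ whose segment $\seg(\tau)$ partially overlaps $[\ell:r]$, i.e.\ $\seg(\tau)\cap[\ell:r]$ is nonempty but $\seg(\tau)\not\subseteq[\ell:r]$; equivalently $\seg(\tau)$ contains at least one index in $[\ell:r]$ and at least one outside. I claim that at each depth $0 \le j \le r$ of the tree, there are at most two such nodes. This is the classical segment-tree fact: at a fixed depth, the nodes' segments $\seg(\tau)$ partition $[1:n]$ into consecutive blocks in left-to-right order, and a consecutive interval $[\ell:r]$ can partially overlap at most the block containing $\ell$ and the block containing $r$ — every block strictly between them is fully contained, and every block outside is disjoint. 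I would prove the ``at most two per level'' bound by induction on depth: the root (depth $0$) is a single node; and if a node $\tau$ at depth $j$ is not a branching node then neither of its children branches (if $\seg(\tau)\cap[\ell:r]=\emptyset$ then both children's segments are disjoint from $[\ell:r]$; if $\seg(\tau)\subseteq[\ell:r]$ then both children's segments are contained in $[\ell:r]$), so branching nodes at depth $j+1$ descend only from branching nodes at depth $j$. Combined with the observation that among the (at most two) branching nodes at a level, the one containing $\ell$ can have at most one branching child (the child whose segment contains $\ell$; the other child's segment lies entirely to the right, hence is either contained in or disjoint from $[\ell:r]$... wait, it could still straddle $r$) — actually the cleanest route is just: branching nodes at a given depth form a contiguous run in left-to-right order, and the two ``extreme'' ones are the ancestors of leaf $\ell$ and leaf $r$ respectively, so there are at most two. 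Summing $2$ over $r+1 = O(\log n)$ levels gives $O(\log n)$ branching nodes, hence $O(\log n)$ total visited nodes, each processed in $O(1)$ time.

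\textbf{Main obstacle.} The only subtle point is rigorously establishing that partially-overlapping nodes at each level number at most two — a hand-wavy ``it's a contiguous interval'' argument is tempting but I would want to pin it down via the induction above, being careful that a branching node's \emph{two} children need not both branch. The honest version of the per-level bound is: let $L_j$ = set of depth-$j$ nodes whose segment meets $[\ell:r]$ but isn't contained in it. Show $L_j$ consists of nodes whose segments are consecutive blocks, the leftmost of which contains $\ell$ and the rightmost contains $r$; since each index lies in a unique depth-$j$ block, and a block strictly between the $\ell$-block and $r$-block is entirely inside $[\ell:r]$, we get $|L_j| \le 2$. Everything else (the semigroup pushdown on Lines 9–11 costing $O(1)$, the recursion depth being $r$) is routine, so once the per-level count is locked down the running-time bound for $\Apply$ follows by summing.
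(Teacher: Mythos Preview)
Your proposal is correct and follows the same approach as the paper: bound $\Access$ by the root-to-leaf path length, and bound $\Apply$ by counting the ``branching'' nodes (those whose segment partially overlaps $[\ell:r]$), showing there are at most two per level. The paper's proof is simply terser---it asserts the at-most-two-per-level fact in one sentence without the inductive justification you sketch---so your more careful treatment of that point is, if anything, an improvement in rigor.
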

\begin{proof}
It is clear that $\Access$ runs in time $O(r) = O(\log n)$.
When we run $\Apply(g,\ell,r,\tau)$, the recursive calls at Lines 12-13 are made only when $[\ell : r]$ intersects but does not contain $\seg(\tau)$, i.e., $\emptyset \subsetneq\seg(\tau) \cap [\ell: r] \subsetneq \seg(\tau)$. There are at most $2$ such nodes $\tau$ at each level of the binary tree, so the total number of such nodes $\tau$ is $O(\log n)$. This implies that $\Apply$ runs in time $O(\log n)$.
\end{proof}

\subsection{Testing via smooth calibration}\label{ssec:test_smooth}

In this section, we build upon Corollary~\ref{cor:compute_smce_empirical} and give algorithms that solve the testing problems in Definitions~\ref{def:testing_1} and~\ref{def:testing_2}. We begin with a result from \cite{blasiok2023unifying} which bounds how well the smooth calibration of an empirical distribution approximates the smooth calibration of the population. 

\begin{lemma}[Corollary 9.9, \cite{blasiok2023unifying}]\label{lem:smCE_empirical}
For any $\eps \in (0, 1)$, there is an $n = O(\frac 1 {\eps^2})$ such that if $\hcD_n$ is the empirical distribution over $n$ i.i.d.\ draws from $\cD$, with probability $\ge \frac 2 3$,
    \begin{align*}
        \abs{\smce(\cD) - \smce(\hcD_n)}\leq \eps.
    \end{align*}
\end{lemma}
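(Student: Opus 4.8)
I would prove this as a standard uniform-convergence statement for the empirical process defining $\smce$ in Definition~\ref{def:smooth}. The first step is to pass from a difference of suprema to a single uniform deviation: since $|\sup_w a_w - \sup_w b_w| \le \sup_w|a_w - b_w|$ for any collections of reals, taking $a_w = \E_{(v,y)\sim\cD}[(y-v)w(v)]$ and $b_w$ its empirical analog gives
\begin{equation*}
\Abs{\smce(\cD) - \smce(\hcD_n)} \le \sup_{w\in W}\Abs{\E_{(v,y)\sim\cD}\Brack{(y-v)w(v)} - \frac 1 n\sum_{i\in[n]}(y_i - v_i)w(v_i)},
\end{equation*}
and we denote the right-hand side by $Z$. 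Each summand $(y_i-v_i)w(v_i)$ lies in $[-1,1]$ (using $|y_i-v_i|\le 1$ and $\norms{w}_\infty\le 1$), so replacing one sample changes $Z$ by at most $2/n$; McDiarmid's bounded-differences inequality then gives $Z \le \E Z + O(n^{-1/2})$ with probability at least $\frac 2 3$ (and $\ge 1-\delta$ at an extra $O(\sqrt{\log(1/\delta)/n})$ cost). It therefore suffices to show $\E Z = O(n^{-1/2})$, since then choosing $n = O(\eps^{-2})$ with a sufficiently large absolute constant yields the claim.

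To control $\E Z$, I would use symmetrization, the contraction principle, and a direct estimate of the Rademacher complexity of the Lipschitz class $W$. Symmetrization yields $\E Z \le 2\E\sup_{w\in W}\abs{\frac 1 n\sum_{i\in[n]}\sigma_i(y_i-v_i)w(v_i)}$, where the expectation is over the sample and an independent uniform sign vector $\sigma\in\{\pm 1\}^n$. Since each map $t\mapsto(y_i-v_i)t$ is $1$-Lipschitz and vanishes at $0$, the Ledoux--Talagrand contraction principle removes the $(y_i-v_i)$ factors at the cost of a universal constant, leaving $\E Z = O(1)\cdot\E\sup_{w\in W}\abs{\frac 1 n\sum_{i\in[n]}\sigma_i w(v_i)}$.

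Finally, I would bound this last Rademacher complexity directly rather than through metric entropy. Writing any $1$-Lipschitz $w$ with $\norms{w}_\infty\le 1$ as $w(v) = w(0) + \int_0^v h(t)\,\dd t$ with $|w(0)|\le 1$ and $|h|\le 1$ a.e., Fubini's theorem gives
\begin{equation*}
\frac 1 n\sum_{i\in[n]}\sigma_i w(v_i) = w(0)\cdot\frac 1 n\sum_{i\in[n]}\sigma_i + \int_0^1 h(t)\Par{\frac 1 n\sum_{i\,:\,v_i\ge t}\sigma_i}\dd t.
\end{equation*}
Taking the supremum over $|w(0)|\le 1$ and $|h|\le 1$, then the expectation over $\sigma$, and using the Khintchine-type bound $\E_\sigma\abs{\frac 1 n\sum_{i\in S}\sigma_i}\le \sqrt{|S|}/n\le n^{-1/2}$ (from $\E|X|\le\sqrt{\E X^2}$) bounds both resulting terms by $n^{-1/2}$; hence $\E Z = O(n^{-1/2})$, as desired.

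The only genuinely technical point is the complexity bound for the $1$-Lipschitz class in the last step; I prefer the explicit integral-representation argument above because it delivers the clean $O(n^{-1/2})$ rate, whereas feeding the metric entropy $\log N_\infty(\delta) = \Theta(1/\delta)$ of bounded Lipschitz functions into Dudley's entropy integral $\frac{1}{\sqrt n}\int_0^1\sqrt{\log N_\infty(\delta)}\,\dd \delta$ — which, though improper, converges — gives a looser constant. Everything else (the reduction to one deviation, symmetrization, contraction, and the McDiarmid step) is routine; one should just note that since $W$ is symmetric under $w\mapsto -w$, the absolute values inside the suprema cost nothing, so no extra factor is incurred when applying the contraction inequality.
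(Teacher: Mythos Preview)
The paper does not prove this lemma at all: it is quoted verbatim as Corollary~9.9 of \cite{blasiok2023unifying} and used as a black box. So there is no ``paper's own proof'' to compare against here.

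Your self-contained argument is correct and is exactly the kind of proof one would expect to find behind the citation. The reduction to a single uniform deviation via $|\sup a_w - \sup b_w|\le \sup|a_w-b_w|$ (using symmetry of $W$ to drop the absolute values in Definition~\ref{def:smooth}), the McDiarmid concentration step, and symmetrization followed by contraction are all standard and correctly applied. The one place where different writeups tend to diverge is the final Rademacher bound for the class $W$ of $1$-Lipschitz, $[-1,1]$-valued functions: your integral representation $w(v)=w(0)+\int_0^v h$ with $|h|\le 1$, followed by Fubini and a pointwise Khintchine bound on partial Rademacher sums, is a clean way to get the sharp $O(n^{-1/2})$ rate without invoking Dudley or bracketing entropy (which, as you note, would give the same rate but with a messier constant). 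Nothing is missing; the sketch would expand to a full proof without surprises.
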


Further, we recall the smooth calibration error is constant-factor related to the LDTC.

\begin{lemma}[Theorem 7.3, \cite{blasiok2023unifying}]\label{lem:LDTC_smCE_relation}
    For any distribution $\cD$ over $[0,1]\times \{0,1\}$, we have
    \[\frac{1}{2}\ldce(\cD)\leq \smce(\cD)\leq 2\ldce(\cD).\]
\end{lemma}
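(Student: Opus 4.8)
The statement to prove is that $\tfrac12 \ldce(\cD) \le \smce(\cD) \le 2\ldce(\cD)$ for every $\cD$ on $[0,1]\times\{0,1\}$. Since this is quoted as Theorem 7.3 of \cite{blasiok2023unifying}, the proof should be a self-contained duality/coupling argument. The plan is to prove the two inequalities separately: the upper bound $\smce(\cD) \le 2\ldce(\cD)$ by exhibiting, for an arbitrary coupling $\Pi \in \ext(\cD)$ and an arbitrary Lipschitz witness $w$, a clean pointwise bound; and the lower bound $\tfrac12\ldce(\cD) \le \smce(\cD)$ by constructing a good coupling from a near-optimal Lipschitz witness, or equivalently by an LP-duality / minimax argument.

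For the \textbf{upper bound}, I would fix $\Pi\in\ext(\cD)$ with $(u,v,y)\sim\Pi$ and a $1$-Lipschitz $w:[0,1]\to[-1,1]$. The idea is to split $\E_\cD[(y-v)w(v)] = \E_\Pi[(y-u)w(v)] + \E_\Pi[(u-v)w(v)]$. The second term is bounded in absolute value by $\E_\Pi|u-v|$ since $|w|\le 1$. For the first term, write $w(v) = w(u) + (w(v)-w(u))$; the contribution of $w(u)$ vanishes because $\E_\Pi[y\mid u] = u$ (calibration of $(u,y)$) so $\E_\Pi[(y-u)w(u)] = 0$, and the contribution of $w(v)-w(u)$ is at most $\E_\Pi|w(v)-w(u)| \le \E_\Pi|u-v|$ by the Lipschitz property. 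Combining, $|\E_\cD[(y-v)w(v)]| \le 2\E_\Pi|u-v|$; taking $\sup$ over $w$ and $\inf$ over $\Pi$ gives $\smce(\cD)\le 2\ldce(\cD)$.

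For the \textbf{lower bound}, the natural route is LP duality: $\ldce(\cD)$ is the value of a (infinite-dimensional) transportation-type LP whose dual, after identifying the calibration constraint on $(u,y)$ with a Lagrange multiplier, should be exactly (up to the factor of $2$) the supremum defining $\smce$. Concretely, I would take a near-optimal $1$-Lipschitz witness $w$ for $\smce(\cD)$ and build a coupling $\Pi$ by transporting mass from each $v$ to a nearby calibrated $u$, using $w$ to decide the direction and amount of transport (move $u$ below $v$ where $w>0$, above where $w<0$), arranging that the resulting $(u,y)$ marginal is perfectly calibrated while $\E_\Pi|u-v|$ is controlled by $\E_\cD[(y-v)w(v)]$ up to the factor $2$. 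Alternatively — and this is probably the cleanest honest argument — invoke the LP/minimax characterization directly: reparametrize $\ldce$ via its dual, recognize that the optimal dual variables are exactly $1$-Lipschitz $[-1,1]$-valued functions, and read off the factor-$2$ slackness from the two constraint families.

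The \textbf{main obstacle} is the lower bound: unlike the upper bound, which is a short pointwise computation, it requires either a careful explicit construction of a calibrated coupling from a Lipschitz dual witness (with a nontrivial verification that the constructed $u$-marginal is genuinely calibrated and that the transport cost is within a factor $2$), or an infinite-dimensional LP strong-duality argument with attention to measurability and attainment. Since the excerpt permits citing earlier results only, and this is itself the cited external theorem, I would present the upper-bound direction in full and then either reproduce \cite{blasiok2023unifying}'s coupling construction for the lower bound or defer to it; the factor of $2$ (rather than a tight constant) is exactly what makes the explicit construction go through without delicate optimization.
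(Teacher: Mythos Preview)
The paper does not give its own proof of this lemma; it is stated with the attribution ``Theorem 7.3, \cite{blasiok2023unifying}'' and used as a black box. So there is no proof in the paper to compare your proposal against.

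That said, your plan is sound. The upper bound argument you sketch---splitting $\E[(y-v)w(v)]$ into $\E[(y-u)w(u)] + \E[(y-u)(w(v)-w(u))] + \E[(u-v)w(v)]$, killing the first term by calibration of $(u,y)$ and bounding the other two by $\E|u-v|$ via Lipschitzness and $\|w\|_\infty\le 1$---is exactly the standard proof and is complete as written. For the lower bound you correctly identify that it requires either an explicit coupling construction or LP duality, and that this is the nontrivial direction; your proposal to defer to \cite{blasiok2023unifying} for that half is consistent with what the present paper does (namely, cite the result wholesale). If you want to make the lower bound self-contained, the duality route is indeed the cleanest: the key observation is that the dual variables for the calibration constraints can be taken to be a function $g(u)$ and the dual variables for the marginal constraints a function $h(v,y)$, and optimality forces $h(v,y) = \inf_u |u-v| + (y-u)g(u)$, from which one extracts a $1$-Lipschitz witness after some manipulation; the factor $2$ arises because $|y-u|\le 1$ contributes alongside the Lipschitz bound.
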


For completeness, we make the simple (but to our knowledge, new) observation that, while the constants in Lemma~\ref{lem:LDTC_smCE_relation} are not necessarily tight, there is a constant gap between $\ldce$ and $\smce$.

\begin{lemma}\label{lem:constant_g1}
Suppose for constants $B \ge A > 0$, it is the case that $A \cdot \ldce(\cD) \le \smce(\cD) \le B \cdot \ldce(\cD)$ for all distributions $\cD$ over $[0, 1] \times \{0, 1\}$. Then, $\frac B A \ge \frac 3 2$.
\end{lemma}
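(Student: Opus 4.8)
The plan is to exhibit two distributions $\cD^{(1)},\cD^{(2)}$ over $[0,1]\times\{0,1\}$, each with positive $\ldce$, whose ratios $\smce/\ldce$ differ by a factor of at least $\tfrac32$. Indeed, dividing the hypothesis $A\cdot\ldce(\cD)\le\smce(\cD)\le B\cdot\ldce(\cD)$ through by $\ldce(\cD)$ shows $A\le\smce(\cD)/\ldce(\cD)\le B$ for every such $\cD$; hence $B\ge\smce(\cD^{(1)})/\ldce(\cD^{(1)})$ and $A\le\smce(\cD^{(2)})/\ldce(\cD^{(2)})$, and it suffices to choose the two distributions so that the first ratio is at least $\tfrac32$ times the second.

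For $\cD^{(1)}$ I would take the point mass placing all weight on $(v,y)=(0,1)$. Since $y\equiv1$, any $\Pi\in\ext(\cD^{(1)})$ has a calibrated $(u,y)$-marginal with $y\equiv1$, which forces $u\equiv1$; so $\ldce(\cD^{(1)})=1$, and $\smce(\cD^{(1)})=\sup_{w\in W}\Abs{(1-0)w(0)}=1$ as well. Thus $\smce(\cD^{(1)})/\ldce(\cD^{(1)})=1$, which already gives $B\ge1$. (Any point mass $\delta_{(c,y)}$ with $c\neq y$ works the same way, with $\ldce=\smce=|c-y|$.)

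For $\cD^{(2)}$ I would put mass $\tfrac12$ on $(\tfrac12-\delta,1)$ and mass $\tfrac12$ on $(\tfrac12+\delta,0)$ for a small $\delta>0$; here $v$ is a deterministic function of $y$. The smooth calibration error is immediate: $\E_{\cD^{(2)}}[(y-v)w(v)]=\tfrac12(\tfrac12+\delta)(w(\tfrac12-\delta)-w(\tfrac12+\delta))$, and the Lipschitz bound $\Abs{w(\tfrac12-\delta)-w(\tfrac12+\delta)}\le2\delta$, which is tight for $w(t)=\tfrac12-t$, gives $\smce(\cD^{(2)})=\delta(\tfrac12+\delta)$. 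For $\ldce(\cD^{(2)})$, the coupling $u\equiv\tfrac12$ is calibrated and achieves $\E\Abs{u-v}=\delta$, so $\ldce(\cD^{(2)})\le\delta$; for the matching lower bound I would test any valid coupling against $2y-1\in\{\pm1\}$ and use $|2y-1|=1$:
\[
\E\Abs{u-v}=\E\Abs{(u-v)(2y-1)}\ge\Abs{\E[(u-v)(2y-1)]}=\Abs{2\E[u^2]-2\E[vy]},
\]
where the last equality uses $\E[uy]=\E[u^2]$ and $\E[u]=\E[y]=\tfrac12$ from calibration of $(u,y)$, together with $\E[v]=\tfrac12$. Since $\E[vy]=\tfrac12(\tfrac12-\delta)$ and $\E[u^2]\ge(\E u)^2=\tfrac14$, the right-hand side equals $2\E[u^2]-\tfrac12+\delta\ge\delta$. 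Hence $\ldce(\cD^{(2)})=\delta$, so $\smce(\cD^{(2)})/\ldce(\cD^{(2)})=\tfrac12+\delta$.

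Choosing $\delta=\tfrac16$ (the atoms are then at $\tfrac13$ and $\tfrac23$) gives $\smce(\cD^{(2)})/\ldce(\cD^{(2)})=\tfrac23$, so $A\le\tfrac23$; combined with $B\ge1$ this yields $\tfrac BA\ge\tfrac32$, as desired. The only step that needs genuine care is the matching lower bound on $\ldce(\cD^{(2)})$ (the test-function-against-$2y-1$ argument) — everything else is one-line arithmetic; I would also remark that letting $\delta\to0^+$ drives $\smce(\cD^{(2)})/\ldce(\cD^{(2)})$ to $\tfrac12$, so in fact $\tfrac BA\ge2$, but $\tfrac32$ already suffices while keeping the witnessing distributions maximally simple.
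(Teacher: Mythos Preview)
Your proof is correct, and it takes a genuinely different route from the paper's. The paper exhibits one distribution with $\smce/\ldce = 1$ (forcing $A\le 1$) and a second with $\smce/\ldce \ge \tfrac32$ (forcing $B\ge\tfrac32$); you instead exhibit one distribution with $\smce/\ldce = 1$ (forcing $B\ge 1$) and a second with $\smce/\ldce = \tfrac12+\delta$ (forcing $A\le\tfrac23$ at $\delta=\tfrac16$). The paper's second example is asymmetric (atoms at $\tfrac12-\eps$ and $\tfrac12$), whereas yours is symmetric (atoms at $\tfrac12\pm\delta$). The trade-off is that the paper only needs an \emph{upper} bound on $\ldce$ for its second example, which is immediate from an explicit coupling, while you need a \emph{lower} bound on $\ldce(\cD^{(2)})$, which is the step requiring the test against $2y-1$ and the Jensen inequality $\E[u^2]\ge(\E u)^2$. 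In exchange, your argument actually proves more: sending $\delta\to 0^+$ gives $A\le\tfrac12$, hence $B/A\ge 2$, which is strictly stronger than the stated $\tfrac32$ and matches the constant $\tfrac12$ in Lemma~\ref{lem:LDTC_smCE_relation}.
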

\begin{proof}
First, we claim that $A \le 1$. To see this, let $(v, y )\sim \cD$ be distributed where $v = \half$ with probability $1$, and $y \sim \Bern(\half + \eps)$ for some $\eps \in [0, \half]$. Clearly, $\smce(\cD) = |\half - (\half + \eps)| = \eps$. Moreover, $\ldce(\cD) = \eps$, which follows from the same Jensen's inequality argument as in Lemma~\ref{lem:tct_lb}, so this shows that $A \le 1$. Next, we claim that $B \ge \frac 3 2$, concluding the proof. Consider the joint distribution over $(u, v, y)$ in Table~\ref{table:constant_example}, and let $\cD$ be the marginal of $(v, y)$. 
\begin{table}[ht!]\label{table:constant_example}
\centering
\begin{tabular}{lllll} 
 \toprule
 Probability mass & $u$ & $v$ & $y$ & $w(v)$\\
\midrule
$\half$ & $\half$ & $\half - \eps$ & $1$ & $1$\\
$\half$ & $\half$ & $\half$ & $0$ & $1 - \eps$\\
\bottomrule
 \end{tabular}
\end{table}
It is straightforward to check $(u, y)$ is calibrated and $\E |u - v| = \frac{\eps}{2}$, so $\ldce(\cD) \le \frac{\eps}{2}$. Moreover, $\smce(v, y) \ge \frac{3\eps}{4}$, as witnessed by the Lipschitz weight function $w(v)$ in Table~\ref{table:constant_example}, finishing our proof that $B \ge \frac 3 2$:
\[
\smce(v,y) \ge \E[(y - v)w(v)] =\half\Par{\Par{\half + \eps} \cdot 1} + \half\Par{\Par{-\half} \cdot (1 - \eps)} = \frac{3\eps}{4}.
\]
\end{proof}

Using these claims, we now give our tolerant calibration tester in the regime $\eps_1 > 4\eps_2$.

\begin{theorem}\label{thm:tct_smooth}
Let $0 \le \eps_2 \le \eps_1 \le 1$ satisfy $\eps_1 > 4\eps_2$, and let $n \ge \Ctct \cdot \frac 1 {(\eps_1 - 4\eps_2)^2}$ for a universal constant $\Ctct$. There is an algorithm $\alg$ which solves the $(\eps_1, \eps_2)$-tolerant calibration testing problem with $n$ samples, which runs in time 
\[O\Par{n\log^2(n)}.\]
\end{theorem}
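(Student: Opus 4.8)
The plan is to combine the exact empirical smooth-calibration solver (Corollary~\ref{cor:compute_smce_empirical}), the generalization bound for $\smce$ (Lemma~\ref{lem:smCE_empirical}), and the constant-factor sandwich between $\smce$ and $\ldce$ (Lemma~\ref{lem:LDTC_smCE_relation}) into a threshold test. The algorithm $\alg$ is: draw $n$ samples to form $\hcD_n$, compute $\smce(\hcD_n)$ exactly via Corollary~\ref{cor:compute_smce_empirical} in time $O(n\log^2(n))$, and output ``no'' if $\smce(\hcD_n) \ge \theta$ and ``yes'' otherwise, for a threshold $\theta$ to be chosen (the natural choice is $\theta = \half(\eps_1 - 2\eps_2) + 2\eps_2$, i.e.\ the midpoint of the two ``signal'' intervals after translating through $\smce$, but any $\theta$ strictly between $2\eps_2$ and $\half\eps_1$ works). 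The running time claim is then immediate from Corollary~\ref{cor:compute_smce_empirical}, so the content is entirely in the correctness analysis.

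For correctness, set $\eps \defeq \tfrac{1}{4}(\eps_1 - 4\eps_2) > 0$ and choose $\Ctct$ so that with $n \ge \Ctct/\eps^2$ samples, Lemma~\ref{lem:smCE_empirical} gives $|\smce(\cD) - \smce(\hcD_n)| \le \eps$ with probability $\ge \tfrac 2 3$. Condition on this event. In the ``no'' case we have $\ldce(\cD) \ge \eps_1$, so by the lower bound in Lemma~\ref{lem:LDTC_smCE_relation}, $\smce(\cD) \ge \half \ldce(\cD) \ge \half\eps_1$, hence $\smce(\hcD_n) \ge \half\eps_1 - \eps$. In the ``yes'' case we have $\ldce(\cD) \le \eps_2$, so by the upper bound, $\smce(\cD) \le 2\ldce(\cD) \le 2\eps_2$, hence $\smce(\hcD_n) \le 2\eps_2 + \eps$. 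It therefore suffices to pick $\theta$ with $2\eps_2 + \eps < \theta < \half\eps_1 - \eps$; such a $\theta$ exists precisely because $2\eps_2 + \eps < \half\eps_1 - \eps \iff 4\eps < \half\eps_1 - 2\eps_2 \iff \eps_1 - 4\eps_2 > 4\eps \cdot \tfrac{1}{1}$, wait — let me instead just take $\eps \defeq \tfrac{1}{8}(\eps_1 - 4\eps_2)$, which makes $2\eps_2 + \eps < \theta \defeq 2\eps_2 + \tfrac{1}{2}\cdot\tfrac{1}{2}(\eps_1-4\eps_2) < \half\eps_1 - \eps$ hold with room to spare; the exact constant in $\eps$ only affects $\Ctct$. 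With $\theta$ so chosen, in the ``no'' case $\smce(\hcD_n) \ge \theta$ and $\alg$ correctly says ``no'', and in the ``yes'' case $\smce(\hcD_n) < \theta$ and $\alg$ correctly says ``yes''; both hold on the probability-$\ge\tfrac 2 3$ event, matching Definition~\ref{def:testing_2}.

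I expect essentially no obstacle here — this is a bookkeeping argument assembling three already-proven results. The only mild subtlety is making the window $(2\eps_2 + \eps,\ \half\eps_1 - \eps)$ nonempty, which forces $\eps$ to be a small enough constant multiple of $\eps_1 - 4\eps_2$ and hence determines (the order of) $\Ctct$ via Lemma~\ref{lem:smCE_empirical}; this also explains the hypothesis $\eps_1 > 4\eps_2$, since the factor-$4$ loss is exactly $\half \cdot (1/2)$ inverted, i.e.\ the product of the two constants in Lemma~\ref{lem:LDTC_smCE_relation}, consistent with the discussion preceding the theorem and with the lower bound in Lemma~\ref{lem:constant_g1}. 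One should also note $\alg$ only needs the ``yes''/``no'' decision, so no rounding of the $\smce$ solution is required, and exactness of the solver in Corollary~\ref{cor:compute_smce_empirical} means the only randomness is in the sampling, handled by Lemma~\ref{lem:smCE_empirical}.
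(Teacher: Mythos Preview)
Your proposal is correct and follows essentially the same approach as the paper: compute $\smce(\hcD_n)$ exactly via Corollary~\ref{cor:compute_smce_empirical}, invoke Lemma~\ref{lem:smCE_empirical} for concentration, translate the $\ldce$ hypotheses through Lemma~\ref{lem:LDTC_smCE_relation}, and threshold. The paper sets $\alpha \defeq \tfrac{\eps_1}{2} - 2\eps_2$, takes sampling error $\tfrac{\alpha}{2} = \tfrac{1}{4}(\eps_1 - 4\eps_2)$ (your initial $\eps$), and thresholds at exactly $2\eps_2 + \tfrac{\alpha}{2}$; your mid-proof adjustment to $\eps = \tfrac{1}{8}(\eps_1 - 4\eps_2)$ merely buys a strict gap at the threshold rather than the paper's closed one, which only affects $\Ctct$.
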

\begin{proof}
Throughout the proof, let $\alpha \defeq \frac{\eps_1} 2 - 2\eps_2 > 0$. Consider the following algorithm.
\begin{enumerate}
    \item Sample $n \ge \frac{\Ctct}{\alpha^2}$ samples to form an empirical distribution $\hcD_n$, where $\Ctct$ is chosen large enough so that Lemma~\ref{lem:smCE_empirical} guarantees $|\smce(\cD) - \smce(\hcD_n)| \le \frac \alpha 2$ with probability $\ge \frac 2 3$.
    \item Call Corollary~\ref{cor:compute_smce_empirical} to obtain $\beta$, the value of $\smce(\hcD_n)$.
    \item Return ``yes'' if $\beta \le 2\eps_2 + \frac \alpha 2$, and return ``no'' otherwise.
\end{enumerate}
Conditioned on the event $|\smce(\cD) - \smce(\hcD_n)| \le \frac \alpha 2$, we show that the algorithm succeeds in tolerant calibration testing. First, if $\ldce(\cD) \le \eps_2$, then $\smce(\cD) \le 2\eps_2$ by Lemma~\ref{lem:LDTC_smCE_relation}, and therefore by the assumed success of Lemma~\ref{lem:smCE_empirical}, the algorithm will return ``yes.'' Second, if $\ldce(\cD) \ge \eps_1$, then $\smce(\cD) \ge \frac{\eps_1}{2}$ by Lemma~\ref{lem:LDTC_smCE_relation}, and similarly the algorithm returns ``no'' in this case. Finally, the runtime is immediate from Corollary~\ref{cor:compute_smce_empirical}; all other steps take $O(1)$ time.
\end{proof}

Theorem~\ref{thm:tct_smooth} has the following implication for (standard) calibration testing.

\begin{corollary}\label{cor:ct}
Let $n \in \N$ and let $\eps_n \in (0, 1)$ be minimal such that it is information-theoretically possible to solve the $\eps_n$-calibration testing problem with $n$ samples. For some $\eps = \Theta(\eps_n)$, there is an algorithm $\alg$ which solves the $\eps$-calibration testing problem with $n$ samples, which runs in time 
\[O\Par{n\log^2(n)}.\]
\end{corollary}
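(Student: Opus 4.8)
The plan is to obtain Corollary~\ref{cor:ct} as an immediate specialization of Theorem~\ref{thm:tct_smooth} to the non-tolerant regime. The first step is the observation (recorded just after Definition~\ref{def:testing_2}) that the $\eps$-calibration testing problem of Definition~\ref{def:testing_1} is precisely the $(\eps_1, \eps_2)$-tolerant calibration testing problem with $\eps_1 = \eps$ and $\eps_2 = 0$: when $\eps_2 = 0$ the ``yes'' condition $\ldce(\cD) \le \eps_2$ becomes $\ldce(\cD) = 0$, matching Definition~\ref{def:testing_1}, and the ``no'' conditions already coincide. So it suffices to exhibit an $(\eps, 0)$-tolerant calibration tester.

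Second, I would invoke Theorem~\ref{thm:tct_smooth} with $\eps_1 = \eps$ and $\eps_2 = 0$. The hypothesis $\eps_1 > 4\eps_2$ reduces to $\eps > 0$, and the sample-size requirement $n \ge \Ctct/(\eps_1 - 4\eps_2)^2$ reduces to $n \ge \Ctct/\eps^2$. Choosing $\eps \defeq \sqrt{\Ctct/n}$ makes this hold (with equality), and Theorem~\ref{thm:tct_smooth} then delivers an algorithm solving the $\eps$-calibration testing problem with $n$ samples in time $O(n\log^2(n))$. This supplies the algorithmic content of the corollary.

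Third, I would verify $\eps = \Theta(\eps_n)$. The upper bound $\eps_n \le \sqrt{\Ctct/n}$ is immediate, since the tester just constructed witnesses that $\sqrt{\Ctct/n}$-calibration testing is possible with $n$ samples, so minimality of $\eps_n$ forces $\eps_n \le \sqrt{\Ctct/n}$. For the matching lower bound I would apply Lemma~\ref{lem:tct_lb} with $\eps_2 = 0$ and $\eps_1 = \eps'$ (so $\eps_1 - \eps_2 = \eps'$): it shows that $n \le \Ccoin/(\eps')^2$ makes $\eps'$-calibration testing information-theoretically impossible, hence $\eps_n \ge \sqrt{\Ccoin/n}$. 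Together these give $\eps_n = \Theta(n^{-1/2})$, and since the chosen $\eps = \sqrt{\Ctct/n}$ is likewise $\Theta(n^{-1/2})$, we conclude $\eps = \Theta(\eps_n)$.

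There is no genuine obstacle here; this is a routine corollary of the already-established Theorem~\ref{thm:tct_smooth}. The only point requiring minor care is that $\eps_n$ must be pinned down from both sides — achievability from Theorem~\ref{thm:tct_smooth} with $\eps_2 = 0$, impossibility from Lemma~\ref{lem:tct_lb} — in order to justify the constant-factor relationship $\eps = \Theta(\eps_n)$; one should also note the statement is nonvacuous only once $n$ exceeds an absolute constant (so that $\sqrt{\Ctct/n} < 1$), with smaller $n$ handled trivially since then $O(n\log^2(n)) = O(1)$.
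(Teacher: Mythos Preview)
Your proposal is correct and follows essentially the same approach as the paper: invoke Lemma~\ref{lem:tct_lb} to get $\eps_n = \Omega(n^{-1/2})$, then apply Theorem~\ref{thm:tct_smooth} with $\eps_2 = 0$ and $\eps_1 = \Theta(\eps_n)$. Your write-up is more explicit than the paper's two-line proof (in particular, you spell out the achievability direction of $\eps_n = \Theta(n^{-1/2})$ and the small-$n$ caveat), but the substance is identical.
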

\begin{proof}
Recall from Lemma~\ref{lem:tct_lb} that $\eps_n = \Omega(n^{-1/2})$. The conclusion follows by applying Theorem~\ref{thm:tct_smooth} with $\eps_1 \gets \Theta(\eps_n)$ and $\eps_2 \gets 0$.
\end{proof}

\section{Lower distance to calibration}\label{sec:ldtc}
In this section, we provide our main result on approximating the lower distance to calibration of a distribution
on $[0, 1] \times \{0, 1\}$. We provide details on a framework for lifting constrained linear programs to equivalent unconstrained counterparts in Section~\ref{ssec:rounding}.
In Section~\ref{ssec:ldtc_prelim}, we next state preliminary definitions and results from \cite{blasiok2023unifying} used in our algorithm. In Section~\ref{ssec:ldtc_rounding}, we then develop a rounding procedure compatible with a linear program which closely approximates the empirical lower distance to calibration. Finally, in Section~\ref{ssec:ldtc_testing}, we use our rounding procedure to design an algorithm for calibration testing, which solves the problem for a larger range of parameters than Theorem~\ref{thm:tct_ldtc} (i.e.\ the entire relevant parameter range), at a quadratic runtime overhead.

\subsection{Rounding linear programs}\label{ssec:rounding}

In this section, we give a general framework for approximately solving linear programs, following similar developments in the recent combinatorial optimization literature \cite{Sherman13, jambulapati2019direct}. Roughly speaking, this framework is a technique for losslessly converting a constrained convex program to an unconstrained one, provided we can show existence of a rounding procedure compatible with the constrained program in an appropriate sense. We begin with our definition of a rounding procedure.

\begin{definition}[Rounding procedure]\label{def:rounding}
Consider a convex program defined on the intersection of convex set $\xset$ with linear equality constraints $\ma x = b$:
\begin{equation}\label{eq:con_eq_lp}\min_{\substack{x \in \xset \\ \ma x = b }} c^\top x. \end{equation}
We say $\Round$ is a $(\tma, \tb, p)$-equality rounding procedure for $(\ma, b, c, \xset)$ if $p \ge 1$, and for any $x \in \xset$, there exists $x' \defeq \Round(x) \in \xset$ such that $\ma x' = b$, $\tma x' = \tb$, and
\begin{equation}\label{eq:relate_eq_lp}
c^\top x' \le  c^\top x + \norm{\tma x - \tb}_p.
\end{equation}
\end{definition}

Intuitively, rounding procedures replace the hard-constrained problem \eqref{eq:con_eq_lp} with its soft-constrained variants, i.e.\ the soft equality-constrained
\begin{equation}\label{eq:uncon_eq_lp}
\min_{x \in \xset} c^\top x + \norm{\tma x - \tb}_p,
\end{equation}
for some $(\tma, \tb, p)$ constructed from the corresponding hard-constrained problem instance (parameterized by $\ma, b, c, \xset$). Leveraging the assumptions on our rounding procedure, we now show how to relate approximate solutions to these problems, generalizing Lemma 1 of \cite{jambulapati2019direct}.

\begin{lemma}\label{lem:rounding_unconstrained}
Let $x$ be an $\eps$-approximate minimizer to \eqref{eq:uncon_eq_lp}, and let $\Round$ be a $(\tma, \tb, p)$-equality rounding procedure for $(\ma, b, c, \xset)$. Then $x' \defeq \Round(x)$ is an $\eps$-approximate minimizer to \eqref{eq:con_eq_lp}.
\end{lemma}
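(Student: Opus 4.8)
The plan is to unwind the definitions and use the two defining properties of a rounding procedure — feasibility of $x'$ for the hard-constrained problem and the objective bound \eqref{eq:relate_eq_lp} — together with the fact that the soft-constrained problem \eqref{eq:uncon_eq_lp} lower-bounds the hard-constrained problem \eqref{eq:con_eq_lp}. Write $\opt$ for the optimal value of \eqref{eq:con_eq_lp} and $\opt'$ for the optimal value of \eqref{eq:uncon_eq_lp}. First I would observe that $\opt' \le \opt$: any $x^\star$ feasible for \eqref{eq:con_eq_lp} satisfies $\ma x^\star = b$, hence (applying $\Round$, or just noting $x^\star$ is already a valid candidate since the penalty term $\norms{\tma x^\star - \tb}_p \ge 0$ only helps) we get $c^\top x^\star + \norms{\tma x^\star - \tb}_p \ge c^\top x^\star$; but actually the cleaner direction is that $\Round(x^\star) = x^\star$ is not needed — we only need that the infimum of the penalized objective over $\xset$ is at most its value at a minimizer of \eqref{eq:con_eq_lp}, and there the penalty is nonnegative, so $\opt' \le \opt$. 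Wait — I should be careful: the penalty is added, so $c^\top x^\star + \norms{\tma x^\star - \tb}_p \ge c^\top x^\star = \opt$, which gives the wrong direction. The correct argument is: at a minimizer $x^\star$ of \eqref{eq:con_eq_lp} we have $\tma x^\star = \tb$ is \emph{not} assumed, so instead use the rounding procedure in reverse — no; the right move is simply that $\opt' \le c^\top x^\star + \norms{\tma x^\star - \tb}_p$ and we cannot drop the penalty. So I instead bound $\opt'$ from below using $\Round$: for \emph{any} $z \in \xset$, $\Round(z)$ is feasible for \eqref{eq:con_eq_lp} and $c^\top z + \norms{\tma z - \tb}_p \ge c^\top \Round(z) \ge \opt$; taking the infimum over $z \in \xset$ gives $\opt' \ge \opt$. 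Conversely $\opt' \le \opt$ because at any feasible $x^\star$ for \eqref{eq:con_eq_lp} with $\tma x^\star = \tb$... this requires knowing such a point exists with zero penalty, which is exactly what $\Round$ guarantees starting from any feasible point. So: take $x^\star$ optimal for \eqref{eq:con_eq_lp}, set $x^{\star\star} = \Round(x^\star)$; then $\ma x^{\star\star} = b$, $\tma x^{\star\star} = \tb$, and $c^\top x^{\star\star} \le c^\top x^\star + \norms{\tma x^\star - \tb}_p$. Since $x^\star$ is feasible and $x^{\star\star}$ is also feasible with the same or... hmm, $c^\top x^{\star\star} \le c^\top x^\star + \norms{\tma x^\star - \tb}_p$, but the penalty term here need not vanish.

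This suggests the cleanest route avoids comparing optima directly and instead chains inequalities through $x$ and $x'$. Here is the plan I would actually execute. Let $x$ be an $\eps$-approximate minimizer of \eqref{eq:uncon_eq_lp}, meaning $c^\top x + \norms{\tma x - \tb}_p \le \opt' + \eps$. Let $x' = \Round(x)$. By Definition~\ref{def:rounding}, $x' \in \xset$, $\ma x' = b$, so $x'$ is feasible for \eqref{eq:con_eq_lp}, and $c^\top x' \le c^\top x + \norms{\tma x - \tb}_p \le \opt' + \eps$. It therefore suffices to show $\opt' \ge \opt$, i.e., the soft-constrained value is at least the hard-constrained value; combined with the previous line this yields $c^\top x' \le \opt + \eps$, which is the claim. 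To show $\opt' \ge \opt$: for an arbitrary $z \in \xset$, apply $\Round$ to get $z' = \Round(z)$, which is feasible for \eqref{eq:con_eq_lp}, so $c^\top z' \ge \opt$; combined with \eqref{eq:relate_eq_lp}, $c^\top z + \norms{\tma z - \tb}_p \ge c^\top z' \ge \opt$. Taking the infimum over $z \in \xset$ gives $\opt' \ge \opt$, completing the proof.

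The only mild subtlety — the "main obstacle", though it is minor here — is getting the direction of the optimal-value comparison right and making sure the $\eps$-approximation slack is carried through a single clean chain rather than split across two comparisons; the argument above does this by routing everything through the inequality $c^\top z + \norms{\tma z - \tb}_p \ge \opt$ valid for all $z \in \xset$, which is the soft-to-hard direction, and the rounding bound on the specific approximate minimizer $x$, which is the hard-to-soft direction. I would also remark (as the paper surely intends) that this lemma is what licenses running a first-order method on the unconstrained penalized objective \eqref{eq:uncon_eq_lp} and then rounding once at the end, which is exactly how Theorem~\ref{thm:intro_tct} will be proved, with $\ma, b$ encoding the marginal and calibration constraints of the empirical $\ldce$ polytope and $\Round$ the two-step cost-sensitive rounding procedure developed in Section~\ref{ssec:ldtc_rounding}.
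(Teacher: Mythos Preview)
Your final plan has a direction error that you actually sensed in the meandering but then lost. You derive $c^\top x' \le \opt' + \eps$ and then write ``it therefore suffices to show $\opt' \ge \opt$.'' But to conclude $c^\top x' \le \opt + \eps$ from $c^\top x' \le \opt' + \eps$ you need $\opt' \le \opt$, not $\opt' \ge \opt$. The argument you give afterward correctly proves $\opt' \ge \opt$ (for any $z \in \xset$, $c^\top z + \norms{\tma z - \tb}_p \ge c^\top \Round(z) \ge \opt$), which is the wrong direction for your chain.

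This is not a slip that can simply be flipped: in the full generality of Definition~\ref{def:rounding}, the needed inequality $\opt' \le \opt$ can fail, and with it the lemma. Take $\xset = [0,1]$, $c = -1$, $\ma = 0$, $b = 0$, $\tma = 1$, $\tb = 0$, $p = 1$, and $\Round(x) \equiv 0$; one checks this is a valid rounding procedure, the constrained optimum is $-1$, the unconstrained optimum is $0$, and the rounded exact minimizer $x' = 0$ is not optimal for \eqref{eq:con_eq_lp}. The paper's own displayed chain has the same issue: it ends at $c^\top x' \le c^\top x^\star + \eps$ with $x^\star$ merely feasible for \eqref{eq:con_eq_lp}, which gives $c^\top x^\star \ge \opt$, not $\le$. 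What rescues the application is that in Lemma~\ref{lem:roundldtc} one takes $(\tma,\tb) = (4\ma,4b)$, so $\ma x = b \Leftrightarrow \tma x = \tb$; then any $z^\star$ optimal for \eqref{eq:con_eq_lp} already has zero penalty, giving $\opt' \le c^\top z^\star = \opt$, and together with your (correct) $\opt' \ge \opt$ the argument closes. Your write-up should add this hypothesis explicitly, or equivalently assume that every point feasible for \eqref{eq:con_eq_lp} satisfies $\tma x = \tb$.
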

\begin{proof}
We first claim that a minimizing solution to \eqref{eq:uncon_eq_lp} satisfies the constraints $\ma x = b$. To see this, given any $x \in \xset$, we can produce $x' \in \xset$ with $\ma x' = b$ and such that $x'$ has smaller objective value in \eqref{eq:uncon_eq_lp}. Indeed, letting $x' \defeq \Round(x')$, \eqref{eq:relate_eq_lp} guarantees
\[c^\top x' = c^\top x' + \norm{\tma x' - \tb}_p \le c^\top x + \norm{\tma x - \tb}_p,\]
as claimed. Now let $x^\star \in \xset$ satisfying $\ma x^\star = b$ minimize \eqref{eq:uncon_eq_lp}. Then, if $x$ is an $\eps$-approximate minimizer to \eqref{eq:uncon_eq_lp} and $x' = \Round(x)$, we have the desired claim from $\ma x' = b$, and
\[c^\top x' = c^\top x' + \norm{\tma x' - \tb}_p \le c^\top x + \norm{\tma x - \tb}_p \le c^\top x^\star + \norm{\tma x^\star - \tb}_p = c^\top x^\star + \eps.\]
\end{proof}

In the remainder of the section, we apply our rounding framework to a hard-constrained linear program, in the $p = 1$ geometry. To aid in approximately solving the soft-constrained linear programs arising from our framework, we use the following procedure from \cite{jambulapati2023revisiting}, building upon the recent literature for solving box-simplex games at accelerated rates \cite{Sherman17, jambulapati2019direct, CohenST21}. In the statement of Proposition~\ref{prop:box_simplex}, we use the notation
\[\norm{\ma}_{p \to q} \defeq \max_{x \in \R^n \mid \norm{x}_p \le 1} \norm{\ma x}_q.\] 
Notice that in particular, $\norm{\ma}_{1 \to 1}$ is the largest $\ell_1$ norm of any column of $\ma$.

\begin{proposition}[Theorem 1, \cite{jambulapati2023revisiting}]\label{prop:box_simplex}
Let $\ma \in \R^{n \times d}$, $b \in \R^d$, $c \in \R^n$, and $\eps > 0$. There is an algorithm which computes an $\eps$-approximate saddle point to the \emph{box-simplex game}
    \begin{equation}\label{eq:box_simplex}
    \min_{x\in [-1,1]^n}\max_{y\in\Delta^d} x^\top \ma y-b^\top y+c^\top x,
    \end{equation}
    in time\[O\left(\nnz(\ma)\cdot\frac{\norm{\ma}_{1\rightarrow 1}\log d}{\eps}\right).\]
\end{proposition}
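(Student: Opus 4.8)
The plan is to reconstruct the proof along the area-convexity line of work for bilinear saddle-point problems \cite{Sherman17, jambulapati2019direct, CohenST21}. First I would view \eqref{eq:box_simplex} as computing an $\eps$-approximate saddle point of a convex–concave bilinear problem over the product domain $\zset \defeq [-1,1]^n \times \Delta^d$, whose (monotone) gradient operator is $G(x,y) \defeq \Par{\ma y + c,\; b - \ma^\top x}$. The duality gap of an averaged iterate produced by a mirror-prox/extragradient scheme decays like $O(\Theta_r / T)$ after $T$ steps, where $\Theta_r$ is the range of the chosen regularizer over $\zset$; so the goal is to pick a regularizer that (i) has range $O\Par{\norms{\ma}_{1\to 1}\log d}$ and (ii) makes the iteration converge at the $O(1/T)$ rate despite the absence of global smoothness of $G$.

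The regularizer I would use is $r(x,y) = \sum_{j\in[d]} y_j \log y_j \;+\; \lambda \sum_{i\in[n]} \phi(x_i)$, where $\phi:[-1,1]\to\R$ is a one-dimensional ``box entropy'' such as $\phi(t) = (1+t)\log(1+t) + (1-t)\log(1-t)$, and $\lambda = \Theta\Par{\norms{\ma}_{1\to 1}}$. The key structural claim — and the main obstacle — is that with this column-$\ell_1$-norm scaling, $r$ is \emph{area-convex} with respect to the antisymmetric block coupling induced by the bilinear term $x^\top \ma y$ (whose off-diagonal blocks are $\ma$ and $-\ma^\top$). Establishing this requires verifying Sherman's pointwise Hessian/determinant criterion for area convexity, and the whole point of weighting by $\norms{\ma}_{1\to 1}$ rather than an entrywise magnitude bound times dimension factors is precisely that it yields the sharp area-convexity constant. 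Granting area convexity, the range of $r$ over $\zset$ is $O(\lambda + \log d) = O\Par{\norms{\ma}_{1\to1}\log d}$ (the $\log d$ from entropy on $\Delta^d$; the $\phi$-term contributing $O(\lambda)$), so $T = O\Par{\norms{\ma}_{1\to1}\log d / \eps}$ iterations of the area-convex mirror-prox scheme suffice to reach duality gap $\eps$.

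It then remains to argue each iteration costs $O(\nnz(\ma))$. A mirror-prox step evaluates $G$ — one multiplication by $\ma$ and one by $\ma^\top$, i.e.\ $O(\nnz(\ma))$ — and solves a proximal subproblem of the form $\min_{z\in\zset}\ \inprods{q}{z} + r(z) + (\text{Bregman term})$ for a fixed linear functional $q$. Because $r$ is separable across the two blocks, this subproblem reduces to a KL/softmax projection onto $\Delta^d$, computable in $O(d)$ time once the relevant vector is formed, together with $n$ independent one-dimensional convex problems in $\phi$, each solved in closed form (or to machine precision by a constant number of Newton steps); the coupling introduced by the bilinear term is handled by the extrapolation structure of the method so that only $O(1)$ passes over $\ma$ are needed per step. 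Multiplying the per-iteration cost by $T$ gives the stated $O\Par{\nnz(\ma)\cdot \norms{\ma}_{1\to1}\log d / \eps}$ runtime, and the standard mirror-prox regret bound combined with convexity–concavity converts the averaged-iterate guarantee into an $\eps$-approximate saddle point for \eqref{eq:box_simplex}.
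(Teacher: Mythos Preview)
The paper does not prove this proposition at all; it is quoted verbatim as Theorem~1 of \cite{jambulapati2023revisiting} and used as a black box. So there is no ``paper's own proof'' to compare against, and your task is really to sketch a correct argument from the area-convexity literature. Unfortunately your sketch has a real gap at its core.

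The regularizer you propose, $r(x,y) = \sum_j y_j\log y_j + \lambda\sum_i \phi(x_i)$ with $\lambda = \Theta(\norms{\ma}_{1\to 1})$, is \emph{separable} in $(x,y)$. The entire point of the Sherman/JST line \cite{Sherman17, jambulapati2019direct, CohenST21} is that a separable regularizer does \emph{not} yield area convexity with constant depending only on $\norms{\ma}_{1\to 1}$; one needs a coupled regularizer, typically of the form $\alpha\,\norms{\ma}_{1\to 1}\sum_j y_j\log y_j + \sum_{i,j}|\ma_{ij}|\,y_j\,x_i^2$ (i.e.\ a quadratic in $x$ whose weights are $|\ma|y$). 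Sherman's determinant criterion for area convexity fails for your $r$ without an extra factor depending on the structure of $\ma$, and your ``key structural claim'' is exactly the step that would not go through. Relatedly, your range computation is off: $\sum_{i\in[n]}\phi(x_i)$ has range $\Theta(n)$ on $[-1,1]^n$, so the $\phi$-term contributes $\Theta(n\lambda)$, not $\Theta(\lambda)$, which would destroy the claimed iteration bound even if the convergence analysis worked.

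A second, smaller issue: once you switch to the correct coupled regularizer, the proximal subproblem is no longer block-separable, so your ``separable across the two blocks'' implementation argument also fails as written. In the actual proofs, one shows that for fixed $y$ the minimization over $x$ is coordinatewise (each $x_i$ solves a one-dimensional problem with weight $(|\ma|y)_i$), and then the minimization over $y$ becomes a weighted-entropy problem solvable in closed form; both passes cost $O(\nnz(\ma))$. Your high-level outline (bilinear saddle point, area convexity, $O(\Theta_r/T)$ gap, $O(\nnz(\ma))$ per iteration) is the right skeleton, but the specific regularizer and its analysis are the whole difficulty, and what you wrote down does not work.
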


We also require a standard claim on converting minimax optimization error to error on an induced minimization objective. To introduce our notation, we say that $x \in \xset$ is an $\eps$-approximate minimizer of $f: \xset \to \R$ if $f(x) - \min_{x' \in \xset} f(x') \le \eps$. We call $(x, y) \in \xset \times \yset$ an $\eps$-approximate saddle point to a convex-concave function $f: \xset \times \yset \to \R$ if its duality gap is at most $\eps$, i.e.\
\[\max_{y' \in \yset} f(x, y') - \min_{x' \in \xset} f(x', y) \le \eps.\]

\begin{lemma}\label{lem:minmax_opt_min_opt}
Let $f: \xset \times \yset \to \R$ be convex-concave for compact $\xset, \yset$, and let $g(x) \defeq \max_{y \in \yset} f(x,y)$ for $x \in \xset$. If $(x, y)$ is an $\eps$-approximate saddle point to $f$, $x$ is an $\eps$-approximate minimizer to $g$.
\end{lemma}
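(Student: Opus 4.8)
The plan is to unfold the definitions of the duality gap and of $g$, and observe that an $\eps$-approximate saddle point controls exactly the quantity we need. First I would fix an $\eps$-approximate saddle point $(x, y)$ to $f$, so that by definition
\[
\max_{y' \in \yset} f(x, y') - \min_{x' \in \xset} f(x', y) \le \eps.
\]
The goal is to show $g(x) - \min_{x' \in \xset} g(x') \le \eps$, where $g(x) = \max_{y' \in \yset} f(x, y')$.

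The key observation is the following chain of inequalities. On the one hand, $g(x) = \max_{y' \in \yset} f(x, y')$ is precisely the first term in the duality gap. On the other hand, for the second term, I would use that $f(x', y) \le \max_{y' \in \yset} f(x', y') = g(x')$ for every $x' \in \xset$, hence
\[
\min_{x' \in \xset} f(x', y) \le \min_{x' \in \xset} g(x').
\]
Combining these two facts with the saddle-point bound gives
\[
g(x) - \min_{x' \in \xset} g(x') \le \max_{y' \in \yset} f(x, y') - \min_{x' \in \xset} f(x', y) \le \eps,
\]
which is exactly the statement that $x$ is an $\eps$-approximate minimizer of $g$. Compactness of $\xset$ and $\yset$ is used only to ensure the relevant maxima and minima are attained so that all the expressions above are well-defined (alternatively one can work with suprema and infima throughout).

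There is no real obstacle here; this is a standard and short argument. The only thing to be slightly careful about is the direction of the inequality $\min_{x'} f(x', y) \le \min_{x'} g(x')$, which goes the "easy" way precisely because $g$ is a pointwise maximum of $f(\cdot, y')$ over $y'$ and thus dominates $f(\cdot, y)$ for the particular $y$ in our saddle point. No convexity or concavity of $f$ is actually needed for this particular implication — it is used elsewhere (e.g.\ to guarantee existence of saddle points), but the stated reduction from minimax error to minimization error holds for arbitrary $f$.
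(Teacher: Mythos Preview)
Your proof is correct and, in fact, slightly more elementary than the paper's. Both arguments reduce to establishing $\min_{x' \in \xset} g(x') \ge \min_{x' \in \xset} f(x', y)$, after which the conclusion is immediate from the definition of the duality gap. The paper reaches this inequality by invoking Sion's minimax theorem to write $\min_{x^\star} g(x^\star) = \min_{x^\star}\max_{y^\star} f(x^\star,y^\star) = \max_{y^\star}\min_{x^\star} f(x^\star,y^\star) \ge \min_{x'} f(x',y)$, whereas you obtain it directly from the pointwise bound $f(x',y) \le \max_{y'} f(x',y') = g(x')$. Your route avoids the appeal to strong duality entirely, and your closing remark is on point: the implication ``$\eps$-saddle $\Rightarrow$ $\eps$-minimizer of $g$'' holds for arbitrary $f$, with convexity-concavity and compactness needed only to guarantee that saddle points exist and that the extrema are attained.
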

\begin{proof}
Let $y' \defeq \argmax_{y \in \yset} f(x, y)$ and $x' \defeq \argmin_{x' \in \xset} f(x', y)$. The conclusion follows from
\[\min_{x^\star \in \xset} g(x^\star) = \min_{x^\star \in \xset} \max_{y^\star \in \yset} f(x^\star, y^\star) = \max_{y^\star \in \yset} \min_{x^\star \in \xset} f(x^\star, y^\star) \ge \min_{x' \in \xset} f(x', y), \]
where we used strong duality (via Sion's minimax theorem), so that
\begin{align*}
g(x) - \min_{x^\star \in \xset} g(x^\star) \le g(x) - f(x', y) = f(x, y') - f(x', y) \le \eps.
\end{align*}
\end{proof}

The following corollary of Proposition~\ref{prop:box_simplex} will be particularly useful in our development, which is immediate using Lemma~\ref{lem:minmax_opt_min_opt}, upon negating the box-simplex game \eqref{eq:box_simplex}, exchanging the names of the variables $(x, y)$, $(b, c)$, and explicitly maximizing over $y \in [-1, 1]^n$, i.e.\
\[\min_{x\in \Delta^d} \inprod{c}{x} + \norm{\ma x-b}_1 = \min_{x\in \Delta^d} \max_{y\in[-1,1]^n} \inprod{c}{x}+ y^\top (\ma x-b).\]

\begin{corollary}\label{cor:l_1}
Let $\ma \in \R^{n \times d}$, $b \in \R^n$, $c \in \R^d$, and $\eps > 0$. There is an algorithm which computes an $\eps$-approximate minimizer to $\min_{x \in \Delta^d} c^\top x + \norm{\ma x - b}_1$, in time
\[O\left(\nnz(\ma)\cdot\frac{\norm{\ma}_{1\rightarrow 1}\log d}{\eps}\right).\]
\end{corollary}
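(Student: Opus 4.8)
The plan is to obtain Corollary~\ref{cor:l_1} as a direct consequence of Proposition~\ref{prop:box_simplex} by rewriting the $\ell_1$-penalized minimization as a box-simplex game and then invoking Lemma~\ref{lem:minmax_opt_min_opt} to transfer the saddle-point accuracy guarantee to the induced minimization objective. Concretely, first I would use the standard variational representation of the $\ell_1$ norm, $\norm{z}_1 = \max_{y \in [-1,1]^n} y^\top z$, applied with $z = \ma x - b$, to write
\[
\min_{x \in \Delta^d} c^\top x + \norm{\ma x - b}_1 = \min_{x \in \Delta^d} \max_{y \in [-1,1]^n} c^\top x + y^\top(\ma x - b).
\]
Second, I would match this against the box-simplex game \eqref{eq:box_simplex} from Proposition~\ref{prop:box_simplex}: negating the objective there turns $\min_{x \in [-1,1]^n}\max_{y \in \Delta^d}$ into $\max_{x \in [-1,1]^n}\min_{y \in \Delta^d}$, and after swapping the variable names $x \leftrightarrow y$ (so the simplex variable is called $x$ and the box variable is called $y$) and correspondingly swapping the roles of the linear terms $b \leftrightarrow c$ and replacing $\ma$ by $\ma^\top$ where needed, the template \eqref{eq:box_simplex} becomes exactly $\min_{x \in \Delta^d}\max_{y \in [-1,1]^n} c^\top x + y^\top \ma x - b^\top y$, which is the display above up to the sign convention on $b$. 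One has to be slightly careful bookkeeping which matrix dimension is which, but since $\nnz(\ma) = \nnz(\ma^\top)$ and $\norm{\ma}_{1\to 1}$ is just the max $\ell_1$ column norm (invariant under the relabeling, up to the transpose which the proposition's statement already accounts for), the running time bound $O(\nnz(\ma)\cdot \norm{\ma}_{1\to1}\log d / \eps)$ is unchanged.

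Third, Proposition~\ref{prop:box_simplex} hands us an $\eps$-approximate saddle point $(x, y)$ of the convex-concave function $f(x, y) = c^\top x + y^\top(\ma x - b)$ over the compact sets $\Delta^d \times [-1,1]^n$. Setting $g(x) \defeq \max_{y \in [-1,1]^n} f(x,y) = c^\top x + \norm{\ma x - b}_1$, Lemma~\ref{lem:minmax_opt_min_opt} immediately yields that the returned $x$ is an $\eps$-approximate minimizer of $g$, which is exactly the claimed output. The running time is inherited verbatim from Proposition~\ref{prop:box_simplex}.

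I do not anticipate a genuine obstacle here — the statement is essentially a repackaging. The one place requiring care (and the only "hard" part, such as it is) is the index/transpose bookkeeping when aligning our penalized objective with the exact form of \eqref{eq:box_simplex}, so that the final complexity is stated in terms of $\nnz(\ma)$ and $\norm{\ma}_{1\to1}$ of \emph{our} matrix $\ma \in \R^{n\times d}$ rather than of its transpose; since both quantities are transpose-invariant in the relevant sense this causes no loss, but it should be spelled out to avoid confusion. The remark immediately preceding the corollary statement already records the key identity, so the proof can be quite short.
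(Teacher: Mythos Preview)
Your proposal is correct and follows essentially the same approach as the paper: rewrite the $\ell_1$ penalty via its variational form, match the resulting bilinear problem to the box-simplex template of Proposition~\ref{prop:box_simplex} by negating and swapping the variable and data names, and then invoke Lemma~\ref{lem:minmax_opt_min_opt}. One small remark: no transpose of $\ma$ is actually needed in the matching (the simplex variable already sits on the correct side of $\ma$), so you need not worry about whether $\norm{\cdot}_{1\to1}$ is transpose-invariant --- it is not, but the issue does not arise.
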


\subsection{LDTC preliminaries}\label{ssec:ldtc_prelim}

In this section, we collect preliminaries for our testing algorithm based on estimating the LDTC. First, analogously to Lemma~\ref{lem:smCE_empirical}, we recall a bound from \cite{blasiok2023unifying} on the deviation of the empirical estimate of $\ldce(\cD)$ from the population truth which holds with constant probability.

\begin{lemma}[Theorem 9.10, \cite{blasiok2023unifying}]\label{lem:LDTC_empirical}
For any $\eps \in (0, 1)$, there is an $n = O(\frac 1 {\eps^2})$ such that, if $\hcD_n$ is the empirical distribution over $n$ i.i.d.\ draws from $\cD$, with probability $\ge \frac 2 3$,
    \begin{align*}
        \abs{\ldce(\cD) - \ldce(\hcD_n)}\leq \eps.
    \end{align*}
\end{lemma}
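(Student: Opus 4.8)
The plan is to derive \Cref{lem:LDTC_empirical} from a \emph{Wasserstein stability} property of $\ldce$: letting $\rho((v,y),(v',y')) \defeq |v-v'| + 2\cdot\1[y \neq y']$ denote a ground metric on $[0,1]\times\{0,1\}$ and $W_\rho$ the associated optimal transport distance, I claim that $|\ldce(\cD_1) - \ldce(\cD_2)| \le W_\rho(\cD_1,\cD_2)$ for all $\cD_1,\cD_2$. Granting this, the lemma reduces to the standard fact that the empirical distribution of $n$ i.i.d.\ draws converges to the population in $W_\rho$ at an $O(n^{-1/2})$ expected rate, after which Markov's inequality converts the expectation bound into a $\tfrac23$-probability bound and fixes $n = O(\eps^{-2})$.

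For the stability claim, fix an optimal (or $\eps'$-optimal) coupling $\Pi \in \ext(\cD_1)$ witnessing $\ldce(\cD_1)$ and a coupling $\Gamma$ of $\cD_1,\cD_2$ achieving $W_\rho$. Glue them along their shared $(v,y)$-coordinate to obtain a joint law of $(u,v,y,v',y')$ whose $(v',y')$-marginal is $\cD_2$, and let $\Pi'$ be its $(u,v',y')$-marginal. Since $(u,y)$ is calibrated under $\Pi$ and $y'$ differs from $y$ only with the probability charged by $\Gamma$, the pair $(u,y')$ under $\Pi'$ is \emph{near}-calibrated in the $\ell_1$ sense: writing $\mu$ for the $(u,y')$-law, one checks $\ece(\mu) = \E_u\,\abs{\E[y'\mid u]-u} = \E_u\,\abs{\E[y'-y \mid u]} \le \E_\Gamma[\1[y\neq y']]$. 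Now I invoke the elementary inequality $\smce(\mu) \le \ece(\mu)$ (pull the absolute value inside the conditional expectation and use $|w|\le 1$) together with \Cref{lem:LDTC_smCE_relation}, giving $\ldce(\mu) \le 2\smce(\mu) \le 2\E_\Gamma[\1[y\neq y']]$. Unpacking the definition of $\ldce(\mu)$ yields a further coupling that recalibrates $u$ to some $\tilde u$ with $(\tilde u,y')$ calibrated and $\E|\tilde u - u| \le 2\E_\Gamma[\1[y\neq y']]$; composing it with $\Pi'$ produces an element of $\ext(\cD_2)$ whose cost is at most $\ldce(\cD_1) + \E_\Gamma|v - v'| + 2\E_\Gamma[\1[y\neq y']]$ by two triangle inequalities, $|\tilde u - v'| \le |\tilde u - u| + |u - v| + |v - v'|$. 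Hence $\ldce(\cD_2) \le \ldce(\cD_1) + \E_\Gamma[\rho] = \ldce(\cD_1) + W_\rho(\cD_1,\cD_2)$, and the argument is symmetric in $\cD_1,\cD_2$.

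The remaining ingredient, $\E[W_\rho(\cD,\hcD_n)] = O(n^{-1/2})$, is routine: decompose $\cD$ into its $y$-marginal $p$ and the conditional laws $\cD_0,\cD_1$ of $v$ on $[0,1]$; condition on the empirical count $N_1 = \sum_i y_i$ (so $N_0 = n - N_1$); pay $|N_1/n - p| = O(n^{-1/2})$ in expectation to match the $y$-marginals, and within each class $y\in\{0,1\}$ match the (conditionally) i.i.d.\ empirical $v$-samples to $\cD_y$ at expected transport cost $\tfrac{N_y}{n}\int_0^1 \sqrt{F_y(t)(1-F_y(t))/N_y}\,dt = O(n^{-1/2})$, using that the $W_1$ distance on $[0,1]$ between a law and its $m$-sample empirical measure is $O(m^{-1/2})$ in expectation.

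I expect the stability claim of the first step to be the crux. The subtlety is that the calibration constraint defining $\ext(\cdot)$ is \emph{global}, so a priori an $o(1)$ perturbation of $\cD$ could force an $\Omega(1)$ rearrangement of the optimal coupling's $u$-assignments — exactly the pathology that makes $\ece$ itself a poor calibration measure. What rescues the argument is that the perturbation only creates a \emph{small-$\ell_1$} calibration defect, and — unlike for $\ece$, and this is where the good behaviour of $\ldce$ is genuinely used — a small $\ell_1$ defect can always be repaired cheaply, via the chain $\ldce \le 2\smce \le 2\ece$. One should also be slightly careful when the optimal coupling is only an infimum rather than a minimum (pass to $\eps'$-optimal couplings and let $\eps'\to 0$), but this does not affect the final bound.
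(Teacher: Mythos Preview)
The paper does not prove this lemma at all: it is stated with the attribution ``Theorem 9.10, \cite{blasiok2023unifying}'' and used as a black box, so there is no proof in the paper to compare against. Your proposal, by contrast, is a self-contained argument and appears to be correct.

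Your route --- Lipschitz stability of $\ldce$ in a Wasserstein metric, repaired via the chain $\ldce \le 2\,\smce \le 2\,\ece$ from \Cref{lem:LDTC_smCE_relation}, followed by the standard $O(n^{-1/2})$ convergence of the empirical Wasserstein distance on $[0,1]\times\{0,1\}$ --- is a clean way to establish the result. The two gluing steps (along $(v,y)$ to transport the witness coupling from $\cD_1$ to $\cD_2$, then along $(u,y')$ to recalibrate) are standard on Polish spaces, and your bookkeeping of the constant $2$ in front of $\1[y\neq y']$ is correct: it arises precisely from the factor $2$ in $\ldce \le 2\,\smce$. The only dependence on cited material that remains is \Cref{lem:LDTC_smCE_relation}, which the paper also imports from \cite{blasiok2023unifying}; modulo that, your argument is complete. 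The caveat you flag about passing to $\eps'$-optimal couplings when the infimum is not attained is handled correctly.
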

Next, given a set $U \subset [0, 1]$, we provide an analog of Definition~\ref{def:ldtc} which is restricted to $U$.
\begin{definition}[$U$-LDTC]\label{def:discretized LDTC}
    Let $U \subset [0,1]$, and let $\cD$ be a distribution over $[0, 1] \times \{0,1\}$. Define $\ext^U(\cD)$ to be all joint distributions $\Pi$ over $(u,v,y)\in U \times [0, 1] \times \{0,1\}$, with the following properties.
    \begin{itemize}
    \item The marginal distribution of $ (v, y) $ is $ \cD$.
    \item The marginal distribution $ (u, y) $ is perfectly calibrated, i.e.\ $ \mathbb{E}_\Pi [y|u] = u $.
    \end{itemize}
    The $U$-\emph{lower distance to calibration} ($U$-LDTC) of $\cD$, denoted $\ldce^U(\cD)$, is defined by
    \begin{align*}
        \ldce^{U}(\cD) := \inf_{\Pi\in \ext^{U}(\cD)}\E_{(u,v,y)\sim \Pi}\abs{u-v}.
    \end{align*}
\end{definition}
Note that if we require $\{0, 1\} \subset U$, then $\ext^U(\cD)$ is always nonempty, because we can let $u = y$ with probability $1$. We also state a helper claim from \cite{blasiok2023unifying}, which relates $\ldce^U$ to $\ldce$.

\begin{lemma}[Lemma 7.11, \cite{blasiok2023unifying}]\label{lem:LDTC_discretize}
    Let $\cD$ be a distribution over $[0, 1] \times \{0,1\}$, and let $U$ be a finite $\frac \eps 2$-covering of $[0,1]$ satisfying $\{0,1\}\subseteq U$. Then, $\ldce(\cD)\leq\ldce^{U}(\cD)\leq\ldce(\cD)+\eps$.
\end{lemma}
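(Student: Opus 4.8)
The plan is to prove the two inequalities $\ldce(\cD)\le \ldce^U(\cD)$ and $\ldce^U(\cD)\le \ldce(\cD)+\eps$ separately. The first is immediate from the definitions: any $\Pi \in \ext^U(\cD)$ is in particular a joint distribution over $(u,v,y) \in [0,1]\times[0,1]\times\{0,1\}$ (since $U \subseteq [0,1]$) whose $(v,y)$-marginal is $\cD$ and whose $(u,y)$-marginal is perfectly calibrated, hence $\ext^U(\cD) \subseteq \ext(\cD)$. Taking the infimum of $\E_\Pi|u-v|$ over the smaller set $\ext^U(\cD)$ can only be larger, giving $\ldce(\cD) \le \ldce^U(\cD)$. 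I would also remark (as the excerpt does) that $\ext^U(\cD)$ is nonempty since $\{0,1\}\subseteq U$ lets us take $u=y$.

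For the second inequality, the idea is a rounding/snapping argument. Fix any $\Pi \in \ext(\cD)$ achieving objective value within an arbitrary $\delta>0$ of $\ldce(\cD)$, and let $(u,v,y)\sim\Pi$. I want to transport the $u$-coordinate onto the grid $U$ without destroying calibration of $(u,y)$ and while increasing $\E|u-v|$ by at most $\eps$. The key obstacle — and the reason the naive "round $u$ to the nearest point of $U$" does not immediately work — is that deterministic rounding breaks the calibration condition $\E_\Pi[y\mid u]=u$: if $u$ is snapped to a nearby grid point $u'$, then in general $\E[y \mid u']\ne u'$. The fix is randomized/fractional snapping: for a value $u$ lying between consecutive grid points $u_- < u < u_+$ of $U$, send it to $u_+$ with probability $\frac{u-u_-}{u_+-u_-}$ and to $u_-$ with the complementary probability, independently of everything else given $u$. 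This is the standard interval-splitting coupling; conditioned on landing in a given grid cell the expectation of the new $u$-coordinate equals the old one, so $\E[y\mid u'] = u'$ is restored on each grid point (this is exactly the content invoked in \cite{blasiok2023unifying}, Lemma~7.11, which I am allowed to cite). Because consecutive grid points are within $\frac\eps2$ of each other — here I use that $U$ is an $\frac\eps2$-cover of $[0,1]$ with $\{0,1\}\subseteq U$, so every $u\in[0,1]$ has grid neighbors on both sides at distance $\le\frac\eps2$ — we have $|u'-u|\le \frac\eps 2$ always. Hmm, one should double-check the constant: an $\frac\eps2$-cover guarantees a grid point within $\frac\eps2$ of $u$, but the snapped point $u'$ could be the neighbor on the other side, so $|u'-u|$ could be as large as (distance to left neighbor) $+$ (distance to right neighbor); to be safe I would either argue $|u'-u|\le \eps$ directly (two half-cells), or note that snapping always moves toward one of the two neighbors each within $\frac\eps2$, giving $|u'-u|\le\frac\eps2$ — either way the loss is at most $\eps$, and I would simply state the bound that matches the claimed inequality. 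Then by the triangle inequality $\E|u'-v| \le \E|u-v| + \E|u'-u| \le \ldce(\cD) + \delta + \eps$, and the new coupling $\Pi'$ lies in $\ext^U(\cD)$ (its $(v,y)$-marginal is untouched, its $(u',y)$-marginal is calibrated by construction, and $u'$ is supported on $U$), so $\ldce^U(\cD) \le \ldce(\cD)+\delta+\eps$. Letting $\delta\to0$ finishes the proof.

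The main thing to get right is therefore the fractional-snapping construction and the verification that it preserves calibration; the metric bound $|u'-u|\le\eps$ and the triangle-inequality step are routine. Since this is precisely Lemma~7.11 of \cite{blasiok2023unifying}, which the excerpt permits citing, the cleanest writeup is to recall its statement and apply it directly, or to reproduce the two-line snapping argument above.
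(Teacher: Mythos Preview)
The paper does not prove this lemma at all; it is imported verbatim from \cite{blasiok2023unifying} by citation. So your bottom-line suggestion --- ``recall its statement and apply it directly'' --- is exactly what the paper does, and is the correct disposition here.

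That said, the proof sketch you offer contains a real gap, and it is worth flagging because it is a common slip. Your snapping rule sends $u\in[u_-,u_+]$ to $u_+$ with probability $\frac{u-u_-}{u_+-u_-}$, \emph{independently of $y$ given $u$}. This guarantees $\E[u'\mid u]=u$, but that is the wrong direction: calibration of $(u',y)$ requires $\E[y\mid u']=u'$, which (using independence of $u'$ and $y$ given $u$) reduces to $\E[u\mid u']=u'$, not $\E[u'\mid u]=u$. These are not equivalent. Concretely, take $U=\{0,\tfrac12,1\}$ and let $u$ be a point mass at $\tfrac14$ with $y\sim\Bern(\tfrac14)$; under your rule $u'\in\{0,\tfrac12\}$ each with probability $\tfrac12$, and $\E[y\mid u'=\tfrac12]=\tfrac14\neq\tfrac12$, so $(u',y)$ is not calibrated.

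The fix (and what the proof in \cite{blasiok2023unifying} actually does) is to let the snapping depend on $y$ as well. For $u\in[u_-,u_+]$, set $\Pr[u'=u_-\mid u]=\alpha:=\frac{u_+-u}{u_+-u_-}$, and split the $y=1$ and $y=0$ mass so that a fraction $u_-$ of the mass sent to $u_-$ has $y=1$ and a fraction $u_+$ of the mass sent to $u_+$ has $y=1$; one checks these fractions are in $[0,1]$ precisely because $u_-\le u\le u_+$. This makes $\E[y\mid u',u]=u'$ identically, hence $\E[y\mid u']=u'$, while leaving the $(v,y)$ marginal untouched (the snapping uses only $(u,y)$ and fresh randomness). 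Since $|u'-u|\le u_+-u_-\le\eps$ (an $\tfrac\eps2$-cover has consecutive grid points at distance at most $\eps$), the triangle inequality gives $\ldce^U(\cD)\le\ldce(\cD)+\eps$ as claimed.
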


To this end, in the rest of the section we define, for any $\eps \in (0, 1)$,
\begin{equation}\label{eq:ueps_def}U_\eps \defeq \{0, 1\} \cup \Brace{\frac{i\eps}{2} \mid i \in \Brack{\left\lfloor \frac 2 \eps \right\rfloor}},\end{equation}
which is an $\frac \eps 2$-cover of $[0, 1]$ satisfying $|U_\eps| = O(\frac 1 \eps)$. Finally, we state a linear program whose value is equivalent to $\ldce^U(\cD)$, when the first marginal of $\cD$ is discretely supported.

\begin{lemma}[Lemma 7.6, \cite{blasiok2023unifying}]\label{lem:LDTC_LP1}
Let $U, V \subset [0, 1]$ be discrete sets, where $\{0, 1\} \subset U$, and let $\cD$ be a distribution over $V \times \{0, 1\}$, where for $(v, y) \in V \times \{0, 1\}$ we denote the probability of $(v, y) \sim \cD$ by $\cD(v, y)$. The following linear program with $2|U||V|$ variables $\Pi(u,v,y)$ for all $(u,v,y)\in U\times V\times \{0,1\}$, is feasible, and its optimal value equals $\ldce^U(\cD)$:
    \begin{gather*}
        \min_{\Pi \in \R_{\ge 0}^{2|U||V|}} \sum_{(u,v,y)\in U\times V\times \{0,1\}}\abs{u-v}\Pi(u,v,y) \\
        \text{such that } \sum_{u \in U} \Pi(u, v, y) = \cD(v, y), \text{ for all } (v, y) \in V \times \{0, 1\}, \\
        \text{and } (1-u)\sum_{v\in V}\Pi(u,v,1) = u\sum_{v\in V}\Pi(u,v,0), \text{ for all } u\in U.
    \end{gather*}
\end{lemma}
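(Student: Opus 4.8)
The plan is to read the linear-program variables $\Pi(u,v,y)$ as the point masses of a candidate joint distribution $\Pi$ on the finite set $U\times V\times\{0,1\}$, and to check that the two constraint families encode exactly membership in $\ext^U(\cD)$ from Definition~\ref{def:discretized LDTC}. First I would observe that any feasible $\Pi$ (in particular $\Pi\ge\mzero$) automatically satisfies $\sum_{(u,v,y)}\Pi(u,v,y)=\sum_{(v,y)\in V\times\{0,1\}}\cD(v,y)=1$, obtained by summing the first constraint over $(v,y)$; hence a feasible $\Pi$ genuinely is a probability distribution on $U\times V\times\{0,1\}$, and its objective $\sum_{(u,v,y)}\abs{u-v}\Pi(u,v,y)$ is literally $\E_{(u,v,y)\sim\Pi}\abs{u-v}$.

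Next I would verify the constraint correspondence. The constraint $\sum_{u\in U}\Pi(u,v,y)=\cD(v,y)$ for all $(v,y)$ says precisely that the $(v,y)$-marginal of $\Pi$ is $\cD$. For the calibration constraint, fix $u\in U$ and set $p_1\defeq\sum_{v\in V}\Pi(u,v,1)$ and $p_0\defeq\sum_{v\in V}\Pi(u,v,0)$, so $p_0+p_1$ is the mass the $u$-marginal places on $u$. If $p_0+p_1>0$ then $\E_\Pi[y\mid u]=\frac{p_1}{p_0+p_1}$, and $\E_\Pi[y\mid u]=u$ is equivalent to $p_1=u(p_0+p_1)$, i.e.\ $(1-u)p_1=up_0$, which is exactly the linear constraint; if $p_0+p_1=0$ the linear constraint reads $0=0$ and the conditioning event has measure zero, so nothing is imposed either way. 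Thus a feasible $\Pi$ has a perfectly calibrated $(u,y)$-marginal, so $\Pi\in\ext^U(\cD)$, and the LP value is $\ge\ldce^U(\cD)$.

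For the reverse inequality, I would note that any $\Pi\in\ext^U(\cD)$ is automatically supported on the finite set $U\times V\times\{0,1\}$: its $u$-coordinate lies in $U$ by definition, its $y$-coordinate in $\{0,1\}$, and its $v$-coordinate lies in $V$ almost surely because the $v$-marginal of $\Pi$ equals that of $\cD$, which is supported on $V$. Hence $\Pi$ is described by its $2\abs{U}\abs{V}$ point masses, which are nonnegative and (by the same computation run in reverse) satisfy both constraint families, with objective value $\E_{(u,v,y)\sim\Pi}\abs{u-v}$; so the LP value is $\le\ldce^U(\cD)$, and combining gives equality (and shows the infimum in Definition~\ref{def:discretized LDTC} is attained, the feasible region being a nonempty compact polytope). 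Feasibility of the LP: since $\{0,1\}\subseteq U$, set $\Pi(u,v,y)=\cD(v,y)$ when $u=y$ and $\Pi(u,v,y)=0$ otherwise; one checks directly that both constraint families hold (the second because $\Pi(1,v,0)=\Pi(0,v,1)=0$ and $\Pi(u,\cdot,\cdot)\equiv 0$ for $u\notin\{0,1\}$), corresponding to the perfectly calibrated choice $u=y$.

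The only genuine subtlety — and the one place I would be careful in the writeup — is the translation between the conditional-expectation calibration condition $\E_\Pi[y\mid u]=u$ and the cross-multiplied linear constraint $(1-u)\sum_v\Pi(u,v,1)=u\sum_v\Pi(u,v,0)$, specifically when the $u$-marginal assigns zero mass to some $u\in U$, where the conditional expectation is undefined while the linear constraint is vacuous, so the two agree. Everything else is bookkeeping identifying a discretely supported measure with its vector of masses.
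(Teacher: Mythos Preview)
Your proposal is correct. The paper does not supply its own proof of this lemma: it is quoted verbatim as Lemma~7.6 of \cite{blasiok2023unifying} and used as a black box, so there is no in-paper argument to compare against. Your argument is exactly the natural one---identify feasible $\Pi$ with finitely supported joint distributions, check that the two constraint families encode the $(v,y)$-marginal and calibration conditions respectively, and exhibit the $u=y$ coupling for feasibility---and it handles the one delicate point (zero-mass $u$ values) cleanly.
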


\subsection{Rounding for empirical \texorpdfstring{$U$}{U}-LDTC}\label{ssec:ldtc_rounding}

Analogously to Section~\ref{ssec:min-cost flow}, in this section we fix a dataset under consideration,
\[\hcD_n \defeq \Brace{(v_i, y_i)}_{i \in [n]} \subset [0, 1] \times \{0, 1\},\]
and the corresponding empirical distribution, also denoted $\hcD_n$, where $(v, y) \sim \hcD_n$ means $(v, y) = (v_i, y_i)$ with probability $\frac 1 n$ for each $i \in [n]$. We let $V \defeq \{v_i\}_{i \in [n]}$ be identified with $[n]$ in the natural way. Moreover, for a fixed parameter $\eps \in (0, 1)$ throughout, we let $U \defeq U_\eps$ defined in \eqref{eq:ueps_def}. Finally, we denote $m \defeq |U| = O(\frac 1 \eps)$, and let $\mmu \in [0, 1]^{m \times m}$ be the diagonal matrix whose diagonal entries correspond to $U$. We also identify elements of $U$ with $j \in [m]$ in an arbitrary but consistent way, writing $u_j \in [0, 1]$ to mean the $j^{\text{th}}$ element of $U$ according to this identification. 

We next rewrite the linear program in Lemma~\ref{lem:LDTC_LP1} into a more convenient reformulation.

\begin{lemma}\label{lem:LDTC_LP2}
The linear program in Lemma~\ref{lem:LDTC_LP1} can equivalently be written as:
\begin{equation}\label{eq:ldtc_lp}
\ldce^U(\hcD_n) \defeq \min_{\substack{x \in \xset \\ \mm  x = \frac{1}{n}\1_n \\ \mmu \mb_0 x_0 = (\id_m - \mmu) \mb_1 x_1} } c^\top x, \text{ where } \xset \defeq \Delta^{2mn} \text{ and we denote } x = \begin{pmatrix} x_0 \in \R^{mn} \\ x_1 \in \R^{mn}\end{pmatrix}, \end{equation}
where we define $c \in \R^{2mn}$, $\mm \in \R^{n \times 2mn}$, and $\mb_0, \mb_1 \in \R^{m \times mn}$ by
\begin{align*}
c_{(i,j,k)} &\defeq \abs{u_j - v_i}\; \text{ for all } (i,j,k) \in [n]\times[m]\times\{0,1\}, \\
\mm_{i', (i, j, k)} &\defeq \begin{cases} 
1 & y_i = k,\; i' = i \\
0 & \text{else}
\end{cases}\; \text{ for all } i' \in [n], (i, j, k) \in [n] \times [m] \times \{0, 1\}, \\
\text{and } \Brack{\mb_0}_{j', (i, j, k)} &\defeq \begin{cases} 1 & j = j',\; k = 0 \\ 0 & \text{else} \end{cases}\; \text{ for all } j' \in [m], (i, j, k) \in [n] \times [m] \times \{0, 1\}, \\
\Brack{\mb_1}_{j', (i, j, k)} &\defeq \begin{cases} 1 & j = j',\; k = 1 \\ 0 & \text{else} \end{cases}\; \text{ for all } j' \in [m], (i, j, k) \in [n] \times [m] \times \{0, 1\}.
\end{align*}
\end{lemma}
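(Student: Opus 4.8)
The plan is to exhibit an explicit, value-preserving correspondence between feasible points $\Pi$ of the linear program in Lemma~\ref{lem:LDTC_LP1} (instantiated with $V = \{v_i\}_{i\in[n]}$, $U = U_\eps$, $\cD = \hcD_n$) and feasible points $x$ of \eqref{eq:ldtc_lp}, and then check that it preserves the objective. This immediately yields equality of the optimal values, with feasibility of \eqref{eq:ldtc_lp} inherited from feasibility of the program in Lemma~\ref{lem:LDTC_LP1}. As a first step I would unpack each block of \eqref{eq:ldtc_lp}: writing $x_0 = (x_{(i,j,0)})$ and $x_1 = (x_{(i,j,1)})$, the constraint $\mm x = \tfrac 1 n \1_n$ reads $\sum_{j\in[m]} x_{(i,j,y_i)} = \tfrac 1 n$ for each $i$ (only the $k = y_i$ variables appear in row $i$ of $\mm$), the constraint $\mmu \mb_0 x_0 = (\id_m - \mmu)\mb_1 x_1$ reads $u_j \sum_i x_{(i,j,0)} = (1 - u_j)\sum_i x_{(i,j,1)}$ for each $j$, and $c^\top x = \sum_{(i,j,k)} |u_j - v_i|\, x_{(i,j,k)}$. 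Under the identification $x_{(i,j,k)} \leftrightarrow \Pi(u_j, v_i, k)$ these match, term by term, the marginal constraint, calibration constraint, and objective of Lemma~\ref{lem:LDTC_LP1}, modulo the fact that \eqref{eq:ldtc_lp} is indexed by data points $i \in [n]$ rather than by values $v \in V$.

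The crucial observation that makes the simplex constraint $\xset = \Delta^{2mn}$ do real work is that for any feasible $x$ of \eqref{eq:ldtc_lp}, the ``wrong-outcome'' variables vanish, i.e.\ $x_{(i,j,k)} = 0$ whenever $k \ne y_i$. Indeed, summing the $n$ rows of $\mm x = \tfrac 1 n \1_n$ gives $\sum_{i,j} x_{(i,j,y_i)} = 1$, while $\sum_{(i,j,k)} x_{(i,j,k)} = 1$ by $x \in \Delta^{2mn}$; subtracting and using $x \ge 0$ forces $x_{(i,j,1-y_i)} = 0$ for all $i, j$. This mirrors exactly the fact that for the empirical distribution $\hcD_n$, every feasible $\Pi$ is supported on triples of the form $(u, v_i, y_i)$ (since the marginal on $(v_i, 1-y_i)$ is zero whenever no data point carries that pair, and nonnegativity then forces those masses to zero).

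With this in hand I would check both directions of the correspondence, using the multiplicities $n(v, k) := |\{i \in [n] : v_i = v,\ y_i = k\}|$, so that $\cD(v,k) = n(v,k)/n$. Given feasible $\Pi$, set $x_{(i,j,k)} := \tfrac{1}{n(v_i,y_i)}\Pi(u_j, v_i, y_i)$ if $k = y_i$ and $x_{(i,j,k)} := 0$ otherwise; the marginal constraint $\sum_j \Pi(u_j, v_i, y_i) = \cD(v_i, y_i)$ yields $\sum_j x_{(i,j,y_i)} = \tfrac 1 n$ (hence $\mm x = \tfrac 1 n \1_n$, and summing over $i$ gives $x \in \Delta^{2mn}$), while grouping the sums over $i$ by the common value $v_i$ converts the calibration constraint and objective of Lemma~\ref{lem:LDTC_LP1} into those of \eqref{eq:ldtc_lp} (the cost $|u_j - v_i|$ depending on $i$ only through $v_i$ is what makes the objective survive the regrouping). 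Conversely, given feasible $x$, set $\Pi(u_j, v, k) := \sum_{i : v_i = v,\ y_i = k} x_{(i,j,k)}$; the vanishing of wrong-outcome variables together with $\mm x = \tfrac 1 n \1_n$ recovers the marginal constraint $\sum_j \Pi(u_j, v, k) = \cD(v,k)$, and the calibration constraint and objective translate back directly. Both maps preserve the objective value, so the two programs share an optimum.

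The one mildly delicate point, and the step I would be most careful about, is precisely this bookkeeping around repeated values $v_i = v_{i'}$ (equivalently $\cD(v,k) > \tfrac 1 n$): one must split and reassemble mass correctly through the multiplicities $n(v,k)$ so that the objective (which only sees $v$) is preserved and the calibration sums $\sum_i x_{(i,j,k)}$ reassemble into $\sum_v \Pi(u_j, v, k)$. If one prefers to avoid this, one can instead argue that it suffices to treat the case of distinct $v_i$ --- perturbing the $v_i$ by an arbitrarily small amount to break ties changes the cost vector and the data distribution continuously, hence both LP values continuously --- in which case $V \cong [n]$ and the correspondence $x_{(i,j,k)} \leftrightarrow \Pi(u_j, v_i, k)$ is a literal relabeling. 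Everything else is a routine verification that the matrices $\mm$, $\mmu\mb_0$, $(\id_m - \mmu)\mb_1$ and the vector $c$ encode the stated constraints and objective.
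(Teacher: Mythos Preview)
Your proposal is correct and follows the same approach as the paper --- a direct verification that the constraints and objective of \eqref{eq:ldtc_lp} encode those of Lemma~\ref{lem:LDTC_LP1} --- though you are considerably more careful: the paper's proof is essentially ``clear from observation'' with a notational gloss, whereas you explicitly derive that the simplex constraint forces $x_{(i,j,1-y_i)} = 0$ and you handle the bookkeeping for repeated $v_i$ values via multiplicities (a point the paper sidesteps by silently identifying $V$ with $[n]$). Both of these are genuine clarifications the paper omits.
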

\begin{proof}
This is clear from observation, but we give a brief explanation of the notation. First, $x \in \xset$ represents the density function of our joint distribution $\Pi$ over $U \times V \times \{0, 1\}$, and has $2mn$ coordinates identified with elements $(i, j, k) \in V \times U \times \{0, 1\} \equiv [n] \times [m] \times \{0, 1\}$. We let the subset of coordinates with $k = 0$ be denoted $x_0 \in \R^{mn}$, defining $x_1$ similarly. Recalling the definition of the linear program in Lemma~\ref{lem:LDTC_LP1}, $x_{(i, j, k)}$ is indeed reweighted by $c_{(i,j,k)} = |u_j - v_i|$.

Next, $\mm$ represents the marginal constraints in Lemma~\ref{lem:LDTC_LP1}, and enforcing $\mm x = \frac 1 n \1_n$ is equivalent to the statement that, for each $i' \in [n]$, the sum of all entries $(i, j, k)$ of $x$ with $i = i'$ and $k = y_i$ is $\frac 1 n$, since that is the probability density assigned to $(v_{i'}, y_{i'})$ by the distribution $\hcD_n$.

Lastly, the $j^{\text{th}}$ calibration constraint in Lemma~\ref{lem:LDTC_LP1} is enforced by the $j^{\text{th}}$ row of the equation $\mmu \mb_0 x_0 = (\id_m - \mmu) \mb_1 x_1$, which reads $u_j \inprod{[\mb_0]_{j:}}{x_0} = (1 - u_j)\inprod{[\mb_1]_{j:}}{x_1}$. We can check by the definitions of $[\mb_0]_{j:}$, $[\mb_1]_{j:}$ that this is consistent with our earlier calibration constraints.

\end{proof}

We give a convenient way of visualizing the marginal and calibration constraints described in Lemma~\ref{lem:LDTC_LP2}. For convenience, we identify each $x \in \Delta^{2mn}$ with an $m \times 2n$ matrix \begin{equation}\label{eq:mxdef}\mat(x) = \mx = \begin{pmatrix} \mx_0 \in \R^{m \times n} & \mx_1 \in \R^{m \times n}\end{pmatrix},\end{equation}
where $\mx_0$ consists of entries of $x_0$ arranged in a matrix fashion (with rows corresponding to $[m] \equiv U$ and columns corresponding to $[n] \equiv V$), and similarly $\mx_1$ is a rearrangement of $x_1$, recalling \eqref{eq:ldtc_lp}. When explaining how we design our rounding procedures to modify $\mx$ to satisfy constraints, it will be helpful to view entries of $\mx$ as denoting an amount of physical mass which we can move around. 

There are $2n$ columns in $\mx$, corresponding to pairs $(i, k) \in V \times \{0, 1\}$; among these, we say $n$ columns are ``active,'' where column $(i, k)$ is active iff $y_i = k$, and we say the other $n$ columns are ``inactive.'' Following notation \eqref{eq:mxdef}, the marginal constraints $\mx = \frac 1 n \1_n$ simply ask that the total amount of mass in each active column is $\frac 1 n$, so there is no mass in any inactive column since $x \in \Delta^{2mn}$.

Moreover, there are $m$ rows in $\mx$, each corresponding to some $j \in U$. If we let $\ell_j$ denote the amount of mass on $[\mx_0]_{j:}$ and $r_j$ the amount of mass on $[\mx_1]_{j:}$, the $j^{\text{th}}$ calibration constraint simply asks that $u_j \ell_j = (1 - u_j) r_j$, i.e.\ it enforces balance on the amount of mass in each row's two halves.

Finally, for consistency with Definition~\ref{def:rounding}, the linear program in \eqref{eq:ldtc_lp} can be concisely written as
\begin{equation}\label{eq:ldtc_lp_consistent}\min_{\substack{x \in \xset \\ \ma x = b}} c^\top x, \text{ where } \ma \defeq \begin{pmatrix} \mm \\ \mb \end{pmatrix},\; \mb \defeq \begin{pmatrix} \mmu \mb_0 & -(\id_m - \mmu) \mb_1\end{pmatrix},\; b \defeq \begin{pmatrix} \frac 1 n \1_n \\ \0_m\end{pmatrix} \end{equation}
and $c$, $\xset$ are as defined in \eqref{eq:ldtc_lp}. In the rest of the section, following Definition~\ref{def:rounding}, we develop an equality rounding procedure for the equality-constrained linear program in \eqref{eq:ldtc_lp_consistent} in two steps.

\begin{enumerate}
\item In Lemma~\ref{lem:Marginal satisfaction}, we first show how to take $x \in \xset$ with $\norms{\mm x - \frac 1 n \1_n}_1 = \Delta$, and produce $x' \in \xset$ such that $\mm x' = \frac 1 n \1_n$ (i.e.\ $x'$ now satisfies the marginal constraints) and $\norm{x - x'}_1 =O(\Delta)$.
\item In Lemma~\ref{lem:Marginal-preserving rounding}, we then consider $x \in \xset$ such that, following the notation \eqref{eq:ldtc_lp}, $\norms{\mmu \mb_0 x_0 - (\id_m - \mmu)\mb_1 x_1}_1 = \Delta$. We show how to produce $x' \in \xset$ such that $\mm x = \mm x'$ (i.e.\ the marginals of $x'$ are unchanged), $\mmu \mb_0 x'_0 = (\id_m - \mmu) \mb_1 x'_1$ (i.e.\ $x'$ is calibrated), and $\inprod{c}{x' - x} = O(\Delta)$.
\end{enumerate}

Our rounding procedure uses Lemma~\ref{lem:Marginal satisfaction} to satisfy the marginal constraints in \eqref{eq:ldtc_lp}, and then applies Lemma~\ref{lem:Marginal-preserving rounding} to the result to satisfy the calibration constraints in \eqref{eq:ldtc_lp} without affecting the marginal constraints. By leveraging the stability guarantees on these steps, we can show this is indeed a valid rounding procedure in the sense of \eqref{eq:relate_eq_lp}. We now give our first step for marginal satisfication.

\begin{lemma}[Marginal satisfaction]\label{lem:Marginal satisfaction}
Following notation in \eqref{eq:ldtc_lp}, let $x\in\xset$ satisfy $\norms{\mm x - \frac 1 n \1_n}_1 = \Delta$. There is an algorithm which runs in time $O(mn)$, and returns $x'$ with
\begin{align*}
    \mm x' = \frac 1 n \1_n,\; \norm{x-x'}_1\leq 2\Delta.
\end{align*}
\end{lemma}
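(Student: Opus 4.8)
The plan is to read $\mm x$ through the matrix picture $\mat(x) = \mx$ of \eqref{eq:mxdef}: by the discussion following Lemma~\ref{lem:LDTC_LP2}, $(\mm x)_{i'} = \sum_{j \in [m]} x_{(i',j,y_{i'})}$ is exactly the total mass $x$ places on the active column $(i', y_{i'})$, and the target $\mm x' = \frac1n\1_n$ asks that every active column carries mass exactly $\frac1n$ while every inactive column carries mass $0$. So the rounding step is just a column-by-column mass redistribution, and the whole proof reduces to bookkeeping.

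First I would set up the accounting. For $i' \in [n]$ write $p_{i'} \defeq (\mm x)_{i'}$, and let $q \defeq 1 - \sum_{i' \in [n]} p_{i'}$ be the total mass $x$ places on the $n$ inactive columns; since $x \in \Delta^{2mn}$ we have $q \ge 0$, and because there are exactly $n$ active columns (so $\sum_{i'} \frac1n = 1$) we get $q = \sum_{i' \in [n]}(\frac1n - p_{i'}) \le \sum_{i' \in [n]} |p_{i'} - \frac1n| = \Delta$. Split $\Delta = E + D$ with $E \defeq \sum_{i' : p_{i'} > 1/n}(p_{i'} - \frac1n)$ the total ``excess'' and $D \defeq \sum_{i' : p_{i'} < 1/n}(\frac1n - p_{i'})$ the total ``deficit''; subtracting these and using $\sum_{i'} p_{i'} = 1 - q$ gives $D - E = q$, i.e.\ $D = E + q$. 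Now construct $x'$ in three moves: (i) set every coordinate lying in an inactive column to $0$; (ii) for each active column $i'$ with $p_{i'} > \frac1n$, multiply each coordinate $x_{(i',j,y_{i'})}$ by $\frac{1/n}{p_{i'}}$, so that column's mass becomes exactly $\frac1n$; (iii) moves (i)--(ii) free a pool of mass of total size $E + q = D$, which I redistribute by adding $\frac1n - p_{i'}$ to the single coordinate $x_{(i',1,y_{i'})}$ of each under-full active column $i'$, bringing it to mass $\frac1n$. Every active column now has mass $\frac1n$ and every inactive column has mass $0$, so $\mm x' = \frac1n\1_n$; the total mass removed ($E + q$) equals the total mass added ($D$), so $x'$ remains nonnegative with $\norm{x'}_1 = 1$, hence $x' \in \xset$; and each of the $2mn$ coordinates is touched $O(1)$ times (plus an $O(mn)$ pass to compute the $p_{i'}$), giving the claimed $O(mn)$ runtime.

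For the $\ell_1$ bound, note that moves (i) and (ii) only decrease coordinates — by a total of $E + q$ — and move (iii) only increases coordinates, by a total of $D$; moreover the coordinates touched in (i)--(ii) lie in inactive or over-full columns while those touched in (iii) lie in under-full columns, so these coordinate sets are disjoint. Therefore $\norm{x - x'}_1 = (E + q) + D = (E + D) + q = \Delta + q \le 2\Delta$, using $q \le \Delta$. I do not expect any genuine obstacle here; the only two points that need a little care are (a) invoking $x \in \Delta^{2mn}$ to guarantee that the freed mass in step (iii) exactly equals the total deficit $D$ (so the redistribution is well-defined and preserves the simplex), and (b) the disjointness observation above, which is what makes the $\ell_1$ distance equal the sum of the total increase and total decrease rather than something one must bound more crudely.
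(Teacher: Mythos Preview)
Your proof is correct and follows essentially the same approach as the paper: both view $\mm x$ as column masses of $\mat(x)$ and redistribute mass so every active column has mass $\tfrac1n$ and every inactive column is empty, arriving at the same bound $\norm{x-x'}_1 = 2D = \Delta + q \le 2\Delta$. The only cosmetic differences are that the paper moves mass by pairing over-full (or inactive) columns with under-full ones, whereas you first remove all excess (by scaling and zeroing) and then refill deficits; your bookkeeping via $E,D,q$ with $D = E + q$ is if anything a bit cleaner than the paper's charging argument.
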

\begin{proof}
Recall for $i \in [n]$, we say column $i$ of $\mx_0$ is active if $y_i = 0$, and similarly column $i$ of $\mx_1$ is active if $y_i = 1$. We call $I$ the set of $n$ inactive columns, and partition $A$, which we call the set of $n$ active columns, into three sets $A^>$, $A^=$, and $A^<$, where $A^>$ are the columns whose sums are $> \frac 1 n$, $A^<$ are the columns whose sums are $< \frac 1 n$, and $A^=$ are the remaining columns. Hence, every column of $\mx$ belongs to $I$, $A^>$, $A^=$, or $A^\le$. Note that until $|A^=| = n$, we can never have $A^< = \emptyset$, since this means all column sums in $A$ are $\ge \frac 1 n$ (with at least $1$ strict inequality), contradicting $x \in \xset$.

We first take columns $i \in A^>$ one at a time, and pair them with an arbitrary column in $i' \in A^{<}$, moving mass from column $i$ arbitrarily to column $i'$ until either column $i$ or column $i'$ enters $A^=$. We charge this movement to the marginal constraints corresponding to $i$ and $i'$, since the constraints were violated by the same amount as the mass being moved. After this process is complete, $A^>$ is empty, and we only moved mass from columns originally in $A^>$ to columns originally in $A^<$.

Next, we take columns $i \in I$ one at a time, and pair them with an arbitrary column $i' \in A^<$, moving mass until either column $i$ is $\0_m$ or column $i'$ enters $A^=$. We can charge half this movement to the marginal constraint corresponding to $i'$, since the sign of the marginal violation stays the same throughout. Hence, the overall movement is $\le 2\Delta$. After this is complete, all columns in $I$ are $\0_m$ and all columns in $A$ are in $A^=$, so we can return $x' \in \Delta^{2mn}$ corresponding to the new matrix. 

It is clear both steps of this marginal satisfaction procedure take $O(mn)$ time, since we can sequentially process columns in $A^-$ until they enter $A^=$, and will never be considered again.
\end{proof}

We next describe a procedure which takes $x \in \Delta^{2mn}$, and modifies it to satisfy the calibration constraints $\mmu \mb_0 x = (\id_m - \mmu) \mb_1 x$ without changing the marginals $\mm x$. We first provide a helper lemma used in our rounding procedure, which describes how to fix the $j^{\text{th}}$ marginal constraint.

\begin{lemma}\label{lem:fix_one_row}
Let $x \in \Delta^{2mn}$ and $\mx \defeq \mat(x)$ as defined in \eqref{eq:mxdef}.
Let $j \in [m]$ correspond to an element $u_j \in U$, let $\ell_j \defeq \norms{[\mx_0]_{j:}}_1$, $r_j \defeq \norms{[\mx_1]_{j:}}_1$, and let $\Delta_j \defeq |u_j \ell_j - (1 - u_j)r_j|$. There exists $j' \in [m]$ such that we can move mass from only $\mx_{j:}$ to $\mx_{j':}$, resulting in $\R^{m \times 2n} \ni \mx' \equiv x' \in \Delta^{2mn}$ such that $\mm x' = \mm x$, $u_j \norms{[\mx'_0]_{j:}}_1 = (1 - u_j) \norms{[\mx'_1]_{j:}}_1$, and $\inprod{c}{x' - x} \le \Delta_j$.
\end{lemma}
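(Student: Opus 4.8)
The plan is to treat the trivial case $\Delta_j = 0$ first (take $j' = j$ and move nothing, so $x' = x$), and otherwise split on the sign of $u_j \ell_j - (1-u_j) r_j$, in each case choosing $j'$ to be the index corresponding to an endpoint of $U$ (these exist since $\{0,1\} \subseteq U$). Concretely, if $u_j \ell_j > (1-u_j) r_j$ — which forces $u_j > 0$ — I would let $j'$ be the index with $u_{j'} = 0$ and, column by column, transfer a fraction $\tfrac{\Delta_j / u_j}{\ell_j} \le 1$ of every entry of $[\mx_0]_{j:}$ into the corresponding entry of $[\mx_0]_{j':}$. This leaves $r_j$ unchanged and reduces $\ell_j$ to $\ell_j' = \ell_j - \tfrac{\Delta_j}{u_j} = \tfrac{(1-u_j) r_j}{u_j} \ge 0$, so $u_j \ell_j' = (1-u_j) r_j$ and the $j$-th calibration constraint now holds. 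The symmetric case $u_j \ell_j < (1-u_j) r_j$ (which forces $u_j < 1$) instead moves $\tfrac{\Delta_j}{1-u_j} \le r_j$ units of mass out of $[\mx_1]_{j:}$ into $[\mx_1]_{j':}$ for $u_{j'} = 1$, reducing $r_j$ to $\tfrac{u_j \ell_j}{1-u_j}$ and restoring the constraint.

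Next I would verify the three required properties. Since mass is only moved between entries $(i,j,k)$ and $(i,j',k)$ sharing the same column index $i$ and the same bit $k$, every column sum of $\mat(x)$ is preserved, so $\mm x' = \mm x$; and since mass is merely relocated while each entry stays nonnegative, $x' \in \Delta^{2mn}$. Calibration of row $j$ holds by construction of the transferred amount. For the cost bound, I would observe that transferring one unit from $(i,j,0)$ to $(i,j',0)$ with $u_{j'} = 0$ changes $c$ by $|0 - v_i| - |u_j - v_i| = v_i - |u_j - v_i| \le u_j$ (checking $v_i \ge u_j$ and $v_i < u_j$ separately), so the total change is at most $u_j \cdot \tfrac{\Delta_j}{u_j} = \Delta_j$; in the other case a transfer from $(i,j,1)$ to $(i,j',1)$ with $u_{j'} = 1$ changes $c$ by $(1-v_i) - |u_j - v_i| \le 1-u_j$, giving total change at most $(1-u_j)\cdot\tfrac{\Delta_j}{1-u_j} = \Delta_j$. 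Hence $\inprod{c}{x' - x} \le \Delta_j$.

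The only delicate point — the ``cost-sensitive'' feature emphasized in Section~\ref{ssec:techniques} — is this last cost bound: a naive estimate would pay $\norm{c}_\infty$ per unit of moved mass, which is far too lossy since correcting a violation of size $\Delta_j$ may require moving up to $\Delta_j / u_j$ (or $\Delta_j/(1-u_j)$) units. The resolution is exactly the choice of destination: routing the excess mass into the row $u_{j'} = 0$ (resp.\ $u_{j'} = 1$) makes the per-unit cost increase at most $u_j$ (resp.\ $1-u_j$), which precisely cancels the $\tfrac{1}{u_j}$ (resp.\ $\tfrac{1}{1-u_j}$) blow-up in the required mass. A secondary benefit, relevant when the full procedure of Lemma~\ref{lem:Marginal-preserving rounding} invokes this lemma over all rows $j$, is that the calibration constraint of the destination row $j'$ only constrains the half of $\mx_{j':}$ we never touch ($u=0$ forces its right half empty, $u=1$ its left half empty), so fixing one row never re-breaks a previously fixed one.
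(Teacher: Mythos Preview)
Your proposal is correct and follows essentially the same approach as the paper: in each case you route the excess mass to the row indexed by the appropriate endpoint $u_{j'}\in\{0,1\}$, and bound the per-unit cost change by $u_j$ (resp.\ $1-u_j$) via the triangle inequality to cancel the $1/u_j$ (resp.\ $1/(1-u_j)$) factor in the total mass moved. The only cosmetic differences are that you move mass proportionally across columns whereas the paper moves it arbitrarily, and you separate out the trivial $\Delta_j=0$ case; neither changes the argument.
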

\begin{proof}
Without loss of generality, suppose that the row $j = 1$ corresponds to $u_j = 0$, and $j = m$ corresponds to $u_j = 1$. We split the proof into two cases, depending on the sign of $u_j \ell_j - (1 - u_j) r_j$.

\textit{Case 1: $u_j \ell_j > (1 - u_j)r_{j}$.} We let $j' = 1$, i.e.\ we only move mass from the $j^{\text{th}}$ row to the first row. Specifically, we leave $[\mx_1]_{j:}$ unchanged, and move mass from $[\mx_0]_{j:}$ to $[\mx_0]_{1:}$, making sure to only move mass in the same column. The total amount of mass we must delete from $[\mx_0]_{j:}$ is 
\[\ell_{j} - \frac{1 - u_j}{u_j} \cdot r_j = \frac{u_j\ell_{j} - (1 - u_j)r_{j}}{u_j} = \frac {\Delta_j} {u_j}. \]
Our strategy is to arbitrarily move mass within columns until we have deleted $\frac{\Delta_j}{u_j}$ total mass. If we denote the mass moved in column $i \in [n]$ as $\delta_{ij}$, and let $x'$ be the result after the move,
\[\inprod{c}{x' - x} \le \sum_{i \in [n]} |c_{(i, 1, 0)} - c_{(i, j, 0)}| \delta_{ij} = \sum_{i \in [n]} \Abs{|u_j - v_i| - |u_1 - v_i|} \delta_{ij} \le \sum_{i \in [n]} u_j \delta_{ij} = \Delta_j.\]
Here, the first inequality was the triangle inequality, the first equality used the definition of $c$ in \eqref{eq:ldtc_lp}, the second inequality used $u_1 = 0$ and the triangle inequality, and the last used $\sum_{i \in [n]} \delta_{ij} = \frac{\Delta_j}{u_j}$.

\textit{Case 2: $u_j \ell_j < (1 - u_j) r_j$.} This case is entirely analogous; we move mass arbitrarily from row $j$ to row $m$, i.e.\ the last row with $u_m = 1$. The amount of mass we must move is
\[r_{j} - \frac {u_j} {1 - u_j} \ell_{j} = \frac{(1 - u_j)r_{j} - u_j\ell_{j}}{1 - u_j} = \frac {\Delta_{j}} {1 - u_j}. \]
Again denoting the amount of mass moved from column $i \in [n]$ as $\delta_{ij}$, the claim follows:
\[\inprod{c}{x' - x} \le \sum_{i \in [n]} \Abs{|u_j - v_i| - |u_m - v_i|}\delta_{ij} \le \sum_{i \in [n]} (1 - u_j)\delta_{ij} = \Delta_j.\]
\end{proof}

By iteratively applying Lemma~\ref{lem:fix_one_row}, we have our marginal-preserving calibration procedure.

\begin{lemma}[Marginal-preserving calibration]\label{lem:Marginal-preserving rounding}
Following the notation \eqref{eq:ldtc_lp_consistent}, given $x \in \Delta^{2mn}$ with $\norm{\mb x}_1 = \Delta$,
we can compute $x'$ with $\mm x' = \mm x$, $\mb x' = \0_m$, and $\inprod{c}{x' - x} \le \Delta$ in $O(mn)$ time.
\end{lemma}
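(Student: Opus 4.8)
The plan is to apply Lemma~\ref{lem:fix_one_row} repeatedly, once for each row of $\mx$, and then bound the total cost incurred. Concretely, I would process the rows $j \in [m]$ in a fixed order, but with one important restriction: whenever we apply Lemma~\ref{lem:fix_one_row} to a row $j$, the mass it moves goes only into row $1$ (corresponding to $u = 0$) or row $m$ (corresponding to $u = 1$). I would therefore process the rows in the order $j = 2, 3, \ldots, m - 1$ first, deferring rows $1$ and $m$, and observe that calibration on rows $1$ and $m$ is automatic: the constraint for $u_1 = 0$ reads $0 \cdot \ell_1 = 1 \cdot r_1$, i.e.\ $r_1 = 0$, and the constraint for $u_m = 1$ reads $\ell_m = 0$; these are maintained because we only ever add mass to $[\mx_0]_{1:}$ (never to $[\mx_1]_{1:}$) and only ever add mass to $[\mx_1]_{m:}$ (never to $[\mx_0]_{m:}$), so if $x$ started with $r_1 = \ell_m = 0$ we can first zero those out as a preprocessing step costing at most $\norm{\mb x}_1$ — actually, more carefully, I would fold this into the accounting below, or assume WLOG by an initial cleanup. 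The key point is that fixing row $j$ via Lemma~\ref{lem:fix_one_row} does not disturb the calibration of any row $j'' \notin \{1, j, m\}$, since it only touches rows $j$ and its target in $\{1, m\}$.

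The cost accounting is the heart of the argument. Each invocation of Lemma~\ref{lem:fix_one_row} on row $j$ incurs objective increase at most $\Delta_j := |u_j \ell_j - (1 - u_j) r_j|$, where $\ell_j, r_j$ are the row-half masses \emph{at the time we process row $j$}. I would argue that, because we process each row exactly once and the mass added to rows $1$ and $m$ never gets moved again (those rows are processed last, or their constraints are free), the quantities $\ell_j, r_j$ for each $j \in \{2, \ldots, m-1\}$ are exactly the values determined by the original $x$ — processing row $j'$ only moves mass out of row $j'$ and into rows $1$ or $m$, so it never changes $\ell_j$ or $r_j$ for a not-yet-processed $j \ne 1, m$. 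Hence $\Delta_j = |[\mb x]_j|$ with $\mb$ as in \eqref{eq:ldtc_lp_consistent}, the $j^{\text{th}}$ coordinate of the original constraint residual, and summing gives
\[
\inprod{c}{x' - x} \le \sum_{j \in [m]} \Delta_j = \sum_{j \in [m]} \Abs{[\mb x]_j} = \norm{\mb x}_1 = \Delta.
\]
Since each application preserves $\mm x$ (mass only moves within columns, as guaranteed by Lemma~\ref{lem:fix_one_row}) and, after all rows are processed, every calibration constraint $u_j \ell_j = (1 - u_j) r_j$ holds, we get $\mb x' = \0_m$ and $\mm x' = \mm x$ as required.

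For the runtime: each call to Lemma~\ref{lem:fix_one_row} runs in $O(n)$ time (it scans the $n$ columns of a single row pair), and we make $m$ such calls, giving $O(mn)$ total, matching the claim. I would also note $x' \in \Delta^{2mn}$ is maintained since we only redistribute mass, never create or destroy it, and nonnegativity is preserved because Lemma~\ref{lem:fix_one_row} only moves nonnegative amounts $\delta_{ij}$ bounded by the available mass in each column.

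The main obstacle I anticipate is making the ``$\Delta_j$ equals the original residual'' claim fully rigorous — one must verify that processing order genuinely decouples the rows, i.e.\ that no row's half-masses are perturbed before it is processed, which requires being careful about the interaction with rows $1$ and $m$ (the targets). One clean way to handle the corner case is to first run Lemma~\ref{lem:fix_one_row} on rows $1$ and $m$ themselves if needed — but since their "fix" only removes the forbidden mass ($r_1$ or $\ell_m$) and dumps it back into row $1$ or $m$ respectively, this is slightly circular. The cleanest fix is probably to observe directly: if we only ever \emph{add} to $[\mx_0]_{1:}$ and to $[\mx_1]_{m:}$ throughout, then the final $r_1$ equals its initial value and the final $\ell_m$ equals its initial value, so the residual on rows $1$ and $m$ is unchanged by all the row-$j$ operations, and we should include $\Delta_1 + \Delta_m$ in the sum (they are among the $[\mb x]_j$), then handle rows $1, m$ with two more applications of Lemma~\ref{lem:fix_one_row} whose targets are again $\{1, m\}$ — at which point one checks the relevant $\ell, r$ values are still the original ones. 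This bookkeeping is the fiddly part, but it is bounded in the end by $\sum_j \Delta_j = \norm{\mb x}_1$.
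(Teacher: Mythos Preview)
Your proposal is correct and follows essentially the same approach as the paper: apply Lemma~\ref{lem:fix_one_row} to each row, observe that the operations on rows $2,\ldots,m-1$ are mutually independent (each only touches its own row and one of rows $1,m$), and note that mass moved into rows $1$ and $m$ always lands on the ``free'' side (into $[\mx_0]_{1:}$ or $[\mx_1]_{m:}$), so the imbalances $\Delta_1 = r_1$ and $\Delta_m = \ell_m$ are never altered. The bookkeeping you flag as ``fiddly'' is exactly what the paper dispatches in one sentence by noting $u_1\ell_1 = 0$ and $(1-u_m)r_m = 0$ regardless of how much mass is deposited there; you have the right observation, just stated with more hedging than needed.
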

\begin{proof}
It suffices to apply Lemma~\ref{lem:fix_one_row} to each row $i \in [m]$. All of the movement in the rows $i \in [2, m - 1]$ are independent of each other, and do not affect the imbalance in the rows $i \in \{1, m\}$ when we have finished applying Lemma~\ref{lem:fix_one_row}, since e.g.\ $u_1 \ell_1 = 0$ regardless of how much mass is moved to $[\mx_0]_{1:}$, and a similar property holds for the $m^{\text{th}}$ row. The total change in $\inprod{c}{x' - x}$ is thus boundable by $\sum_{j \in [m]} \Delta_j \le \Delta$, and applying Lemma~\ref{lem:fix_one_row} to each row takes $O(n)$ time. Finally, $\mm x = \mm x'$ follows because we only move mass within the same column, so no marginal changes.
\end{proof}

By combining Lemma~\ref{lem:Marginal satisfaction} with Lemma~\ref{lem:Marginal-preserving rounding}, we can complete our rounding procedure.

\begin{lemma}\label{lem:roundldtc}
Let $(\ma, b, c, \xset)$ be defined as in \eqref{eq:ldtc_lp},~\eqref{eq:ldtc_lp_consistent}, and let $(\tma, \tb) \defeq (4\ma, 4b)$. There exists $\Round$, a $(\tma, \tb, 1)$-equality rounding procedure for $(\ma, b, c, \xset)$, running in $O(mn)$ time.
\end{lemma}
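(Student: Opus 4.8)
The plan is to define $\Round$ as the composition of the two procedures already built: first apply Lemma~\ref{lem:Marginal satisfaction} to fix the marginal constraints $\mm x = \frac 1 n \1_n$, then apply Lemma~\ref{lem:Marginal-preserving rounding} to the result to fix the calibration constraints $\mb x = \0_m$ without disturbing the marginals. The first thing to observe is that since $(\tma,\tb) = (4\ma,4b)$, the two requirements $\ma x' = b$ and $\tma x' = \tb$ in Definition~\ref{def:rounding} are the same constraint, and moreover $\norm{\tma x - \tb}_1 = 4\norm{\ma x - b}_1 = 4\norms{\mm x - \tfrac1n\1_n}_1 + 4\norm{\mb x}_1$ using the block structure \eqref{eq:ldtc_lp_consistent}. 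So it suffices to produce $x'\in\xset$ with $\ma x' = b$ and $c^\top x' \le c^\top x + 4\norms{\mm x - \tfrac1n\1_n}_1 + 4\norm{\mb x}_1$ in $O(mn)$ time.

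Concretely, given $x\in\xset$, set $\Delta_1 \defeq \norms{\mm x - \tfrac 1 n \1_n}_1$, let $x^{(1)}$ be the output of Lemma~\ref{lem:Marginal satisfaction} on $x$ (so $\mm x^{(1)} = \tfrac1n\1_n$ and $\norm{x - x^{(1)}}_1 \le 2\Delta_1$), then set $\Delta_2 \defeq \norm{\mb x^{(1)}}_1$ and let $x' \defeq \Round(x)$ be the output of Lemma~\ref{lem:Marginal-preserving rounding} on $x^{(1)}$ (so $\mm x' = \mm x^{(1)} = \tfrac1n\1_n$, $\mb x' = \0_m$, and $\inprod{c}{x' - x^{(1)}} \le \Delta_2$). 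This gives $\ma x' = b$ and $x'\in\xset$. For the cost bound, write $c^\top x' - c^\top x = \inprod{c}{x' - x^{(1)}} + \inprod{c}{x^{(1)} - x} \le \Delta_2 + \norm{c}_\infty\norm{x^{(1)}-x}_1 \le \Delta_2 + 2\Delta_1$, where I use $\norm{c}_\infty \le 1$ since each $c_{(i,j,k)} = |u_j - v_i|\in[0,1]$. It remains to transfer $\Delta_2$ back to the original residual: $\mb x^{(1)} = \mb x + \mb(x^{(1)} - x)$, so $\Delta_2 \le \norm{\mb x}_1 + \norm{\mb}_{1\to 1}\norm{x^{(1)}-x}_1$, and $\norm{\mb}_{1\to1}\le 1$ because by \eqref{eq:ldtc_lp_consistent} every column of $\mb = \begin{pmatrix}\mmu\mb_0 & -(\id_m-\mmu)\mb_1\end{pmatrix}$ is $\pm u_j e_j$ or $\pm(1-u_j)e_j$ for some $j$, of $\ell_1$ norm at most $1$. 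Hence $\Delta_2 \le \norm{\mb x}_1 + 2\Delta_1$, and combining, $c^\top x' - c^\top x \le \norm{\mb x}_1 + 4\Delta_1 \le 4\norms{\mm x - \tfrac1n\1_n}_1 + 4\norm{\mb x}_1 = \norm{\tma x - \tb}_1$, exactly as needed. The runtime is $O(mn) + O(mn) = O(mn)$ from the two invoked lemmas.

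The only genuinely delicate point — and the reason the statement uses the scaling factor $4$ rather than $1$ — is that the marginal-fixing step of Lemma~\ref{lem:Marginal satisfaction} perturbs the calibration residual $\norm{\mb x}_1$, so one must control that perturbation (via $\norm{\mb}_{1\to1}\le 1$ together with the $\ell_1$-stability $\norm{x - x^{(1)}}_1 \le 2\Delta_1$) and then track the constants so that the $2\Delta_1$ from the marginal perturbation and the $2\Delta_1$ from the explicit cost change in the first step add up to exactly $4\Delta_1 \le 4\norm{\ma x - b}_1$. Everything else is bookkeeping built directly on Lemmas~\ref{lem:Marginal satisfaction} and~\ref{lem:Marginal-preserving rounding}; no new algorithmic idea is required.
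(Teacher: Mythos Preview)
Your proposal is correct and follows essentially the same approach as the paper: apply Lemma~\ref{lem:Marginal satisfaction} to fix the marginals, bound the resulting perturbation to the calibration residual via $\norm{\mb}_{1\to 1}\le 1$, then apply Lemma~\ref{lem:Marginal-preserving rounding}, and combine using $\norm{c}_\infty\le 1$ to get $c^\top(x'-x)\le 4\Delta_{\mm}+\Delta_{\mb}\le \norm{\tma x-\tb}_1$. The notation differs ($\Delta_1,\Delta_2,x^{(1)}$ versus the paper's $\Delta_{\mm},\Delta_{\mb},\tx$), but the argument and constants are identical.
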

\begin{proof}
Throughout the proof, let $\Delta_{\mm} \defeq \norm{\mm x - \frac 1 n \1_n}_1$ and $\Delta_{\mb} \defeq \norm{\mb x}_1$, following the notation \eqref{eq:ldtc_lp_consistent}. We also denote the total violation by
\[\Delta \defeq \norm{\tma x - \tb}_1 = 4\Delta_{\mm} + 4\Delta_{\mb}.\]
We first apply Lemma~\ref{lem:Marginal satisfaction} to $x$ to produce $\tx$ satisfying $\norm{x - \tx}_1 \le 2\Delta_{\mm}$ and $\mm \tx = \frac 1 n \1_n$, in $O(mn)$ time. Note that, because $\norm{\mb}_{1 \to 1} \le 1$ since all columns of $\mb$ are $1$-sparse, we have
\[\norm{\mb \tx}_1 \le \norm{\mb x}_1 + \norm{\mb}_{1 \to 1}\norm{x - \tx}_1 \le \Delta_{\mb} + 2\Delta_{\mm}.\]
Next, we apply Lemma~\ref{lem:Marginal-preserving rounding} to $\tx$, resulting in $x'$ with $\mm x' = \frac 1 n \1_n$, $\mb x' = \0_m$, and $\inprod{c}{x' - \tx} \le \Delta_{\mb} + 2\Delta_{\mm}$, in $O(mn)$ time. Recalling the definition \eqref{eq:uncon_eq_lp}, we have the conclusion from $\norm{c}_\infty \le 1$, so
\begin{align*}
c^\top(x' - x) &\le c^\top(\tx - x) + c^\top(x' - \tx)\\
&\le \norm{c}_\infty\norm{\tx - x}_1 + c^\top(x' - \tx) \le 2\Delta_{\mm} + \Delta_{\mb} + 2\Delta_{\mm} \le \Delta.
\end{align*}
\end{proof}
We conclude by applying the solver from Corollary~\ref{cor:l_1} to our resulting unconstrained linear program.

\begin{proposition}\label{prop:LDTC}
Let $\eps \ge 0$. We can compute $x \in \xset$, an $\eps$-approximate minimizer to \eqref{eq:ldtc_lp}, in time
\[O\Par{\frac{n\log(n)}{\eps^2}}.\]
Further, the objective value of $x$ in \eqref{eq:ldtc_lp} is a $2\eps$-additive approximation of $\ldce(\hcD_n)$.
\end{proposition}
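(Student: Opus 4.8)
The plan is to instantiate the rounding framework of \Cref{ssec:rounding}. By \eqref{eq:ldtc_lp_consistent}, the program \eqref{eq:ldtc_lp} is exactly the equality-constrained linear program $\min_{x \in \xset,\, \ma x = b} c^\top x$ with $\xset = \Delta^{2mn}$, and \Cref{lem:roundldtc} already supplies a $(\tma, \tb, 1)$-equality rounding procedure $\Round$ for $(\ma, b, c, \xset)$ with $(\tma, \tb) = (4\ma, 4b)$, running in $O(mn)$ time. Hence by \Cref{lem:rounding_unconstrained} it suffices to (i) compute an $\eps$-approximate minimizer $\hat x$ of the soft-constrained problem $\min_{x \in \Delta^{2mn}} c^\top x + \norm{\tma x - \tb}_1$ (this is exactly \eqref{eq:uncon_eq_lp} with $p = 1$ and the simplex as the domain), and (ii) return $x' \defeq \Round(\hat x)$, which \Cref{lem:rounding_unconstrained} certifies is an $\eps$-approximate minimizer of \eqref{eq:ldtc_lp} lying in $\xset$ with $\ma x' = b$.

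For step (i) I would invoke \Cref{cor:l_1} with matrix $\tma = 4\ma \in \R^{(n + m) \times 2mn}$, target $\tb = 4b$, cost $c$, and simplex dimension $d = 2mn$. The main bookkeeping is to bound the two quantities controlling that runtime: since $\mm$, $\mb_0$, $\mb_1$, and therefore $\ma$ have at most $1$-sparse columns, every column of $\ma$ has at most two nonzero entries, so $\nnz(\tma) = O(mn)$; and each column of $\ma$ carries one entry of magnitude $\le 1$ from the $\mm$-block and one entry $u_j$ or $-(1 - u_j)$ of magnitude $\le 1$ from the $\mb$-block, so $\norm{\ma}_{1 \to 1} \le 2$ and $\norm{\tma}_{1 \to 1} \le 8 = O(1)$. \Cref{cor:l_1} then produces $\hat x$ in time $O\big(mn \cdot \tfrac{\log(2mn)}{\eps}\big)$; plugging in $m = |U_\eps| = O(\tfrac 1 \eps)$ and $\log(2mn) = O(\log n)$ (valid when $\eps \ge \tfrac 1 {\poly(n)}$, and otherwise the stated runtime is vacuous) yields $O\big(\tfrac{n \log n}{\eps^2}\big)$. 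Step (ii) costs $O(mn) = O(\tfrac n \eps)$ and does not dominate, giving the claimed runtime.

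For the ``further'' claim, recall from \eqref{eq:ueps_def} that $U_\eps$ is an $\tfrac \eps 2$-cover of $[0,1]$ containing $\{0,1\}$, so \Cref{lem:LDTC_LP1} (through the reformulation in \Cref{lem:LDTC_LP2}) gives that the optimal value of \eqref{eq:ldtc_lp} equals $\ldce^{U_\eps}(\hcD_n)$, and \Cref{lem:LDTC_discretize} gives $\ldce(\hcD_n) \le \ldce^{U_\eps}(\hcD_n) \le \ldce(\hcD_n) + \eps$. Since $x'$ is feasible for \eqref{eq:ldtc_lp} and $\eps$-approximately optimal, its objective value lies in $[\ldce^{U_\eps}(\hcD_n),\, \ldce^{U_\eps}(\hcD_n) + \eps] \subseteq [\ldce(\hcD_n),\, \ldce(\hcD_n) + 2\eps]$, i.e.\ it is a $2\eps$-additive approximation of $\ldce(\hcD_n)$. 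I expect the only genuine work to be the sparsity and $1 \to 1$-norm estimates on $\ma$ in step (i) --- these are what collapse the generic runtime of \Cref{cor:l_1} to nearly-linear in $n$ once $m = O(\tfrac 1 \eps)$ is substituted --- together with checking that the soft-constrained problem is in precisely the simplex-domain, $\ell_1$-penalty form that both \Cref{cor:l_1} and the $p = 1$ rounding procedure of \Cref{lem:roundldtc} require; everything else is a direct chaining of the cited results.
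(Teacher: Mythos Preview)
Your proposal is correct and follows essentially the same approach as the paper: invoke \Cref{cor:l_1} on the soft-constrained problem with $(\tma,\tb) = (4\ma,4b)$, bound $\nnz(\tma) = O(mn)$ and $\norm{\tma}_{1\to 1} \le 8$ via the $2$-sparse columns of $\ma$, substitute $m = O(\tfrac 1 \eps)$, and then apply the rounding procedure from \Cref{lem:roundldtc} via \Cref{lem:rounding_unconstrained}, finishing with \Cref{lem:LDTC_discretize} for the $2\eps$-approximation claim. Your write-up is in fact more detailed than the paper's (which is quite terse here), and the only tiny slip is cosmetic: not every column of $\ma$ has a nonzero $\mm$-block entry (only those with $k = y_i$ do), but this only strengthens the sparsity and norm bounds you state.
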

\begin{proof}
Observe that for $\tma = 4\ma$, we have $\norms{\tma}_{1 \to 1} \le 8$ and $\nnz(\tma) = O(mn)$, since no column is more than $2$-sparse and all entries of $\ma$ are in $[-1, 1]$. Further, recalling the definition of $U$ from \eqref{eq:ueps_def}, we have $m = O(\frac 1 \eps)$. So, Corollary~\ref{cor:l_1} shows we can compute an $\eps$-approximate minimizer to
\[\min_{x \in \Delta^{2mn}} c^\top x + \norm{\tma x - \tb}_1\]
within the stated runtime. The rest of the proof follows  using $\Round$ from Lemma~\ref{lem:roundldtc}, where we recall $|\ldce(\hcD_n) - \ldce^U(\hcD_n)| \le \eps$ due to our definition of $U$ and Lemma~\ref{lem:LDTC_discretize}.
\end{proof}

\subsection{Testing via LDTC}\label{ssec:ldtc_testing}

We now give analogs of Theorems~\ref{thm:tct_smooth} and Corollary~\ref{cor:ct}, using our solver in Proposition~\ref{prop:LDTC}.

\begin{theorem}\label{thm:tct_ldtc}
Let $0 \le \eps_2 \le \eps_1 \le 1$ satisfy $\eps_1 > \eps_2$, and let $n \ge \Ctct \cdot \frac 1 {(\eps_1 - \eps_2)^2}$ for a universal constant $\Ctct$. There is an algorithm $\alg$ which solves the $(\eps_1, \eps_2)$-tolerant calibration testing problem with $n$ samples, which runs in time 
\[O\left(\frac{n\log(n)}{(\eps_1-\eps_2)^2}\right).\]
\end{theorem}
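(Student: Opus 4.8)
The plan is to mirror the proof of Theorem~\ref{thm:tct_smooth}, but to substitute the direct empirical LDTC solver of Proposition~\ref{prop:LDTC} for the empirical smooth calibration computation. Because we estimate $\ldce$ itself rather than $\smce$, we avoid the factor-$4$ slack in Lemma~\ref{lem:LDTC_smCE_relation} that forced the restriction $\eps_1 > 4\eps_2$ in Theorem~\ref{thm:tct_smooth}, so here only $\eps_1 > \eps_2$ is needed; the price is the $(\eps_1-\eps_2)^{-2}$ overhead in the runtime coming from Corollary~\ref{cor:l_1}.

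Concretely, set $\eps \defeq \eps_1 - \eps_2 > 0$ and consider the following algorithm. First, draw $n \ge \Ctct \cdot \eps^{-2}$ i.i.d.\ samples from $\cD$ to form $\hcD_n$, where $\Ctct$ is chosen large enough that Lemma~\ref{lem:LDTC_empirical}, applied with accuracy parameter $\frac{\eps}{8}$, guarantees $\Abs{\ldce(\cD) - \ldce(\hcD_n)} \le \frac{\eps}{8}$ with probability $\ge \frac{2}{3}$. Second, invoke Proposition~\ref{prop:LDTC} with approximation parameter $\eps' \defeq \frac{\eps}{16}$ to obtain a point $x \in \xset$ whose objective value $\beta$ in \eqref{eq:ldtc_lp} is a $2\eps' = \frac{\eps}{8}$-additive approximation of $\ldce(\hcD_n)$. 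Third, return ``yes'' if $\beta \le \eps_2 + \frac{\eps}{2}$ and ``no'' otherwise.

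For correctness, condition on the success event of Lemma~\ref{lem:LDTC_empirical}, so that by the triangle inequality $\Abs{\beta - \ldce(\cD)} \le \frac{\eps}{4}$. If $\ldce(\cD) \le \eps_2$, then $\beta \le \eps_2 + \frac{\eps}{4} \le \eps_2 + \frac{\eps}{2}$, so the algorithm correctly returns ``yes''; if $\ldce(\cD) \ge \eps_1 = \eps_2 + \eps$, then $\beta \ge \eps_2 + \eps - \frac{\eps}{4} = \eps_2 + \frac{3\eps}{4} > \eps_2 + \frac{\eps}{2}$, so the algorithm correctly returns ``no.'' For the runtime, sampling and the thresholding step take $O(n)$ time, and the call to Proposition~\ref{prop:LDTC} with $\eps' = \Theta(\eps_1 - \eps_2)$ runs in time $O\Par{n\log(n)/(\eps')^2} = O\Par{n\log(n)/(\eps_1-\eps_2)^2}$, which dominates.

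The only real subtlety, as in Theorem~\ref{thm:tct_smooth}, is bookkeeping the constants so that the empirical generalization error from Lemma~\ref{lem:LDTC_empirical} plus the optimization-plus-discretization error from Proposition~\ref{prop:LDTC} together stay strictly below $\frac{\eps_1 - \eps_2}{2}$; everything else is an immediate assembly of the stated results. Specializing to $\eps_2 = 0$ and $\eps_1 = \Theta(\eps_n)$ then yields the analog of Corollary~\ref{cor:ct} for standard $\eps$-calibration testing at the information-theoretic sample threshold, up to constants.
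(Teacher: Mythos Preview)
Your proposal is correct and follows essentially the same approach as the paper's own proof: both split the gap $\eps_1-\eps_2$ into a generalization piece handled by Lemma~\ref{lem:LDTC_empirical} and an optimization-plus-discretization piece handled by Proposition~\ref{prop:LDTC}, then threshold at the midpoint. The only differences are cosmetic choices of constants (you use $\eps/8$ and $\eps/16$ where the paper uses $\alpha/6$), and your references to Lemma~\ref{lem:LDTC_empirical} and Proposition~\ref{prop:LDTC} are in fact cleaner than the paper's, which contains apparent typos pointing to Lemma~\ref{lem:smCE_empirical} and Proposition~\ref{prop:box_simplex}.
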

\begin{proof}
Throughout the proof, let $\alpha \defeq \eps_1 - \eps_2 > 0$. Consider the following algorithm.
\begin{enumerate}
    \item For $|U| = m \geq \frac{6}{\alpha} $, sample $n \ge \frac{\Ctct}{\alpha^2}$ samples to form an empirical distribution $\hcD_n$, where $\Ctct$ is chosen so  Lemma~\ref{lem:smCE_empirical} guarantees $|\ldce(\cD) - \ldce(\hcD_n)| \le \frac \alpha 6$ with probability $\ge \frac 2 3$.
    \item Call Proposition~\ref{prop:box_simplex} with $\eps \gets \frac \alpha 6$ to obtain $\beta$, an $\frac \alpha 3$-additive approximation to $|\ldce(\hcD_n)|$.
    \item Return ``yes'' if $\beta \le \eps_2 + \frac {\alpha} 2$, and return ``no'' otherwise.
\end{enumerate}
Conditioned on the event that $|\ldce(\cD) - \ldce(\hcD_n)| \le \frac \alpha 6$, we show that the algorithm succeeds. First, if $\ldce(\cD) \le \eps_2$, then $\ldce(\hcD_n) \le \eps_2 + \frac{\alpha}{6}$ by assumption, and so $\beta \le \eps_2 + \frac \alpha 2$ by Proposition~\ref{prop:box_simplex}, so the tester will return ``yes.'' Second, if $\ldce(\cD) \ge \eps_1$, then $\ldce(\hcD_n) \ge \eps_1 - \frac \alpha 6$ by assumption, so $\beta \ge \eps_1 - \frac \alpha 2$ by Proposition~\ref{prop:box_simplex} and similarly the tester will return ``no'' in this case. Finally, the runtime is immediate from Proposition~\ref{prop:LDTC} and the definition of $\alpha$.
\end{proof}

Theorem~\ref{thm:tct_ldtc} has the following implication for (standard) calibration testing.

\begin{corollary}\label{cor:ct_ldtc}
Let $n \in \N$ and let $\eps_n \in (0, 1)$ be minimal such that it is information-theoretically possible to solve the $\eps_n$-calibration testing problem with $n$ samples. For some $\eps = \Theta(\eps_n)$, there is an algorithm $\alg$ which solves the $\eps$-calibration testing problem with $n$ samples, which runs in time 
\[O\Par{n^{2}\log(n)}.\]
\end{corollary}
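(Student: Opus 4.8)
The plan is to mirror the derivation of Corollary~\ref{cor:ct} from Theorem~\ref{thm:tct_smooth}, now invoking Theorem~\ref{thm:tct_ldtc} instead. Recall from Lemma~\ref{lem:tct_lb} that $\eps_n = \Omega(n^{-1/2})$, and trivially $\eps_n \le 1$, so $\eps_n = \Theta(n^{-1/2})$ up to the freedom in the hidden constants. I would apply Theorem~\ref{thm:tct_ldtc} with the parameter choices $\eps_1 \gets c \eps_n$ for a suitable constant $c = \Theta(1)$ and $\eps_2 \gets 0$, so that $\eps_1 > \eps_2$ holds. The hypothesis $n \ge \Ctct \cdot (\eps_1 - \eps_2)^{-2} = \Ctct \cdot (c\eps_n)^{-2}$ is satisfied for $n$ large enough by the definition of $\eps_n$ as the minimal threshold for which $n$ samples suffice (choosing $c$ appropriately absorbs $\Ctct$ and the implicit constant in $\eps_n = \Theta(n^{-1/2})$); for small $n$ the claim is vacuous after adjusting constants. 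This yields an $\eps$-calibration tester for $\eps = c\eps_n = \Theta(\eps_n)$, since an $(\eps_1,0)$-tolerant calibration tester is in particular an $\eps_1$-calibration tester (as noted after Definition~\ref{def:testing_2}).

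For the runtime bound, Theorem~\ref{thm:tct_ldtc} gives running time $O\left(\frac{n\log(n)}{(\eps_1-\eps_2)^2}\right) = O\left(\frac{n\log(n)}{(c\eps_n)^2}\right)$. Since $\eps_n = \Theta(n^{-1/2})$, we have $(c\eps_n)^{-2} = \Theta(n)$, so this is $O(n \cdot \log(n) \cdot n) = O(n^2\log(n))$, as claimed. I expect no substantive obstacle here; the only point requiring a bit of care is checking that the constant $c$ can be chosen consistently so that both the sample-size hypothesis of Theorem~\ref{thm:tct_ldtc} and the relation $\eps = \Theta(\eps_n)$ hold simultaneously, exactly as in the proof of Corollary~\ref{cor:ct}.
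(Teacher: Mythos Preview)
Your proposal is correct and follows essentially the same approach as the paper: the paper's proof simply says the claim follows analogously to Corollary~\ref{cor:ct}, i.e., by applying Theorem~\ref{thm:tct_ldtc} with $\eps_1 \gets \Theta(\eps_n)$ and $\eps_2 \gets 0$, using Lemma~\ref{lem:tct_lb} for $\eps_n = \Omega(n^{-1/2})$. Your write-up spells out the runtime computation and constant-matching more explicitly, but the argument is the same.
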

\begin{proof}
This claim follows analogously to Corollary~\ref{cor:ct}.
\end{proof}

\section{Sample complexity lower bounds for calibration measures}\label{sec:lower}

Recent works \cite{blasiok2023smooth,blasiok2023unifying} have introduced other calibration measures (e.g.\ the convolved ECE and interval CE), given efficient estimation algorithms for them, and showed that they are polynomially related to the lower distance to calibration $\ldce$. Therefore, an alternative approach to the (non-tolerant) testing problem for $\ldce$ is by reducing it to testing problems for these measures. The main result of this section is that this approach leads to suboptimal sample complexity: the testing problems for these measures cannot be solved given only $O(\varepsilon^{-2})$ data points $\{(v_i,y_i)\}_{i \in [n]}$.

To establish this sample complexity lower bound, we construct a perfectly calibrated distribution $\cD_0$ and a family of miscalibrated distributions $\cD_\theta$ parameterized by $\theta$ belonging to a finite set. 
We use $\cD_0^{\otimes n}$ (and $\cD_\theta^{\otimes n}$) to denote the joint distribution of $n$ independent examples from $\cD_0$ (and $\cD_\theta$).
In \Cref{lm:tv}, we show that the total variation distance between $\cD_0^{\otimes n}$ and the mixture $\E[\cD_\theta^{\otimes n}]$ of $\cD_\theta^{\otimes n}$ is small unless $n$ is large, and thus distinguishing them requires large sample complexity. Consequently, the testing problem for a calibration measure has large sample complexity if it assigns every $\cD_\theta$ a large calibration error. Finally, we show every $\cD_\theta$ indeed has large convolved ECE and interval CE, establishing sample complexity lower bounds for these measures in Theorems~\ref{thm:lowerb} and~\ref{thm:lowerb-sint}.

To construct $\cD_0$ and $\cD_\theta$, we consider $t$ values $\{u_i\}_{i \in [t]} \in [\frac 1 3, \frac 2 3]$ where $u_i = \frac 1 3 + \frac i {3t}$ for $i \in [t]$. 
We will determine the value of $t\in \N$ later. We also define the following distribution, a perfectly calibrated distribution which is related to the miscalibrated synthetic dataset used in Section~\ref{sec:experiments}.

\begin{definition}\label{def:D_0}
    The distribution $\cD_0$ of $(v,y)\in [0,1]\times \{0,1\}$ is defined such that the marginal distribution of $v$ is uniform over $\{u_i\}_{i \in [t]}$ and $\E_{\cD_0}[y|v] = v$. 
\end{definition}

Fix $\alpha \in (0,\frac 1 3)$. For $\theta\in \{-1,1\}^t$, we define distribution $\cD_\theta$ of $(v,y)\in [0,1]\times \{0,1\}$ such that the marginal distribution of $v$ is uniform over $\{u_i\}_{i \in [t]}$ and $\E_{\cD_\theta}[y|v = u_i] = u_i + \theta_i\alpha$. In other words, each conditional distribution given $v$ is miscalibrated by $\alpha$, but the bias takes a random direction. We now follow a standard approach by \cite{ingster} to bound the total variation between our distributions.

\begin{lemma}
For any $t\in\N$ and $\alpha\in (0,\frac 1 3)$,
\label{lm:tv}
\[
\dtv(\cD_0^{\otimes n},\E_{\theta}[\cD_\theta^{\otimes n}]) \le \frac12\sqrt{\exp\left(\frac {11\alpha^4n^2}{t}\right) - 1}.
\]
Here, to construct the mixture distribution $\E_{\theta}[\cD_\theta^{\otimes n}]$, we first draw $\theta \simu \{-1,1\}^t$, and then draw $n$ independent examples from $\cD_\theta$. We denote the distribution of the $n$ examples by $\E_{\theta}[\cD_\theta^{\otimes n}]$.
\end{lemma}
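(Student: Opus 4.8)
The plan is to invoke the $\chi^2$-divergence form of Ingster's method (the Ingster--Suslina "second moment" trick). First I would recall the standard bound $\dtv(P, Q) \le \tfrac12 \sqrt{\chi^2(Q \,\|\, P)}$, where $\chi^2(Q \,\|\, P) = \int (dQ/dP)\,dQ - 1$, and apply it with $P \defeq \cD_0^{\otimes n}$ and $Q \defeq \E_\theta[\cD_\theta^{\otimes n}]$ (the mixture obtained by drawing $\theta \simu \{-1,1\}^t$ and then $n$ examples from $\cD_\theta$). Since $u_i \in [\tfrac13, \tfrac23]$, the distribution $\cD_0$ puts positive mass on every point of $\{u_i\}_{i \in [t]} \times \{0, 1\}$, and each $\cD_\theta$ is supported on the same finite set, so all ratios below are well defined. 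It then suffices to show $\chi^2(Q \,\|\, P) \le \exp(11\alpha^4 n^2 / t) - 1$.

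Next I would expand $\chi^2(Q \,\|\, P) + 1 = \int (dQ/dP)\,dQ$ by writing $Q = \E_\theta[\cD_\theta^{\otimes n}]$ and introducing a second, independent uniform draw $\theta' \simu \{-1,1\}^t$: by Fubini, $\chi^2(Q \,\|\, P) + 1 = \E_{\theta, \theta'}\big[\int d\cD_\theta^{\otimes n}\, d\cD_{\theta'}^{\otimes n} / d\cD_0^{\otimes n}\big]$. Because all three measures are products, the inner integral tensorizes to $\big(\int d\cD_\theta\, d\cD_{\theta'} / d\cD_0\big)^n$. For a single example, conditioning on $v = u_i$ (probability $\tfrac1t$ under all three), $y$ is $\Bern(u_i)$ under $\cD_0$ and $\Bern(u_i + \theta_i\alpha)$, $\Bern(u_i + \theta_i'\alpha)$ under $\cD_\theta, \cD_{\theta'}$, so summing the ratio of conditional densities over $y \in \{0,1\}$ gives
\[
\frac{(u_i + \theta_i\alpha)(u_i + \theta_i'\alpha)}{u_i} + \frac{(1 - u_i - \theta_i\alpha)(1 - u_i - \theta_i'\alpha)}{1 - u_i} = 1 + \theta_i\theta_i'\alpha^2\rho_i, \qquad \rho_i \defeq \frac{1}{u_i(1 - u_i)},
\]
with the cross terms cancelling. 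Averaging over $i \in [t]$ (all three share the uniform $v$-marginal) yields $\int d\cD_\theta\,d\cD_{\theta'}/d\cD_0 = 1 + \tfrac{\alpha^2}{t}\sum_{i \in [t]} \theta_i\theta_i'\rho_i$.

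Combining these and using $1 + x \le e^x$, I would get
\[
\chi^2(Q \,\|\, P) + 1 \le \E_{\theta, \theta'}\Brack{\exp\Par{\frac{n\alpha^2}{t} \sum_{i \in [t]} \theta_i\theta_i'\rho_i}} = \prod_{i \in [t]} \cosh\Par{\frac{n\alpha^2\rho_i}{t}},
\]
where the equality uses that $\{\theta_i\theta_i'\}_{i \in [t]}$ are i.i.d.\ Rademacher variables (as $\theta,\theta'$ are independent and uniform). Applying $\cosh(x) \le e^{x^2/2}$ together with $\rho_i \le \tfrac92$ (valid since $u_i \in [\tfrac13, \tfrac23]$ forces $u_i(1-u_i) \ge \tfrac29$) gives $\chi^2(Q \,\|\, P) + 1 \le \exp\big(\tfrac{n^2\alpha^4}{2t^2} \sum_{i \in [t]} \rho_i^2\big) \le \exp\big(\tfrac{81 n^2\alpha^4}{8t}\big) \le \exp\big(\tfrac{11\alpha^4 n^2}{t}\big)$, and the lemma follows from $\dtv \le \tfrac12\sqrt{\chi^2}$.

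The only delicate points are verifying the cross-term cancellation in the single-sample ratio and carrying the constant $\rho_i \le 9/2$ through the two exponential inequalities so that $81/8 \le 11$; both are routine bookkeeping. There is no real obstacle beyond this: the one useful observation is that because the $v$-marginals of $\cD_0$, $\cD_\theta$, $\cD_{\theta'}$ coincide, the $\chi^2$ expansion collapses to a coordinatewise product of a Rademacher chaos, exactly as in the identity-testing lower bound of \cite{clement} referenced in the introduction.
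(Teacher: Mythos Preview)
Your proposal is correct and follows essentially the same approach as the paper: both bound $\dtv$ via $\chi^2$, apply Ingster's second-moment identity with an independent copy $\theta'$, compute the single-sample ratio $1+\theta_i\theta_i'\alpha^2/(u_i(1-u_i))$, tensorize, and exponentiate via $1+x\le e^x$ before bounding the resulting Rademacher moment generating function. The only cosmetic difference is that you invoke $\cosh(x)\le e^{x^2/2}$ directly while the paper cites Hoeffding's lemma (the same inequality for Rademachers), and you carry the exact $\rho_i$ through to the $\cosh$ step rather than replacing it by $9/2$ earlier; both routes arrive at the same constant $81/8<11$.
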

\begin{proof}
By a standard inequality between the total variation distance and the $\chi^2$ distance, we have
\begin{equation}
\label{eq:lowerb-1}
\dtv(\cD_0^{\otimes n},\E_{\theta}[\cD_\theta^{\otimes n}]) \le \frac 12\sqrt{\chi^2(\E_{\theta}[\cD_\theta^{\otimes n}]\|\cD_0^{\otimes n})}.
\end{equation}
By Ingster’s method \cite{ingster} (see also Section 3.1 of \cite{clement}), 
\begin{equation}
\label{eq:lowerb-2}
\chi^2(\E_{\theta}[\cD_\theta^{\otimes n}]\|\cD_0^{\otimes n}) = \E_{\theta,\theta'}\left[\left(\sum_{i=1}^t\sum_{j\in\{0,1\}}\frac{\cD_\theta(u_i,j)\cD_{\theta'}(u_i,j)}{\cD_0(u_i,j)}\right)^n\right] - 1,
\end{equation}
where the expectation is over $\theta,\theta'$ drawn i.i.d.\ $\simu \{-1,1\}^t$.
For every $i \in [t]$, we have
\begin{align*}
\frac{\cD_\theta(u_i,1)\cD_{\theta'}(u_i,1)}{\cD_0(u_i,1)} & = \frac{\Par{\frac{u_i + \theta_i \alpha}{t}}\Par{\frac{u_i + \theta'_i\alpha}{t}}}{\frac{u_i} t} = \frac{u_i}{t} + \frac{(\theta_i + \theta'_i)\alpha}{t} + \frac{\theta_i\theta'_i\alpha^2}{u_i t},
\end{align*}
and similarly
\begin{align*}
\frac{\cD_\theta(u_i,0)\cD_{\theta'}(u_i,0)}{\cD_0(u_i,0)} & = \frac{\Par{\frac{(1 - u_i - \theta_i \alpha)}{t}}\Par{\frac{(1 - u_i - \theta'_i \alpha)}{t}}}{\frac{1 - u_i}{t}} = \frac{1 - u_i}{t} - \frac{(\theta_i - \theta'_i)}{t} + \frac{\theta_i\theta'_i\alpha^2}{(1 - u_i)t}.
\end{align*}
Adding up the two equations, we get
\[
\sum_{j\in\{0,1\}}\frac{\cD_\theta(u_i,j)\cD_{\theta'}(u_i,j)}{\cD_0(u_i,j)} = \frac 1 t + \frac{\theta_i\theta'_i\alpha^2}{t} \left(\frac 1 {u_i} + \frac 1{1 - u_i}\right) \le \frac 1 t + \frac{9\theta_i\theta_i'\alpha^2}{2t},
\]
where the last inequality uses the fact that $u_i\in [\frac 1 3, \frac 2 3]$. Plugging this into \eqref{eq:lowerb-2}, we get
\begin{align*}
\chi^2(\E_{\theta}[\cD_\theta^{\otimes n}]\|\cD_0^{\otimes n}) & \le \E_{\theta,\theta'}\left[\left(1 + \frac {9\alpha^2}{2t}\sum_{i=1}^t\theta_i\theta_i'\right)^n\right] - 1\\
& \le \E_{\theta,\theta'}\left[\exp\left(\frac {9\alpha^2n}{2t}\sum_{i=1}^t\theta_i\theta_i'\right)\right] - 1\\
& = \prod_{i=1}^t\E_{\theta,\theta'}\left[\exp\left(\frac {9\alpha^2n}{2t}\theta_i\theta_i'\right)\right] - 1\\
& \le \prod_{i=1}^t\E_{\theta}\left[\exp\left(\frac {81 \alpha^4n^2}{8t^2}\theta_i^2\right)\right] - 1\tag{by Hoeffding's lemma}\\
& = \exp\left(\frac {81\alpha^4 n^2}{8t}\right) - 1.
\end{align*}
Plugging this into \eqref{eq:lowerb-1} completes the proof.
\end{proof}

\subsection{Lower bound for convolved ECE}
We now introduce the definition of convolved ECE from \cite{blasiok2023smooth}, and show that for every $\theta \in \{-1, 1\}^t$, $\cD_\theta$ has a large convolved ECE in \Cref{lm:lowerb-miscal}. This allows us to prove our sample complexity lower bound for convolved ECE in \Cref{thm:lowerb}, by applying Lemma~\ref{lm:tv}.
\begin{definition}[Convolved ECE \cite{blasiok2023smooth}]
Let $\pr:\R \to [0,1]$ be the periodic function with period $2$ satisfying $\pr(v) = v$ if $v\in [0,1]$, and $\pr(v) = 2 - v$ if $v\in [1,2]$. Consider a distribution $\cD$ over $[0,1]\times\{0,1\}$. For $(v,y)\sim \cD$, define random variable $\hat v:= \pr(v + \eta)$, where $\eta$ is drawn independently from $\Nor(0,\sigma^2)$ for a parameter $\sigma\ge 0$. The $\sigma$\emph{-convolved ECE} is defined as follows: 
\[
\sece_\sigma(\cD) \defeq \E|\E[(y - v)|\hat v]|,
\]
where the outer expectation is over the marginal distribution of $\hat v$, and the inner expectation is over the conditional distribution of $(y,v)$ given $\hat v$.
It has been shown in \cite{blasiok2023smooth} that $\sece_\sigma(\cD)\in [0,1]$ is a nonincreasing function of $\sigma \ge 0$ and there exists a unique $\sigma^*\ge 0$ satisfying $\sece_{\sigma^*}(\cD) = \sigma^*$. The convolved ECE $\sece(\cD)$ is defined to be $\sece_{\sigma^*}(\cD)$.
\end{definition}

We also mention that the following relationship is known between $\sece$ and $\ldce$.

\begin{lemma}[Theorem 7, \cite{blasiok2023smooth}]
For any distribution $\cD$ over $[0,1]\times \{0,1\}$, it holds that
\[
\frac 12 \ldce(\cD)\le \sece(\cD)\le 2\sqrt{\ldce(\cD)}.
\]
\end{lemma}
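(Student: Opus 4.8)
The plan is to prove both inequalities by analyzing $\sece_\sigma(\cD)$ for a \emph{general} smoothing parameter $\sigma \ge 0$, obtaining a two-sided comparison with $\ldce(\cD)$ that carries an explicit dependence on $\sigma$, and then specializing to the fixed point $\sigma=\sigma^\star$, where $\sece_{\sigma^\star}(\cD)=\sigma^\star=\sece(\cD)$. Two elementary facts will be used repeatedly. First, $\pr$ is $1$-Lipschitz and fixes $[0,1]$, so for $(v,y)\sim\cD$ and $\eta\sim\Nor(0,\sigma^2)$ the variable $\hat v:=\pr(v+\eta)$ obeys $\Abs{\hat v-v}\le\Abs{\eta}$ pointwise, hence $\E\Abs{\hat v-v}\le\E\Abs{\eta}\le\sigma$. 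Second, convolution $g\mapsto g*\phi_\sigma$ with the $\Nor(0,\sigma^2)$ density $\phi_\sigma$ maps a function bounded by $1$ to one that is again bounded by $1$ and is $(1/\sigma)$-Lipschitz.

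For the upper bound, let $s(t):=\sign\Par{\E_\cD[(y-v)\mid\hat v=t]}$ and $w(v):=\E_\eta\Brack{s(\pr(v+\eta))}$, so that by the tower rule $\sece_\sigma(\cD)=\E\Brack{(y-v)\,w(v)}$, and since $w = (s\circ\pr)*\phi_\sigma$, the second fact gives that $w$ takes values in $[-1,1]$ and is $(1/\sigma)$-Lipschitz. Fix any $\Pi\in\ext(\cD)$ with coordinates $(u,v,y)$ and decompose
\[
\E\Brack{(y-v)w(v)}=\E\Brack{(y-u)w(u)}+\E\Brack{(y-u)\Par{w(v)-w(u)}}+\E\Brack{(u-v)w(v)}.
\]
The first term is $0$ because $u$ is calibrated and $w(u)$ is a function of $u$; the second is at most $(1/\sigma)\,\E\Abs{u-v}$ using $\Abs{y-u}\le1$ and the Lipschitz bound on $w$; the third is at most $\E\Abs{u-v}$. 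Taking the infimum over $\Pi\in\ext(\cD)$ gives $\sece_\sigma(\cD)\le(1+1/\sigma)\,\ldce(\cD)$ for every $\sigma>0$. Since $\sece_{\sigma^\star}(\cD)\le1$ forces $\sigma^\star\le1$, substituting $\sigma=\sigma^\star$ and $\sece(\cD)=\sigma^\star$ yields $(\sigma^\star)^2\le(\sigma^\star+1)\,\ldce(\cD)\le2\,\ldce(\cD)$, so $\sece(\cD)\le\sqrt{2\,\ldce(\cD)}\le2\sqrt{\ldce(\cD)}$.

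For the lower bound, the plan is to construct an explicit element of $\ext(\cD)$. Draw $(v,y)\sim\cD$ and $\eta\sim\Nor(0,\sigma^2)$, set $\hat v=\pr(v+\eta)$, and define $u:=\E_\cD[y\mid\hat v]$; let $\Pi$ be the joint law of $(u,v,y)$. Then the $(v,y)$-marginal of $\Pi$ is $\cD$, and since $u$ is $\hat v$-measurable, $\E[y\mid u]=\E[\E[y\mid\hat v]\mid u]=\E[u\mid u]=u$, so $\Pi\in\ext(\cD)$ and $\ldce(\cD)\le\E_\Pi\Abs{u-v}$. Now $\E\Abs{u-v}\le\E\Abs{u-\hat v}+\E\Abs{\hat v-v}$, where $\E\Abs{\hat v-v}\le\sigma$ by the first fact and
\[
\E\Abs{u-\hat v}=\E\Abs{\E[(y-\hat v)\mid\hat v]}\le\E\Abs{\E[(y-v)\mid\hat v]}+\E\Abs{\E[(v-\hat v)\mid\hat v]}\le\sece_\sigma(\cD)+\sigma.
\]
Taking $\sigma=\sigma^\star$ gives $\ldce(\cD)\le\sece_{\sigma^\star}(\cD)+2\sigma^\star=3\,\sece(\cD)$, which already shows $\sece(\cD)=\Omega(\ldce(\cD))$. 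To sharpen the leading constant to $\tfrac12$, one replaces the crude $\E\Abs{\hat v-v}\le\sigma$ by $\E\Abs{\hat v-v}\le\sqrt{2/\pi}\,\sigma$ and, more importantly, bounds the conditional spread $\E\Abs{v-\E[v\mid\hat v]}$ directly (rather than paying for $\E\Abs{v-\hat v}$ twice), which is the true source of the remaining slack — or alternatively routes the comparison through the feature-aware distance to calibration of \cite{blasiok2023unifying}. I expect this constant-tightening — ensuring the $O(\sigma)$ smoothing slack is small enough to be absorbed by the self-referential identity $\sigma^\star=\sece(\cD)$ — to be the only delicate point; the smoothed-sign representation of $\sece_\sigma$, the boundedness and Lipschitzness of Gaussian convolution, and the calibration of $u=\E_\cD[y\mid\hat v]$ are all routine.
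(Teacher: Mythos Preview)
The paper does not prove this lemma at all: it is stated as a citation of Theorem~7 from \cite{blasiok2023smooth} and used as a black box, so there is no ``paper's own proof'' to compare against. I will therefore comment only on the soundness of your argument.

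Your upper bound is complete and correct. The smoothed-sign representation $\sece_\sigma(\cD)=\E[(y-v)w(v)]$ with $w=(s\circ\pr)*\phi_\sigma$, the Lipschitz bound on a Gaussian convolution of a bounded function, and the three-term decomposition against a calibrated coupling all go through, and the self-referential substitution $\sigma=\sigma^\star$ with $\sigma^\star\le 1$ cleanly yields $\sece(\cD)\le\sqrt{2\,\ldce(\cD)}\le 2\sqrt{\ldce(\cD)}$.

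The lower bound has a genuine gap. Your coupling $u=\E[y\mid\hat v]$ gives $\ldce(\cD)\le\sece_\sigma(\cD)+2\sigma$, which at the fixed point is $\ldce(\cD)\le 3\,\sece(\cD)$, i.e.\ $\sece(\cD)\ge\tfrac13\ldce(\cD)$ rather than the claimed $\tfrac12\ldce(\cD)$. You acknowledge this and propose two fixes, but neither is carried out and neither obviously suffices. Replacing $\E|\hat v-v|\le\sigma$ by $\sqrt{2/\pi}\,\sigma$ only improves the constant to $1+2\sqrt{2/\pi}\approx 2.60$. Bounding $\E|v-\E[v\mid\hat v]|$ ``directly'' is not clearly better: writing $v-\E[v\mid\hat v]=(v-\hat v)-\E[v-\hat v\mid\hat v]$ and applying the triangle inequality gives $\E|v-\E[v\mid\hat v]|\le 2\E|v-\hat v|$, which is worse, and there is no $L^1$ contraction principle that lets you drop below $\E|v-\hat v|$ in general. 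So as written, the proposal establishes $\tfrac13\ldce(\cD)\le\sece(\cD)$ but not $\tfrac12\ldce(\cD)\le\sece(\cD)$; getting the constant $\tfrac12$ requires a sharper argument (the original paper routes the comparison through the Wasserstein/smooth-calibration characterization rather than the crude triangle-inequality split you use here).
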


 We have the following lower bound on $\sece(\cD_\theta)$:
\begin{lemma}
\label{lm:lowerb-miscal}
For integer $t \ge 3$, choose $\alpha = \frac{1}{t\sqrt{\ln t}}$.
Then for every $\theta\in \{-1,1\}^t$, 
\[\sece(\cD_\theta) \ge \frac {1}{100t\sqrt{\ln t}}.\]
\end{lemma}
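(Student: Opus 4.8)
The plan is to reduce the statement to a lower bound on the \emph{pre-convolved} quantity $\sece_\sigma(\cD_\theta)$ valid for all small enough $\sigma$, and then invoke the fixed-point characterization $\sece(\cD_\theta) = \sigma^*$, where $\sigma^* = \sece_{\sigma^*}(\cD_\theta)$. Concretely, I would first establish the following claim: for every $\sigma$ with $0 < \sigma \le \frac{1}{15t}$, one has $\sece_\sigma(\cD_\theta) \ge \frac\alpha2$. Granting this, the lemma follows by a short dichotomy. If $\sigma^* \ge \frac{1}{100t\sqrt{\ln t}}$, then $\sece(\cD_\theta) = \sigma^* \ge \frac{1}{100t\sqrt{\ln t}}$ and we are done. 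Otherwise $0 < \sigma^* < \frac{1}{100t\sqrt{\ln t}} \le \frac{1}{15t}$, where the last inequality holds because $100\sqrt{\ln t} > 15$ for $t \ge 3$; then the claim applies with $\sigma = \sigma^*$ and yields $\sece(\cD_\theta) = \sece_{\sigma^*}(\cD_\theta) \ge \frac\alpha2 = \frac{1}{2t\sqrt{\ln t}} \ge \frac{1}{100t\sqrt{\ln t}}$. (Here $\sigma^* > 0$ since $\sece_0(\cD_\theta) = \E\Abs{\E_{\cD_\theta}[y - v \mid v]} = \alpha \ne 0$; and $\alpha = \frac{1}{t\sqrt{\ln t}} < \frac13$ for $t \ge 3$, so each $u_i + \theta_i\alpha \in (0,1)$ and $\cD_\theta$ is well-defined.)

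To prove the claim I would use the variational lower bound $\sece_\sigma(\cD_\theta) = \E\Abs{\E[(y-v)\mid\hat v]} \ge \Abs{\E[(y-v)\,g(\hat v)]}$, valid for any measurable $g : [0,1] \to [-1,1]$ by Jensen/H\"older. Since $\eta$ is independent of $(v,y)$, conditioning on $v$ makes $y$ and $\hat v = \pr(v+\eta)$ independent, so $\E[(y-v)g(\hat v)] = \frac1t\sum_{i\in[t]} \E[(y-u_i)\mid v=u_i]\cdot\E[g(\pr(u_i+\eta))] = \frac\alpha t\sum_{i\in[t]}\theta_i\,\E[g(\pr(u_i+\eta))]$. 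I would then take the test function $g(s) \defeq \sum_{j\in[t]}\theta_j\,\1[\,|s-u_j|\le\sigma\,]$; this is $[-1,1]$-valued because the intervals $[u_j-\sigma,u_j+\sigma]$ are pairwise disjoint (consecutive centers are $\frac{1}{3t}$ apart and $2\sigma \le \frac{2}{15t} < \frac{1}{3t}$) and lie inside $(0,1)$ since $u_j \in [\frac13,\frac23]$.

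The core estimate is $\theta_i\,\E[g(\pr(u_i+\eta))] \ge \frac12$ for every $i$, which I would prove by separating the $j=i$ term from the rest. Because $\sigma \le \frac1{45}$, on the event $\{|\eta|\le\sigma\}$ we have $u_i+\eta \in (0,1)$, hence $\pr(u_i+\eta) = u_i+\eta$ and $|\pr(u_i+\eta)-u_i| = |\eta| \le \sigma$; so the $j=i$ term contributes $\theta_i^2\,\Pr[|\pr(u_i+\eta)-u_i|\le\sigma] \ge \Pr[|Z|\le 1] \ge 0.6$ for $Z\sim\Nor(0,1)$. For $j\ne i$, writing $k \defeq |i-j|$, the event $\{|\pr(u_i+\eta)-u_j|\le\sigma\}$ is contained in $\{u_i+\eta\in[0,1],\,|u_i+\eta-u_j|\le\sigma\}\cup\{u_i+\eta\notin[0,1]\}$, which forces $|\eta| \ge \min\Par{\tfrac{k}{3t}-\sigma,\ \textup{dist}(u_i,\{0,1\})} \ge \min\Par{\tfrac{k}{3t}-\sigma,\ \tfrac13} \ge \frac{5k-1}{15t} \ge (5k-1)\sigma$ (the middle inequality uses $k \le t-1$). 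A standard Gaussian tail bound then gives $\Pr[|\pr(u_i+\eta)-u_j|\le\sigma] \le \Pr[|Z|\ge 5k-1] \le e^{-(5k-1)^2/2}$, so the off-diagonal contribution is at most $\sum_{k\ge1} 2e^{-(5k-1)^2/2} < 10^{-3}$ in absolute value. Combining, $\theta_i\,\E[g(\pr(u_i+\eta))] \ge 0.6 - 10^{-3} > \frac12$, and therefore $\E[(y-v)g(\hat v)] \ge \frac\alpha t\cdot t\cdot\frac12 = \frac\alpha2$, which establishes the claim.

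I expect the only real (and still minor) technical nuisance to be the bookkeeping around the reflection map $\pr$ and the boundary $\{0,1\}$: one must confirm that the tiny Gaussian noise essentially never pushes $u_i+\eta$ out of $[0,1]$, which is exactly why the regime $\sigma \le \frac1{15t}$ — i.e.\ $\sigma$ at most a small constant fraction of the gap $\frac1{3t}$ between consecutive $u_i$'s, hence far below the distance $\ge\frac13$ from $\{u_i\}$ to $\{0,1\}$ — is the right one. Everything else is a routine Gaussian tail computation, and all constants are kept deliberately loose since we only need the bound $\frac{\alpha}{100}$ while the argument in fact delivers $\frac\alpha2$.
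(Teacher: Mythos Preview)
Your proof is correct and follows the same high-level strategy as the paper: show that $\sece_\sigma(\cD_\theta)$ is bounded below whenever $\sigma$ is small, then conclude via the fixed-point characterization $\sece(\cD_\theta)=\sigma^*=\sece_{\sigma^*}(\cD_\theta)$. The technical execution differs. The paper works directly from the definition, introducing the event $\event=\{v=u_{\ell(\hat v)}\}$ that $v$ is the center nearest to $\hat v$, and decomposing $\E[(y-v)\mid\hat v]$ into the $\event$ and $\neg\event$ parts to obtain $\sece_\sigma\ge\Pr[\event]\alpha-\Pr[\neg\event]$; this requires $\Pr[\neg\event]\ll\alpha\approx\frac{1}{t\sqrt{\ln t}}$, which is why their $\sigma$-threshold carries the extra $\sqrt{\ln t}$ factor. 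You instead use the dual (variational) lower bound $\sece_\sigma\ge|\E[(y-v)g(\hat v)]|$ with the sign-adapted test function $g(s)=\sum_j\theta_j\1[|s-u_j|\le\sigma]$. Because the cross terms $\theta_i\theta_j$ with $j\neq i$ are controlled by Gaussian tails that decay at a rate depending only on the ratio of bin spacing to $\sigma$, your argument needs only $\sigma\lesssim\frac{1}{t}$ (no log), and it yields the sharper constant $\frac{\alpha}{2}$ rather than $\frac{\alpha}{100}$. Either route suffices; yours is a bit cleaner quantitatively, while the paper's is more directly tied to the definition of $\sece_\sigma$.
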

\begin{proof}
It suffices to show that $\sece_\sigma(\cD_\theta) \ge \frac{1}{100t\sqrt{\ln t}}$ whenever $\sigma \le \frac{1}{100t\sqrt{\ln t}}$.

Consider $(v,y)\sim \cD$ and $\hat v = \pr(v  + \eta)$, where $\eta$ is drawn independently from $\Nor(0,\sigma^2)$. 
By standard Gaussian tail bounds, we have
\begin{equation}
\label{eq:gtail}
\Pr\Brack{|\eta| \ge \frac 1 {6t}} \le \frac 1 {t^2}.
\end{equation}

Next, consider a function $\ell:[0,1]\to [t]$ such that $\ell(\hat v)\in \argmin_{i\in [t]}|u_i - \hat v|$. Let $\event$ denote the event that $v = u_{\ell(\hat v)}$. Let $\ind_\event$ and $\ind_{\neg \event}$ be the indicators of $\event$ and its complement, respectively. We have
\begin{align*}
\E[(y - v)\ind_{\event}\mid \hat v,v] & = \ind_{\event} \E[y - v\mid \hat v,v]\tag{$\ind_E$ is fully determined by $v$ and $\hat v$}\\
& = \ind_{\event} \E[y - v\mid v] \tag{$y$ is independent of $\hat v$ given $v$}\\
& = \ind_{\event}\E[y - v\mid v = u_{\ell(\hat v)}] = \ind_{\event}\theta_{\ell(\hat v)}\alpha.
\end{align*}
Taking expectation over $v$ conditioned on $\hat v$, we have
\[
|\E[(y - v)\ind_{\event}\mid \hat v]| = |\Pr[\event \mid \hat v]\theta_{\ell(\hat v)}\alpha| = \Pr[\event \mid \hat v]\alpha.
\]
We also have
\[
\Big|\E\Big[(y - v)\ind_{\neg \event}\mid \hat v\Big]\Big| \le \E\Big[|(y - v)\ind_{\neg \event}| \mid \hat v\Big] \le \Pr[\neg \event  \mid \hat v].
\]
Therefore,
\[
|\E[y - v \mid \hat v]| \ge \Pr[\event |\hat v]\alpha - \Pr[\neg \event \mid \hat v].
\]
Taking expectations over $\hat v$, we have
\begin{equation}
\label{eq:sece-expand}
\sece_\sigma(\cD_\theta) = \E[|\E[y - v\mid \hat v]|] \ge \Pr[\event]\alpha - \Pr[\neg \event].
\end{equation}
Whenever $\event$ does not occur, it must hold that $|v - \hat v| \ge \frac 1 {6t}$, which can only hold when $|\eta| \ge \frac 1 {6t}$. Therefore, by plugging \eqref{eq:gtail} into \eqref{eq:sece-expand}, we get
\[
\sece_\sigma(\cD_\theta) \ge \Par{1 - \frac 1 {t^2}}\alpha - \frac 1 {t^2} \ge \frac{1}{100t\sqrt{\ln t}}.\qedhere
\]
\end{proof}

\begin{theorem}
    \label{thm:lowerb}
     If $\alg$ is an $\varepsilon$-$\sece$ tester with $n$ samples (Definition~\ref{def:testing_3}), for $\varepsilon\in (0,\frac 1 3)$, then
     \[n = \Omega\left(\frac{1}{\eps^{2.5}\ln^{0.25}(\frac 1 \eps)}\right).\]
\end{theorem}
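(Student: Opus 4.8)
The plan is to run the standard testing-to-indistinguishability reduction, using $\cD_0$ as the perfectly-calibrated instance and the family $\{\cD_\theta\}_{\theta\in\{-1,1\}^t}$ (with $t,\alpha$ tuned to $\varepsilon$) as the far instance, and then invoke Lemma~\ref{lm:tv} to lower bound $n$. Throughout we may assume $\varepsilon$ is smaller than some absolute constant, since otherwise the claimed bound is $\Omega(1)$, which holds trivially.

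\textbf{Setting the parameters.} I choose $t$ to be the largest integer with $t\sqrt{\ln t}\le \frac{1}{100\varepsilon}$ (so $t\ge 3$ for $\varepsilon$ small enough), and set $\alpha \defeq \frac{1}{t\sqrt{\ln t}}\in(0,\tfrac13)$, matching the hypothesis of Lemma~\ref{lm:lowerb-miscal}. One checks from $t\sqrt{\ln t}\le\frac1{100\varepsilon}$ and $\sqrt{\ln t}\ge 1$ that $t = \Theta\!\Par{\frac{1}{\varepsilon\sqrt{\ln(1/\varepsilon)}}}$ and hence $\ln t = \Theta(\ln(1/\varepsilon))$. Since $\cD_0$ is perfectly calibrated we have $\ldce(\cD_0)=0$, so $\sece(\cD_0)\le 2\sqrt{\ldce(\cD_0)}=0$; on the other hand, Lemma~\ref{lm:lowerb-miscal} gives $\sece(\cD_\theta)\ge \frac{1}{100t\sqrt{\ln t}}\ge \varepsilon$ for every $\theta\in\{-1,1\}^t$. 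Thus, if $\alg$ is an $\varepsilon$-$\sece$ tester with $n$ samples, it must output ``yes'' on $\cD_0^{\otimes n}$ with probability $\ge\frac23$ and output ``no'' on each $\cD_\theta^{\otimes n}$ with probability $\ge\frac23$.

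\textbf{From indistinguishability to a bound on $n$.} Averaging the latter guarantee over $\theta\simu\{-1,1\}^t$, $\alg$ outputs ``no'' on the mixture $\E_\theta[\cD_\theta^{\otimes n}]$ with probability $\ge\frac23$. Since the probability that a (possibly randomized) algorithm outputs ``yes'' changes by at most the total variation distance between its input distributions, we get $\dtv\Par{\cD_0^{\otimes n},\E_\theta[\cD_\theta^{\otimes n}]}\ge \frac23-\frac13=\frac13$. Combining with Lemma~\ref{lm:tv}, $\frac13\le\frac12\sqrt{\exp\!\Par{\frac{11\alpha^4n^2}{t}}-1}$, which rearranges to $\frac{11\alpha^4n^2}{t}\ge \ln\frac{13}{9}$, i.e.\ $n\ge c_0\frac{\sqrt t}{\alpha^2}$ for the absolute constant $c_0\defeq\sqrt{\ln(13/9)/11}$. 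Substituting $\alpha^2 = \frac{1}{t^2\ln t}$ yields $n = \Omega(t^{5/2}\ln t)$, and plugging in $t=\Theta\!\Par{\frac{1}{\varepsilon\sqrt{\ln(1/\varepsilon)}}}$, $\ln t = \Theta(\ln(1/\varepsilon))$ gives
\[
n = \Omega\!\Par{\Par{\frac{1}{\varepsilon\sqrt{\ln(1/\varepsilon)}}}^{5/2}\ln(1/\varepsilon)} = \Omega\!\Par{\frac{1}{\varepsilon^{2.5}\ln^{0.25}(1/\varepsilon)}},
\]
as claimed.

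\textbf{Main obstacle.} The calculations are routine; the only delicate point is the joint choice of $(t,\alpha)$. We must push $t$ as large as the constraint $\sece(\cD_\theta)\ge\varepsilon$ allows, since the final bound $n=\Omega(t^{5/2}\ln t)$ is increasing in $t$, and the $\sqrt{\ln t}$ appearing both in $\sece(\cD_\theta)\gtrsim \frac{1}{t\sqrt{\ln t}}$ and (via $\alpha$) in the $\chi^2$ bound $\frac{11\alpha^4 n^2}{t}$ is precisely what produces the $\ln^{0.25}(1/\varepsilon)$ correction. Everything that makes $\cD_\theta$ hard to detect — small per-bucket bias $\alpha$, many buckets $t$ — simultaneously shrinks the $\chi^2$ divergence, so all the tension is concentrated in this balancing step.
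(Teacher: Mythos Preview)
Your proof is correct and follows essentially the same approach as the paper: the same choice of $t$ (largest integer with $t\sqrt{\ln t}\le \frac{1}{100\varepsilon}$) and $\alpha=\frac{1}{t\sqrt{\ln t}}$, the same appeal to Lemma~\ref{lm:lowerb-miscal} for the far instance and to Lemma~\ref{lm:tv} for the indistinguishability bound, arriving at $n=\Omega(t^{5/2}\ln t)$. You spell out the final substitution $t=\Theta\bigl(\varepsilon^{-1}\ln^{-1/2}(1/\varepsilon)\bigr)$ more explicitly than the paper (which simply writes ``so the claim holds''), but there is no substantive difference.
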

\begin{proof}
Without loss of generality, assume that $\varepsilon \le 10^{-3}$. Let $t \ge 3$ be the largest integer satisfying $\varepsilon \le \frac{1}{100t\sqrt{\ln t}}$.
We choose $\alpha = \frac{1}{t\sqrt{\ln t}}$.

By \Cref{lm:lowerb-miscal}, we have 
$\sece(\cD_\theta) \ge \varepsilon$ for every $\theta\in \{-1,1\}^t$. By the guarantee of the tester, we have
\[
\dtv(\cD_0^{\otimes n},\E_{\theta}[\cD_\theta^{\otimes n}]) \ge \frac 1 3.
\]
Combining this with \Cref{lm:tv}, we get $n = \Omega(\alpha^{-2}\sqrt t) = \Omega(t^{2.5}\ln t)$, so the claim holds.
\end{proof}
\subsection{Lower bound for (surrogate) interval CE}
The \emph{interval calibration error} was introduced in \cite{blasiok2023unifying} as a modified version of the popular \emph{binned ECE} to obtain a polynomial relationship to the lower distance to calibration ($\ldce$). To give an efficient estimation algorithm, \cite{blasiok2023unifying} considered a slight variant of the interval calibration error, called the \emph{surrogate interval calibration error}, which preserves the polynomial relationship. Below we include the definition of the surrogate interval calibration error, and its polynomial relationship with $\ldce$. We then establish our sample complexity lower bound (\Cref{thm:lowerb-sint}) for surrogate interval CE by showing that every $\cD_\theta$ has a large surrogate interval CE (\Cref{lm:sint}).
\begin{definition}[\cite{blasiok2023unifying}]
For a distribution $\cD$ over $[0,1]\times \{0,1\}$ and an interval width parameter  $w > 0$, the \emph{random interval calibration error} is defined to be
\begin{equation}
\label{eq:rintce}
\rint(\cD,w) \defeq \E_r\Brack{\sum_{j\in \Z}|\E_{(v,y)\sim \cD}[(y - v)\ind(v\in I_{r,j}^w)]|},
\end{equation}
where the outer expectation is over $r$ drawn uniformly from $[0,w)$ and $I_{r,j}^w$ is the interval $[r+j\varepsilon,r+(j+1)\varepsilon)$. Note that although the summation is over $j \in \Z$, there are only finitely many $j$ that can contribute to the sum (which are the $j$ that satisfy $I_{r,j}^w \cap [0, 1]\ne \emptyset$). The \emph{surrogate interval calibration error} is defined as follows:
\[
\sint(\cD) \defeq \inf_{k\in \Z_{\ge 0}}\Big(\rint(\cD,2^{-k}) + 2^{-k}\Big).
\]
\end{definition}
\begin{lemma}[Theorem 6.11, \cite{blasiok2023unifying}]
For any distribution $\cD$ over $[0,1]\times \{0,1\}$, it holds that
\[
\ldce(\cD) \le \sint(\cD) \le 6\sqrt{\ldce(\cD)}.
\]
\end{lemma}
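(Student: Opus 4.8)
The plan is to establish the two inequalities $\ldce(\cD)\le\sint(\cD)$ and $\sint(\cD)\le6\sqrt{\ldce(\cD)}$ separately, in both cases passing between the interval quantity $\rint(\cD,w)$ and the coupling characterization of $\ldce$ in Definition~\ref{def:ldtc}. For the lower bound, I would fix an arbitrary scale $w=2^{-k}$ and offset $r\in[0,w)$ and build an explicit element of $\ext(\cD)$: given $(v,y)\sim\cD$, let $j(v)$ be the index with $v\in I_{r,j(v)}^w$ and set $u\defeq\E_{(v',y')\sim\cD}[y'\mid v'\in I_{r,j(v)}^w]$. By construction the $(v,y)$-marginal is $\cD$, and $(u,y)$ is calibrated: for any value $c$ in the countable range of $u$, $\E[y\mid u=c]$ is a convex combination of the conditional means $\E[y\mid v'\in I_{r,j}^w]=u$ taken over the intervals mapped to $c$, each of which equals $c$. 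For the transport cost, write $|u-v|\le|u-\bar v_{r,j}|+|\bar v_{r,j}-v|$ with $\bar v_{r,j}\defeq\E[v'\mid v'\in I_{r,j}^w]$; pointwise $|\bar v_{r,j}-v|\le w$ since both points lie in an interval of length $w$, and $\Pr[v\in I_{r,j}^w]\,|u-\bar v_{r,j}|=|\E_\cD[(y-v)\ind(v\in I_{r,j}^w)]|$ because $u$ and $\bar v_{r,j}$ are the conditional means of $y$ and $v$ on $I_{r,j}^w$. Summing over $j$ and averaging over $(v,y)$ and $r$ gives $\E|u-v|\le w+\rint(\cD,w)$, so $\ldce(\cD)\le\rint(\cD,2^{-k})+2^{-k}$ for every $k\ge0$; taking the infimum over $k$ yields $\ldce(\cD)\le\sint(\cD)$.

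For the upper bound, fix $\Pi\in\ext(\cD)$ with $\E_\Pi|u-v|\le\delta$ for $\delta$ arbitrarily close to $\ldce(\cD)$. For any $w$, $r$, and interval $I=I_{r,j}^w$, using that the $(v,y)$-marginal of $\Pi$ is $\cD$ and splitting $y-v=(y-u)+(u-v)$ and $\ind(v\in I)=\ind(u\in I)+(\ind(v\in I)-\ind(u\in I))$,
\[
\E_\cD[(y-v)\ind(v\in I)]=\E_\Pi[(y-u)\ind(u\in I)]+\E_\Pi[(y-u)(\ind(v\in I)-\ind(u\in I))]+\E_\Pi[(u-v)\ind(v\in I)].
\]
The first term is $0$ because $\ind(u\in I)$ is a function of $u$ and $\E_\Pi[y-u\mid u]=0$ by calibration. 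Since $|y-u|\le1$, the second has absolute value at most $\E_\Pi|\ind(v\in I)-\ind(u\in I)|$, and summing over $j$ this equals $2\Pr_\Pi[u,v\text{ lie in different cells of the grid }\{r+jw:j\in\Z\}]$, as exactly two cells disagree when $u,v$ fall in different cells and none otherwise. The third has absolute value at most $\E_\Pi[|u-v|\ind(v\in I)]$, which sums over $j$ to $\E_\Pi|u-v|\le\delta$. Averaging over $r$ and using that a pair at distance $d$ is separated by a uniformly random width-$w$ grid with probability $\min(d/w,1)\le d/w$, the second contribution is at most $2\E_\Pi|u-v|/w\le2\delta/w$. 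Hence $\rint(\cD,w)\le\delta+2\delta/w$, so $\sint(\cD)\le\inf_{k\ge0}(\delta+2\delta\cdot2^{k}+2^{-k})$; choosing $2^{-k}$ to be a power of two within a factor $\sqrt2$ of $\sqrt{2\delta}$ (which exists and lies in $(0,1]$ as $\delta\le1$) and using $\delta\le\sqrt\delta$, a short computation bounds this by $6\sqrt\delta$, and sending $\delta\downarrow\ldce(\cD)$ completes the proof.

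I expect the genuinely subtle points to be (i) the calibration check in the lower bound when several intervals share the same value $u$, handled by observing their conditional means then all equal that value, and (ii) the minor bookkeeping for intervals straddling the endpoints of $[0,1]$ and the finiteness of the support of $j\mapsto\E_\cD[(y-v)\ind(v\in I_{r,j}^w)]$. The main ``obstacle'' is otherwise purely arithmetic: extracting the clean constant $6$ demands careful tracking of the factor-$2$ loss from restricting scales to powers of two together with the crude bound $\delta\le\sqrt\delta$; all the conceptual content lies in the explicit coupling for the lower bound and the $(y-v)=(y-u)+(u-v)$ decomposition for the upper bound.
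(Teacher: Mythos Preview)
The paper does not give its own proof of this lemma; it is quoted verbatim as Theorem~6.11 of \cite{blasiok2023unifying} and used as a black box. Your argument is correct and is essentially the standard proof one finds in that source: the lower bound via the explicit coupling $u=\E[y'\mid v'\in I_{r,j(v)}^w]$, and the upper bound via the decomposition $(y-v)=(y-u)+(u-v)$ together with the random-shift grid separation bound $\E_r[\Pr(\text{$u,v$ separated})]\le \E|u-v|/w$.
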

\begin{lemma}
\label{lm:sint}
For $t \in \N$, let $\alpha = \frac{1}{3t}$.
Then for every $\theta\in \{-1,1\}^t$, it holds that $\sint(\cD_\theta) \ge \frac {1}{3t}$.
\end{lemma}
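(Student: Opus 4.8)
The plan is to work directly from the definition $\sint(\cD_\theta) = \inf_{k \in \Z_{\ge 0}}\big(\rint(\cD_\theta, 2^{-k}) + 2^{-k}\big)$ and prove that \emph{every} term in this infimum is at least $\alpha = \frac{1}{3t}$. Writing $w = 2^{-k}$ for the interval width, I split into two cases according to whether $w$ exceeds the spacing $\frac{1}{3t}$ between consecutive grid points $u_i = \tfrac13 + \tfrac{i}{3t}$. When $w \ge \frac{1}{3t}$ the bound is immediate, since $\rint(\cD_\theta, w) \ge 0$ gives $\rint(\cD_\theta, w) + w \ge w \ge \frac{1}{3t}$.

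The substantive case is $w < \frac{1}{3t}$, where I will evaluate $\rint(\cD_\theta, w)$ exactly. For any interval $I$, the structure of $\cD_\theta$ (marginal of $v$ uniform on $\{u_i\}_{i\in[t]}$, and $\E[y - v \mid v = u_i] = \theta_i\alpha$) gives $\E_{(v,y)\sim\cD_\theta}[(y-v)\ind(v\in I)] = \tfrac{\alpha}{t}\sum_{i \,:\, u_i \in I}\theta_i$. The key point is that for a fixed offset $r$, the intervals $\{I_{r,j}^w\}_{j\in\Z}$ partition $\R$, and because distinct $u_i$'s lie at distance at least $\frac{1}{3t} > w$, no single width-$w$ interval can contain two of them. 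Hence each $u_i$ lies in exactly one interval and contributes exactly $|\theta_i| = 1$ to $\sum_{j\in\Z}\big|\sum_{i:\,u_i\in I_{r,j}^w}\theta_i\big|$, so this quantity equals $t$ regardless of $r$. Therefore $\sum_{j\in\Z}\big|\E_{\cD_\theta}[(y-v)\ind(v\in I_{r,j}^w)]\big| = \tfrac{\alpha}{t}\cdot t = \alpha$ for every $r$, and averaging over $r \sim \mathrm{Unif}[0,w)$ yields $\rint(\cD_\theta, w) = \alpha$. Consequently $\rint(\cD_\theta, w) + w > \alpha = \frac{1}{3t}$.

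Combining the two cases, every term in the infimum defining $\sint(\cD_\theta)$ is at least $\frac{1}{3t}$, whence $\sint(\cD_\theta) \ge \frac{1}{3t}$. The one conceptual point worth highlighting is the ``no cancellation'' phenomenon in the second case: once the bin width drops below the grid spacing, each nonempty bin isolates exactly one miscalibrated point, so the signed biases $\theta_i\alpha$ cannot cancel within a bin, and the partition property makes the per-offset total error exactly $\alpha$, independent of the random shift $r$. I do not anticipate any technical obstacle; the only care needed is the elementary geometric observation about interval width versus grid spacing, and a clean bookkeeping of which $u_i$ lands in which bin.
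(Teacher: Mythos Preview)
Your proposal is correct and follows essentially the same argument as the paper: split on whether $w=2^{-k}$ is at least $\tfrac{1}{3t}$ (where the $+w$ term alone gives the bound) or below it (where each width-$w$ bin isolates at most one $u_i$, so the per-offset sum of absolute bin-biases equals $\alpha$ with no cancellation). The paper states the case split more tersely (``it suffices to prove $\rint(\cD_\theta,w)\ge \tfrac{1}{3t}$ whenever $w<\tfrac{1}{3t}$'') but the content is identical.
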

\begin{proof}
It suffices to prove that $\rint(\cD_\theta,w) \ge \frac 1{3t}$ whenever $w < \frac 1{3t}$, where we recall the definition \eqref{eq:rintce}. Fix some $r\in [0,1]$. Every $u_i$ belongs to the interval $I_{r,j_i}^w$ for a unique $j_i\in \Z$. Since the interval width $w$ is smaller than the gap between $u_i$ and $u_{i'}$ for distinct $i,i'$, we have $j_i \ne j_{i'}$. Therefore,
\begin{align*}
\sum_{j\in \Z}|\E_{(v,y)\sim \cD}[(y - v)\ind(v\in I_{r,j}^w)]| & \ge \sum_{i \in [t]}|\E_{(v,y)\sim \cD}[(y - v)\ind(v\in I_{r,j_i}^w)]|\\
& = \sum_{i \in [t]}|\E_{(v,y)\sim \cD}[(y - v)\ind(v= u_i)]|\\
& = \sum_{i \in [t]}\Pr[v = u_i]\E[y - v|v = u_i]\\
& = \frac 1{3t}.
\end{align*}
Plugging this into \eqref{eq:rintce}, we get $\rint(\cD_\theta,w)\ge \frac 1{3t}$.
\end{proof}

\begin{theorem}
    \label{thm:lowerb-sint}
     If $\alg$ is an $\varepsilon$-$\sint$ tester with $n$ samples (Definition~\ref{def:testing_3}), for $\varepsilon\in (0,\frac 1 3)$, then 
     \[n = \Omega\left(\frac 1 {\eps^{2.5}}\right).\]
\end{theorem}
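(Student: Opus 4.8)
The plan is to follow the same template as the proof of Theorem~\ref{thm:lowerb}, simply substituting the bound from Lemma~\ref{lm:sint} in place of Lemma~\ref{lm:lowerb-miscal}. Without loss of generality assume $\varepsilon$ is at most a small absolute constant. First I would choose $t \in \N$ to be the largest integer satisfying $\varepsilon \le \frac{1}{3t}$, so that $t = \Theta(\frac 1 \varepsilon)$, and correspondingly set $\alpha = \frac{1}{3t}$ as required by Lemma~\ref{lm:sint}. With these choices, Lemma~\ref{lm:sint} gives $\sint(\cD_\theta) \ge \frac{1}{3t} \ge \varepsilon$ for every $\theta \in \{-1,1\}^t$, while $\cD_0$ is perfectly calibrated and hence $\sint(\cD_0) = 0$.

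Next I would invoke the testing guarantee: if $\alg$ is an $\varepsilon$-$\sint$ tester, then it must return ``no'' with probability $\ge \frac 2 3$ on input $\cD_0^{\otimes n}$ (since $\sint(\cD_0) = 0$) and ``yes'' with probability $\ge \frac 2 3$ on input drawn from $\cD_\theta^{\otimes n}$ for every $\theta$ (since $\sint(\cD_\theta) \ge \varepsilon$); averaging over $\theta \simu \{-1,1\}^t$, the tester distinguishes $\cD_0^{\otimes n}$ from the mixture $\E_\theta[\cD_\theta^{\otimes n}]$ with advantage $\ge \frac 1 3$, forcing $\dtv(\cD_0^{\otimes n}, \E_\theta[\cD_\theta^{\otimes n}]) \ge \frac 1 3$. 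Then Lemma~\ref{lm:tv} with this choice of $\alpha$ and $t$ yields
\[
\frac 1 3 \le \frac 1 2 \sqrt{\exp\left(\frac{11\alpha^4 n^2}{t}\right) - 1},
\]
and rearranging gives $\frac{\alpha^4 n^2}{t} = \Omega(1)$, i.e.\ $n = \Omega(\alpha^{-2}\sqrt t)$. Plugging in $\alpha = \Theta(\frac 1 t)$ and $t = \Theta(\frac 1 \varepsilon)$ gives $n = \Omega(t^2 \cdot t^{1/2}) = \Omega(t^{2.5}) = \Omega(\varepsilon^{-2.5})$, as claimed.

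I do not anticipate any real obstacle here, since all the components are already in place: the total variation bound (Lemma~\ref{lm:tv}), the lower bound on the surrogate interval CE of the hard instances (Lemma~\ref{lm:sint}), and the reduction from testing to distinguishing (identical to Theorem~\ref{thm:lowerb}). The only minor points to be careful about are: (i) checking that $t \ge 1$, equivalently that $\varepsilon$ is bounded above by an absolute constant, which we may assume since otherwise the claimed bound is vacuous; (ii) confirming $\cD_0$ has zero surrogate interval CE so that the non-tolerant testing definition applies cleanly (this holds because $\cD_0$ is perfectly calibrated, hence $\ldce(\cD_0) = 0$, and $\sint(\cD_0) \le 6\sqrt{\ldce(\cD_0)} = 0$); and (iii) tracking the constant in the exponent of Lemma~\ref{lm:tv} carefully enough to conclude $\frac{\alpha^4 n^2}{t} = \Omega(1)$ rather than merely $O(1)$, which follows from $\exp(x) - 1 \ge x$. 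Note the bound here is cleaner than in Theorem~\ref{thm:lowerb} (no logarithmic factor), because $\alpha = \Theta(\frac 1 t)$ here rather than $\Theta(\frac{1}{t\sqrt{\ln t}})$.
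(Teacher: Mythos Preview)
Your proposal is correct and follows essentially the same route as the paper's proof. One minor slip: you swapped the ``yes''/``no'' labels when invoking the testing guarantee (Definition~\ref{def:testing_3} requires ``yes'' when $\ddd(\cD)=0$ and ``no'' when $\ddd(\cD)\ge\varepsilon$, not the reverse), but this does not affect the argument since all that matters is that the tester must distinguish the two cases with advantage $\ge \tfrac{1}{3}$.
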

\begin{proof}
Choose $t\ge 1$ to be the largest integer satisfying $\frac 1{3t} \ge \varepsilon$, and choose $\alpha = \frac 1{3t}$. By \Cref{lm:sint}, $\sint(\cD_\theta) \ge \varepsilon$ for every $\theta\in \{-1,1\}^t$. By the guarantee of the tester, we have
\[
\dtv(\cD_0^{\otimes n},\E_{\theta}[\cD_\theta^{\otimes n}]) \ge \frac 1 3.
\]
Combining this with \Cref{lm:tv}, we get $n = \Omega(\alpha^{-2}\sqrt t) = \Omega(\varepsilon^{-2.5})$.
\end{proof}
\section{Experiments}\label{sec:experiments}

In this section, we present experiments on synthetic data and CIFAR-100 supporting our argument that $\ldce$ and $\smce$ are reliable measures of calibration for use in defining a testing problem. We then evaluate our custom algorithms from Section~\ref{sec:smooth}, showing promising results on their runtimes outperforming standard packages for linear programming and minimum-cost flow.

\paragraph{Synthetic dataset.} In our first experiment, we considered the ability of $\eps$-$\ddd$-testers (Definition~\ref{def:testing_3}) to detect the miscalibration of a synthetic dataset, for various levels of $\eps \in \{0.01, 0.03, 0.05, 0.07, 0.1\}$, and various choices of $\ddd \in \{\smce, \ldce, \cece\}$.\footnote{We implemented $\cece$ using code from \cite{blasiok2023smooth}, which automatically conducts a parameter search for $\sigma$.} The synthetic dataset we used is $n$ independent draws from $\cD$, where a draw $(v, y) \sim \cD$ first draws $v \simu [0, 1 - \eps^\star]$, and $y \sim \Bern(v + \eps^\star)$, for $\eps^\star \defeq 0.01$.\footnote{This is a slight variation on the synthetic dataset used in \cite{blasiok2023unifying}.} Note that $\ldce(\cD) = \eps^\star = 0.01$, by the proof in Lemma~\ref{lem:constant_g1}. In Table~\ref{table:synthetic}, where the columns index $n$ (the number of samples), for each choice of $\ddd$ we report the smallest value of $\eps$ such that a majority of $100$ runs of an $\eps$-$\ddd$-tester report ``yes.'' For $\ddd = \cece$, we implemented our tester by running code in \cite{blasiok2023smooth} to compute $\cece$ and thresholding at $\frac \eps 2$. For $\ddd \in \{\smce, \ldce\}$, we used the standard linear program solver from CVXPY \cite{diamond2016cvxpy, agrawal2018rewriting} and again thresholded at $\frac \eps 2$. We remark that the CVXPY solver, when run on the $\ldce$ linear program, fails to produce stable results for $n > 2^9$ due to the size of the constraint matrix. As seen from Table~\ref{table:synthetic}, both $\smce$ and $\ldce$ testers are more reliable estimators of the ground truth calibration error $\eps^\star$ than $\cece$.

\begin{table}[ht]
\begin{center}
\begin{tabular}{|c | c c c c c c|}  
  \hline
  $n$ & $2^6+1$ & $2^7+1$ & $2^8+1$ & $2^9+1$ & $2^{10}+1$ & $2^{11}+1$ \\ 
  \hline
  $\smce$ & $0.07$ & $0.05$ & $0.03$ & $0.03$ & $0.01$ & $0.01$ \\ 
  \hline
$\ldce$& $0.03$ & $0.01$ & $0.01$ &  &  &  \\ 
    \hline
$\sece$& $0.1$ & $0.1$ & $0.07$ & $0.07$ & $0.05$ & $0.03$ \\ 
\hline
Ground Truth& $0.01$ & $0.01$ & $0.01$ & $0.01$ & $0.01$ & $0.01$ \\ 
  \hline
\end{tabular}
\caption{Calibration testing thresholds (smallest passed on half of $100$ runs).}\label{table:synthetic}
\end{center}
\end{table}
In Figure~\ref{fig:chart}, we plot the median error with error bars for each calibration measure, where the $x$ axis denotes $\log_2(n - 1)$, and results are reported over $100$ runs.

\begin{figure}[ht]
\includegraphics[width=8cm]{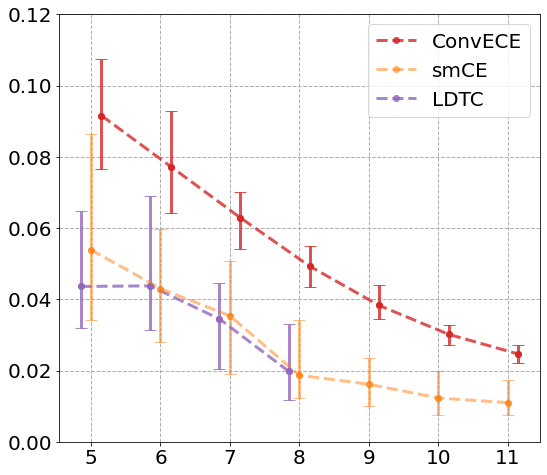}
\centering
\caption{The $25\%$ quantile, median, and $75\%$ quantile (over $100$ runs) for $\smce$, $\ldce$ and $\sece$ respectively. The $x$-axis is for dataset with size $2^x+1$. }\label{fig:chart}
\end{figure}

\paragraph{Postprocessed neural networks.} In \cite{guo2017calibration}, which observed modern deep neural networks may be very miscalibrated, various strategies were proposed for postprocessing network predictions to calibrate them. We evaluate two of these strategies using our testing algorithms. We trained a DenseNet40 model \cite{HuangLMW17} on the CIFAR-100 dataset \cite{Krizhevsky09}, producing a distribution $\Dbase$, where a draw $(v, y) \sim \Dbase$ selects a random example from the test dataset, sets $y$ to be its label, and $v$ to be the prediction of the neural network. We also learned calibrating postprocessing functions $\fiso$ and $\ftemp$ from the training dataset, the former via isotonic regression and the latter via temperature scaling. These induce (ideally, calibrated) distributions $\Diso$, $\Dtemp$, where a draw from $\Diso$ samples $(v, y) \sim \Dbase$ and returns $(\fiso(v), y)$, and $\Dtemp$ is defined analogously. The neural network and postprocessing functions were all trained by adapting code from \cite{guo2017calibration}.

We computed the median smooth calibration error of $20$ runs of the following experiment. In each run, for each $\cD \in \{\Dbase, \Diso, \Dtemp\}$, we drew $256$ random examples from $\cD$, and computed the average smooth calibration error $\smce$ of the empirical dataset using a linear program solver from CVXPY. We report our findings in Table~\ref{table:dnn}. We also compared computing $\smce$ using the CVXPY solver and a commercial minimum-cost flow solver from Gurobi Optimization \cite{Gurobi23} (on the objective from Lemma~\ref{lem:min-cost}) in this setting. The absolute difference between  outputs is smaller than $10^{-5}$ in all cases, verifying that minimum-cost flow solvers accurately measure smooth calibration.

\begin{table}
\begin{center}
\begin{tabular}{|c | c c c |}  
  \hline
  $\cD$ & $\Dbase$ & $\Diso$ & $\Dtemp$  \\ 
  \hline
  Empirical $\smce$ & $0.2269$ & $0.2150$ & $0.1542$  \\ 
  \hline
\end{tabular}
\caption{Empirical $\smce$ on postprocessed DenseNet40 predictions (median over $20$ runs)}\label{table:dnn}
\end{center}
\end{table}
Qualitatively, our results (based on $\smce$) agree with findings in \cite{guo2017calibration} (based on binned variants of $\ece$), in that temperature scaling appears to be the most effective postprocessing technique.

\paragraph{$\smce$ tester.} Finally, we evaluated the efficiency of our proposed approaches to computing the empirical smooth calibration. Specifically, we measure the runtime of four solvers for computing the value \eqref{eq:smce_empirical}: a linear program solver from CVXPY, a commercial minimum-cost flow solver from Gurobi Optimization, a na\"ive implementation of our algorithm from Corollary~\ref{cor:compute_smce_empirical} using Python, and a slightly-optimized implementation of said algorithm using the PyPy package \cite{PyPy19}.

We use the same experimental setup as in Table~\ref{table:synthetic}, i.e.\ measuring calibration of a uniform predictor on a miscalibrated synthetic dataset, with $\eps^\star = 0.01$.\footnote{We found similar runtime trends when using our algorithms to test calibration on the postprocessed neural network dataset, but the runtime gains were not as drastic as the sample size $n = 2^8$ was smaller in that case. }
In Table~\ref{table:szero}, we report the average runtimes for each trial (across $10$ runs), varying the sample size. Again, the absolute difference between the outputs of all methods is negligible (at most $10^{-9}$ in all cases). As seen in Table~\ref{table:szero}, our custom algorithm (optimized with PyPy) outperforms standard packages from CVXPY and Gurobi Optimization starting from moderate sample sizes. 
We believe that Table~\ref{table:szero} demonstrates that our new algorithms are a scalable, reliable way of testing calibration, and that these performance gains may be significantly improvable by further optimizing our code. 


\begin{table}
\begin{center}
\begin{tabular}{|c | c c c c c c|}  
  \hline
  $n$& $2^{10}$& $2^{11} $ & $2^{12} $ & $2^{13} $ & $2^{14} $ &
  $2^{15} $\\ 
  \hline
CVXPY LP solver & $0.105$ & $0.370$ & $1.58$ &$6.51$  &$45.7$  &$ 245$\\ 
  \hline
Gurobi minimum-cost flow solver& $0.063$ & $0.179$ & $0.238$ &$0.539$  &$1.45$& $ 3.19 $ \\
  \hline
Solver from Corollary~\ref{cor:compute_smce_empirical} & $0.177$ & $0.389$ & $0.899$ &$2.01$  &$4.66$& $10.6$ \\ 
\hline
Solver from Corollary~\ref{cor:compute_smce_empirical} with PyPy& $0.079$ & $0.115$ & $0.176$ &$0.307$  &$0.621$& $2.05$ \\
  \hline
\end{tabular}
\caption{Runtimes (in seconds) for computing the value of \eqref{eq:smce_empirical}, using various solvers}\label{table:szero}
\end{center}
\end{table}

\section*{Acknowledgements}

We thank Edgar Dobriban for pointing us to the reference \cite{LeeHHD23} and Yang P.\ Liu  for helpful discussions on the segment tree data structure in Section~\ref{ssec:segment tree}.

\newpage

\bibliographystyle{alpha}
\bibliography{sample}

\newpage
\appendix
\end{document}